\newcommand{\vc}[1]{\mathbf{#1}}
\newcommand{\mat}[1]{\mathbf{#1}}
\newcommand{\Real}{\mathbb R}
\newcommand{\vv}[1]{\boldsymbol{#1}}
\newcommand{\be}{\begin{equation}}
\newcommand{\ee}{\end{equation}}
\newtheorem{thm}{Theorem}[section]
\newtheorem{cor}[thm]{Corollary}
\newtheorem{lem}[thm]{Lemma}
\DeclareMathOperator*{\argmin}{\mathrm{argmin}}
\DeclareMathOperator*{\argmax}{\mathrm{argmax}}
\newcommand{\BigO}[1]{\ensuremath{\operatorname{O}\bigl(#1\bigr)}}
\newcounter{ToDo}
\newcounter{guocomm}
\newcounter{Note}
\definecolor{blue-violet}{rgb}{0.54, 0.17, 0.89}
\definecolor{mygreen}{rgb}{0.0, 0.5, 0.0}
\definecolor{awesome}{rgb}{1.0, 0.13, 0.32}
\definecolor{yellow}{rgb}{0.8, 0.8, 0.0}
\definecolor{bostonuniversityred}{rgb}{0.8, 0.0, 0.0}
\journal{Neurocomputing}
\begin{document}

\begin{frontmatter}

\title{Efficient Sparse Subspace Clustering by Nearest Neighbour Filtering}

\author[usyd]{Stephen Tierney\corref{mycorrespondingauthor}}
\cortext[mycorrespondingauthor]{Corresponding author}
\ead{stephen.tierney@sydney.edu.au}

\author[wsu]{Yi Guo}
\ead{y.guo@westernsydney.edu.au}

\author[usyd]{Junbin Gao}
\ead{junbin.gao@sydney.edu.au}

\address[usyd]{Discipline of Business Analytics, The University of Sydney Business School,\\ The University of Sydney, NSW 2006, Australia}
\address[wsu]{Centre for Research in Mathematics, School of Computing, Engineering and Mathematics, Western Sydney University, Parramatta, NSW 2150, Australia}


\begin{abstract}
Sparse Subspace Clustering (SSC) has been used extensively for subspace identification tasks due to its theoretical guarantees and relative ease of implementation. However SSC has quadratic computation and memory requirements with respect to the number of input data points. This burden has prohibited SSCs use for all but the smallest datasets. To overcome this we propose a new method, k-SSC, that screens out a large number of data points to both reduce SSC to linear memory and computational requirements. We provide theoretical analysis for the bounds of success for k-SSC. Our experiments show that k-SSC exceeds theoretical expectations and outperforms existing SSC approximations by maintaining the classification performance of SSC. Furthermore in the spirit of reproducible research we have publicly released the source code for k-SSC\footnote{{\color{red} \url{https://github.com/sjtrny/kssc}}}
\end{abstract}

%

\end{frontmatter}


\section{Introduction}

As the resolution of capture devices continue to increase so does the burden on analytical and classification algorithms. Furthermore high dimensional data is subject to the curse of dimensionality. It is thus necessary to reduce the dimensionality of data to facilitate data analysis. Most common dimension reduction is performed using Principal Component Analysis (PCA) \citep{jolliffe2002principal, RamsaySilverman2005, jacques2014functional}, which takes a collection of data points from their original high dimensional space and fits them to a lower dimensional subspace. PCA and associated techniques assume that the entire dataset occupies a single subspace. However in reality large datasets are often composed of a union of subspaces. Therefore it is imperative that the individual subspaces are identified so that the data can be partitioned and dimension reduction can be performed on each subspace separately. The task of assigning each data point to its respective subspace is known as subspace clustering.

More formally we express the subspace clustering problem as follows: given a data matrix of $N$ observed column-wise samples $\mathbf X = [\mathbf x_1, \mathbf x_2, \dots, \mathbf x_N ]$  $\in \mathbb{R}^{D \times N}$, where $D$ is the dimension of the data, the objective of subspace clustering is to learn the corresponding subspace labels $\mathbf l = [ l_1, l_2, \dots, l_N ] \in \mathbb{N}^{N}$ for all the data points where each $l_i\in \{1,\ldots,p\}$. Data within $\mathbf X$ is assumed to be drawn from a union of $p$ subspaces $\{S_i\}^p_{i=1}$ of dimensions $\{d_i\}^p_{i=1}$. Both the number of subspaces $p$ and the dimension of each subspace $d_i$ are unknown. To further complicate the problem it is rarely the case that $\mathbf X$ is noise or corruption free. The data is often subject to noise or corruption either at the time of capture (e.g. a digital imaging device) or during transmission (e.g. wireless communication). It is quite clear that subspace clustering is a difficult task since one must produce accurate results quickly while contending with numerous unknown parameters and large volume of potentially noisy data.

\cite{elhamifar2009sparse} introduced an elegant method for subspace clustering called ``Sparse Subspace Clustering'' (SSC). SSC exploits the self-expressive property of data \citep{elhamifar2012sparse} to find the subspaces:
\begin{quote}
  {\it{each data point in a union of subspaces can be efficiently reconstructed by a combination of other points in the data}}
\end{quote}
which gives the relation
\begin{align}
\mathbf x_i = \mathbf X \mathbf z_i,
\end{align}
where $\mathbf z_i$ is a vector of reconstruction coefficients for $\mathbf x_i$. We can then construct a model for the entire dataset as $\mathbf X = \mathbf X \mathbf Z$. In this unrestricted case there are near infinite possibilities for the coefficient matrix $\mathbf Z$. Fortunately to reconstruct each data point one only needs $\{d_i\}$ other points. This means that each data point can be sparsely represented by the other points. Sparse Subspace Clustering as its name suggests exploits this fact. The objective function for SSC is
\begin{align}
\min_{\mathbf Z} \| \mathbf Z \|_1 \quad \text{s.t.}\ \mathbf X = \mathbf X \mathbf Z, \textrm{diag}(\mathbf Z) = \mathbf 0,
\end{align}
where $\| \mathbf Z \|_1 = \sum_i \sum_j | Z_{ij} |$ (the $\ell_1$ norm) is used as a surrogate for the $\ell_0$ norm and the diagonal constraint prevents the data point from being represented by itself.


Solving such an objective is only useful when the data $\mathbf X$ is known to be noise free. As previously mentioned this is extremely unlikely in practice. SSC and most other subspace clustering methods assume the following data generation model
\begin{align}
\mathbf{x}_i = \mathbf{a}_i + \mathbf{n}_i.
\end{align}
where $\mathbf a_i$ is the latent original data vector and $\mathbf n_i$ is Gaussian noise. Fortunately it is not necessary to recover the original data to perform subspace clustering. To overcome this one can extend the objective to take noise into consideration by relaxing the constraint 
\begin{align}
\min_{\mathbf Z} \lambda \| \mathbf Z \|_1 + \frac{1}{2} \| \mathbf{X - X Z} \|_F^2 \quad \text{s.t.}\ \textrm{diag}(\mathbf Z) = \mathbf 0.
\label{SSCRelaxedObjective}
\end{align}
Alternatively one can instead solve an exact variant of the objective by incorporating a fitting error term $\mathbf E$ i.e.\  
\begin{align}
\min_{\mathbf Z} \lambda \| \mathbf Z \|_1 + \frac{1}{2} \| \mathbf E \|_F^2 \quad \text{s.t.}\ \mathbf X = \mathbf X \mathbf Z + \mathbf E, \textrm{diag}(\mathbf Z) = \mathbf 0
\label{SSCExactObjective}
\end{align}
where $\lambda$ is used to control the sparsity of $\mathbf Z$. Implementation details for relaxed and exact SSC can be found in Appendices \ref{Appendix:SSC_Relaxed} and \ref{Appendix:SSC_Exact} respectively.

To obtain the final subspace labels the reconstruction coefficients in $\mathbf Z$ are given a secondary interpretation as the affinity or similarity between the data points. Spectral clustering is applied to $\mathbf Z$. Typically N-CUT \citep{shi2000normalized} is used as it produces the most accurate segmentation even for poorly constructed affinity matrices and is relatively fast.

While SSC has promising theoretical guarantees \citep{elhamifar2012sparse} and has shown great performance for small evaluation datasets it is not widely applied in practice. This is due to the following:
\begin{itemize}
\item \BigO{N^2} memory requirements;
\item \BigO{N^2} flop (floating point operations) requirements.
\end{itemize}
The first is easily understood as $\mathbf Z \in \mathbb R^{N \times N}$. One could contend that $\mathbf Z$ could be stored in a sparse format, however since the support of $\mathbf Z$ is unknown and varies between iterations of the SSC algorithm this approach would introduce significant overhead. Similarly the high flop count is due to the dimensions of $\mathbf Z$, since each element must be calculated per iteration.

{\bf Our Contributions:} In this paper we propose a new algorithm called k-SSC, which is designed for big-data applications. k-SSC dramatically reduces the memory requirements and computation time compared with pre-existing algorithms. Furthermore the conditions of correct subspace identification are provided through theoretical analysis. The rest of the paper is structured as follows: In Section \ref{Section:background}, we further discuss the subspace clustering problem and provide an overview of related work. Section \ref{Section:kssc} is dedicated to discussing the motivation for k-SSC along with its operation and a brief sketch of the theoretical clustering analysis. Detailed proofs for the theoretical analysis are provided in Appendix \ref{Appendix:Analysis}. We follow this with an optimisation scheme and complexity analysis. Next in Section \ref{Section:Synthetic} we provide empirical analysis of kSSC using synthetic data. We then finish with a collection of experiments on real world data in Section \ref{Section:experiments} and some concluding remarks in Section \ref{Section:conclusion}.

\section{Background and Related Work}
\label{Section:background}


The union of subspaces model is applicable to a wide variety of data and thus exploited for a large number of applications. Examples include identifying individual rigidly moving objects in video \citep{tomasi1992shape, costeira1998multibody, kanatani2002motion, jacquet2013articulated}, identifying face images of a subject under varying illumination \citep{basri2003lambertian, georghiades2001few}, image compression \citep{hong2006multiscale}, image classification \citep{zhang2013learning, DBLP:conf/dicta/BullG12}, feature extraction \citep{liu2012fixed, liu2011latent}, image segmentation \citep{yang2008unsupervised, cheng2011multi}, segmentation of human activities \citep{zhu2014complex}, temporal video segmentation \citep{tierney2014subspace, vidal2005generalized} and segmentation of hyperspectral mineral data \citep{guo2015low,tierney2014subspace}

This huge range of applications for subspace clustering has spurred the development of subspace clustering algorithms. Early algebraic methods such as Generalised Principal Component Analysis (GPCA) \citep{vidal2005generalized, ma2008estimation} suffered from sensitivity to noise and increasing computational complexity as the number and size of subspaces increases \citep{elhamifar2012sparse}. A number of statistical methods have been developed such as Mixtures of Probabilistic PCA (MPPCA) \citep{tipping1999mixtures} and Multi-Stage Learning (MSL) \citep{sugaya2004geometric} that make Gaussian assumptions about the distribution of the data in the subspace. However such approaches rely on good initialisation and the dimensions of the subspaces must be known before hand.

More recently spectral methods have come to dominate subspace clustering literature as they offer a vast improvement over the previously mentioned methods. First they do not increase in complexity with the number or dimension of subspaces, second they are robust to noise or outliers and finally they provide a simple to understand work pipeline that is easily adapted and modified. Spectral methods consist of two stages: learning a similarity matrix for the data then assignment of class labels through segmentation of the similarity matrix. We have already introduced SSC, the forerunner of spectral subspace clustering. Other spectral methods adopt the general SSC objective but impose different structural constraints over the similarity matrix $\mathbf Z$ instead of the $\ell_1$ norm. For example Low-Rank Representation (LRR) \citep{6180173} imposes a rank penalty to obtain a more globally consistent $\mathbf Z$. Or different penalties are used to minimise fitting error depending upon the type of expected noise and whether or not outliers are likely \citep{xi2012constructing, Vidal201447}. Further regularisation can be applied to incorporate prior knowledge such as the spatial structure or sequence of the data \citep{guo2015low,tierney2014subspace,guo2014spatial}.

Another factor that has spurred on research of spectral methods is that they can be guaranteed to correctly segment the subspaces. For example it was shown by \cite{soltanolkotabi2014robust} that correct subspace identification is guaranteed for SSC provided that data driven regularisation for each $\lambda_i$ (column wise splitting of $\mathbf Z$) is used. This of course is subject to further conditions such as a minimum distance between subspaces,   sufficient sampling of points from each subspace and the noise level in the data. For LRR the requirements for guaranteed success are much stricter, please see \citep{6180173} for full details.

However despite the aforementioned strengths of spectral clustering algorithms they suffer from huge memory or computational requirements that prevent them from being applied to even modestly sized datasets. In the academic literature this problem is generally ignored as the data used for analysis is very limited in the number of data points.  In light of these issues there has been considerable interest in developing tractable subspace clustering algorithms. Unfortunately they either lack theoretical justification or suffer dramatically in terms of clustering accuracy in practice.

SSC approximation methods can be divided into two classes: inductive and heuristic. Inductive methods perform SSC or learn the similarity matrix on a small subset of the data. This full structure of the similarity matrix or labels is then obtained by inductive transfer from the subset. Heuristic methods abandon SSC entirely and directly assign class labels by greedy selection of nearest neighbours based on a defined metric.

Scalable SSC (SSSC) \citep{peng2013scalable} was the first attempt to resolve computational issues. As an inductive method it first selects some candidate samples from the data and performs SSC on these samples. Then the remaining samples are assigned to clusters based on their fit into the clusters formed by the training candidates. This approach has considerable issues. First the candidate samples are selected by uniform random sampling. This does not guarantee that every cluster will be accounted for in the candidate set. Second there must be enough candidate samples for the candidate clusters to generalise to the remaining samples. Correctly choosing the number of samples is a difficult task.

Arguably the most prominent Heuristic method is Orthogonal Matching Pursuit (OMP), which has been long used as a greedy sparse approximation method \citep{tropp2007signal}. For each data point a residual vector is set as the data point. Then the nearest neighbour to the residual is found and then the residual is updated by a projection of the data point onto the span formed by the currently picked up neighbours. This is repeated until the number of neighbours is reached or the norm of the residual is small enough. OMP is also known by other names such as Greedy Feature Selection (GFS) \citep{dyer2013greedy} and is a constant well that researches draw from \citep{you2016scalable}.

Although OMP is advertised as being a fast approximate method however in practice we do not find this to be the case.
First the nearest neighbour search is performed for every iteration. Second the computational and memory requirements of a Naive implementation tend to increase dramatically as $D$ increases due to the need to create a $D \times D$ matrix in each iteration for every data point. This, in some cases, makes it just as intractable as SSC. Third the Naive implementation requires successive computation of the SVD of the span matrix at each iteration. The second and third points have fortunately been mitigated by improvements such as Rank-1 updating scheme of Moore-Penrose Inverse \citep{petersen2008matrix}, factorisation approaches such as QR and Cholesky decomposition \citep{sturm2012comparison} or more esoteric methods \citep{tropp2007signal, yaghoobi2015fast, mailhe2011fast}.

However these accelerations to the OMP algorithm are relatively meanininglyess as we find the reported accuracy results of methods using OMP to be dubious. As shown in Section \ref{Section:experiments} we find that GFS (OMP) performs poorly in terms of clustering accuracy. We stress that we were unable to reproduce the promising results of OMP based methods as claimed by earlier works.

OMP has inspired other methods such as Greedy Subspace Clustering (GSC) \citep{park2014greedy} and ORGEN \citep{you2016oracle}. GSC differs from OMP in neighbour selection. Each neighbour is selected by finding the data point which has the largest norm of projection onto the span formed by the current neighbours. Although at first glance GSC appears to be simple and thus likely to scale well w.r.t.\ $N$, the projection step is quite computationally intensive and just like OMP the nearest neighbour search is performed in each iteration.

ORGEN extends the SSC model to the Elastic-Net model. That is, it uses the $\ell_2$ norm in tandem with the $\ell_1$ norm. From an initialisation point of some neighbours of each $\mathbf x_i$ it solves the Elastic-Net objective then determines an ``oracle point'', which is the residual from fitting the coefficients from the Elastic-Net procedure to the model. This oracle point is then used to find potential new neighbours. The procedure terminates when no new neighbours are added at the end of an iteration. ORGEN suffers from a number of problems. First it is highly initialisation dependant. The authors suggest performing $\ell_2$ sub problem and choosing the largest valued elements as the initialisation pool for each $\mathbf x_i$. This can be slow as when $D$ is large and the $\ell_2$ problem lacks rigorous guarantees of successful subspace identification. Second the repeated computation of elastic net is problematic when the active set grows large. This is a very real concern as termination only occurs when the active set stops growing. The active set could grow to the full size of the data set. Third the claim of improved running time is not evident. The authors show running times for single $\mathbf x_i$ instead of the whole data $\mathbf X$ and do not compare to different approaches such as \citep{peng2013scalable} or \citep{heckel2013robust}.

Heuristic methods are often incredibly simple. For example Robust Subspace Clustering via Thresholding (TSC) \citep{heckel2013robust} essentially performs nearest neighbour based spectral clustering. For each point the nearest neighbours are found and the affinity matrix is constructed using exponential inner product between each of the neighbours.

\section{Efficient Sparse Subspace Clustering}
\label{Section:kssc}


Our contribution to subspace clustering is inspired by the sparsity of SSC. It has been shown repeatedly that each data point can be reconstructed by only $d_i$ (the dimensionality of the underlying subspace) other data points from its corresponding subspace \citep{elhamifar2012sparse, soltanolkotabi2014robust, 6560026}. This is the basis of SSC's operation. By finding the sparsest representation one will be left with the minimum support to represent a data point $\mathbf x_i$, corresponding to data points in the same subspace. Therefore it is clear that blindly considering every data point as a candidate for reconstruction is very wasteful since only a relative few points will be left as support. Furthermore the process is very intensive in terms of memory requirements. The algorithm for solving the exact variant of SSC (see Appendix \ref{Appendix:SSC_Exact}) requires \BigO{N^2} FLOPs per iteration and the storage of \BigO{N^2} floats over the algorithm's entire operation w.r.t.\ $\mathbf Z$.

Therefore if we can safely prune a vast majority of data points as candidates for another data points reconstruction then we can massively reduce computational and memory load. In other words this means that we would only solve for a small subset of the entries of $\mathbf Z$ rather than the entire matrix. To this end we propose kSSC, in which we limit each data point to be represented by at most $k$ other data points. Thus the relaxed objective function for kSSC is
\begin{align}
\min_{\mathbf Z} \lambda \| \mathbf Z \|_1 + \frac{1}{2} \sum_i^N \| \mathbf x_i - \sum_{j \in \Omega_i} \mathbf x_j Z_{ji} \|_F^2 \quad \text{s.t.}\  \textrm{diag}(\mathbf Z) = \mathbf 0.
\label{kssc_objective_relaxed}
\end{align}
where $\Omega_i$ is the set of data points to use for reconstruction of data point $i$. Under this objective we can reduce both the memory and FLOP requirements to \BigO{kN} w.r.t.\ $\mathbf Z$, which when $k \ll N$ provides massive savings. 

\begin{figure*}
\centering
\includegraphics[height=0.3\textwidth]{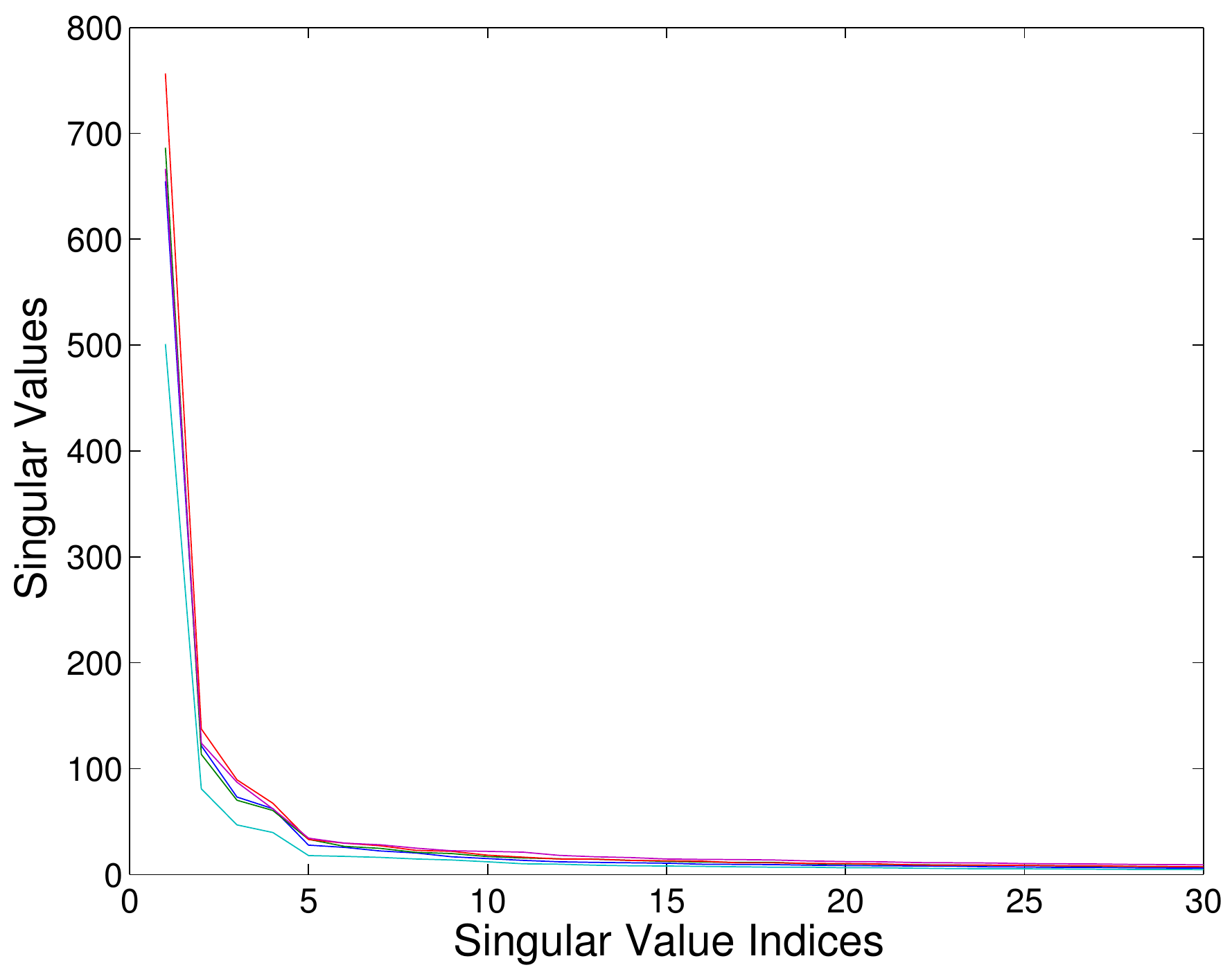}
\includegraphics[height=0.3\textwidth]{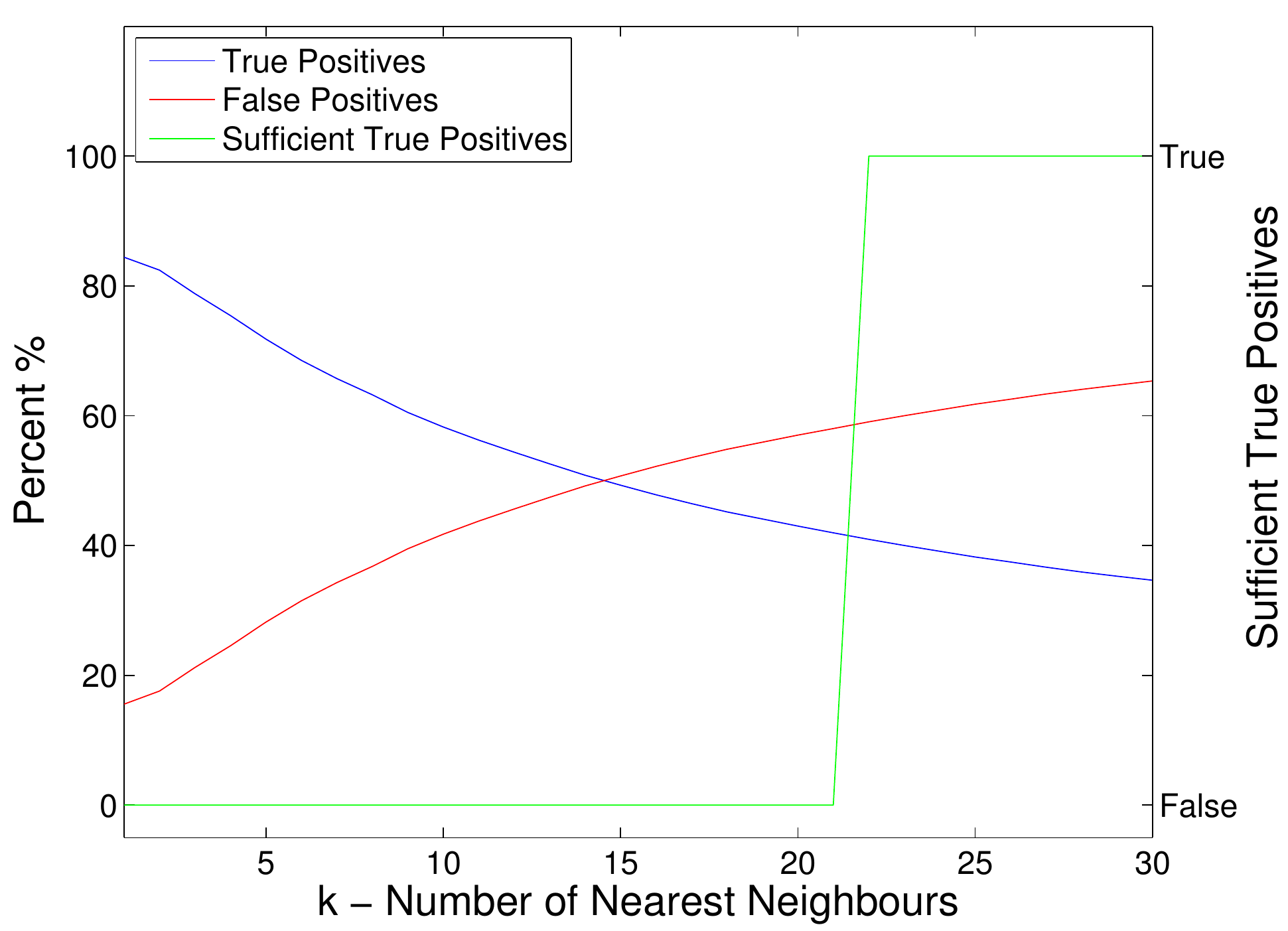}
\caption{Left: Singular values of several faces from the Extended Yale B dataset. Right: Average percentage of true positive and false positive nearest neighbour selection from the entire Extended Yale B dataset and correspondingly, in green, a plot of when the sufficient true positives are reached on average.}
\label{Figure:yale_observation}
\end{figure*}

\begin{figure*}
\centering
\includegraphics[height=0.3\textwidth]{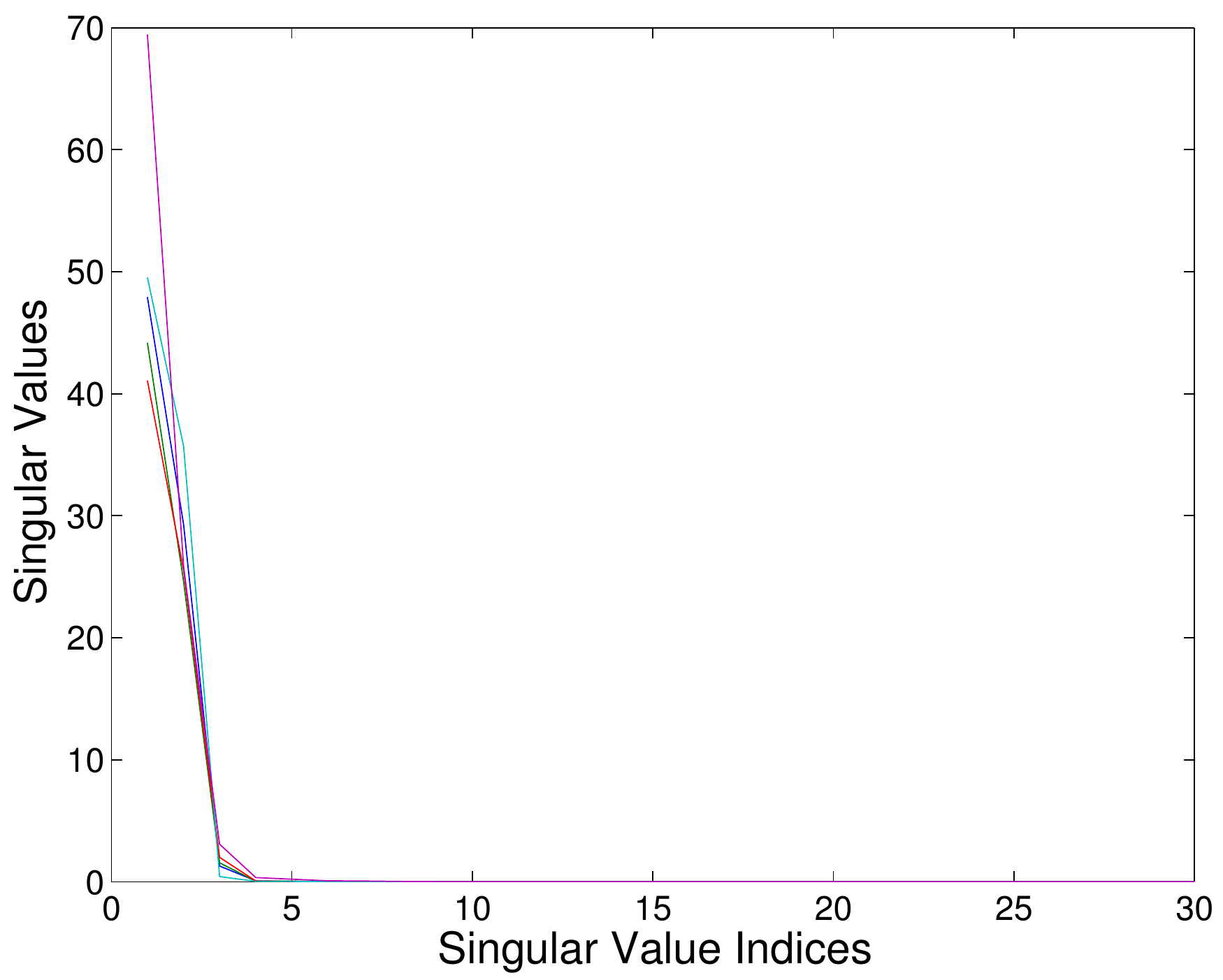}
\includegraphics[height=0.3\textwidth]{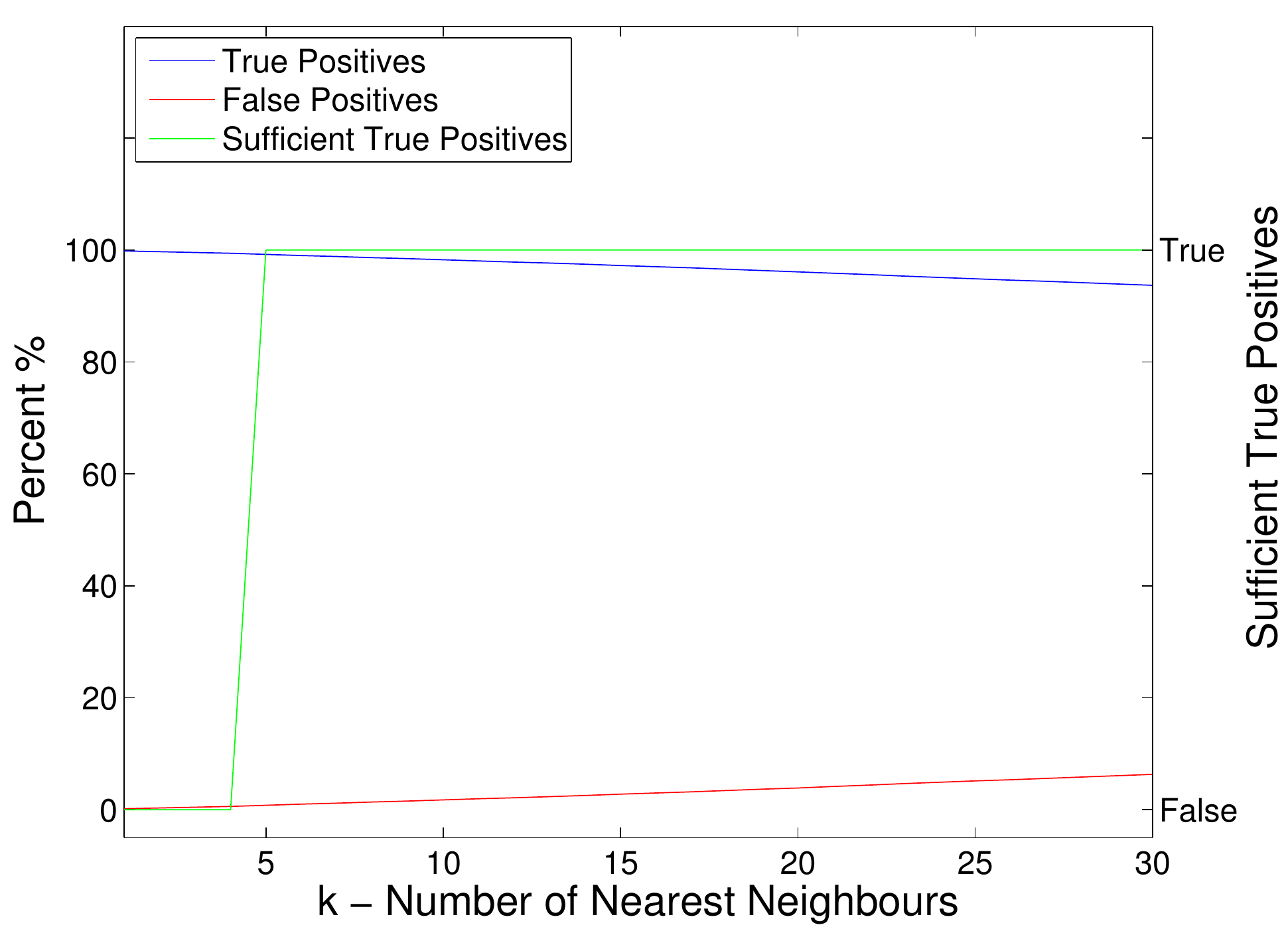}
\caption{Left: Singular values of several motions from the Hopkins 155 Motion Dataset. Right: Average percentage of true positive and false positive nearest neighbour selection from the checkerboard and traffic sequences in the Hopkins 155 Motion Dataset dataset and correspondingly, in green, a plot of when the sufficient true positives are reached on average.}
\label{Figure:hopkins_observation}
\end{figure*}

Evidently the success of kSSC relies heavily upon both the size of $k$ and the scheme that is used to select $\Omega_i$. First one should always choose $k \geq d_i$ since each data point needs at a minimum $d_i$ other points for reconstruction. The dimension of a subspace can be roughly estimated by analysing the singular values of the matrix of samples from a subspace. For example in Figure \ref{Figure:yale_observation} we show singular values for multiple subspaces (a single subspace corresponds to a single subject or face) from the Extended Yale B dataset. The point at which the singular values begin to trail off reveals the underlying subspace dimension $d_i$, which in this case is $9$ \citep{elhamifar2012sparse}. Therefore in that case we must set $k \geq 9$. Similarly in Figure \ref{Figure:hopkins_observation} we perform the same analysis on the motion segmentation dataset and find that the subspace dimension is $4$. Since in almost all cases $d_i \ll N$ and thus $k \ll N$ the computational and memory requirements of kSSC will be much lower than SSC. However even in cases where there is no ground truth or sample data available one can set a large, conservative value for $k$ with little impact on overall performance. For example in Figure \ref{Figure:requirements} we show that increasing $k$ from $100$ to $1000$ barely impacts FLOP count or memory requirements relative to the original requirements of SSC. These values were calculated using Table \ref{Table:requirements}.

Second one must choose $\Omega_i$ such that it contains enough data points from $\mathbf x_i$'s subspace. Uniformly random sampling to choose $k$ neighbours is a poor choice since the selected neighbours may not belong to the same subspace. Recent works such as \citep{dyer2013greedy, park2014greedy, 6560026} have demonstrated that even in noisey cases or cases of subspace intersection that the points closest to each $\mathbf x_i$ in the ambient space usually correspond to the most strongly connected data points in $\mathbf Z$ i.e.\ data points from the same subspace. We come to the same conclusions in Appendix \ref{Appendix:Analysis} where we prove that in both noisey and noise free cases we are able to correctly select points from the subspace using k nearest neighbours. We repeat our two central theorems here for the reader. First in the case of noise free data:
\begin{thm}[Recalled from Theorem \ref{thm:knn}]
Let $d_m$ be the minimum dimensionality of all the subspaces. Given the conditions in Theorem \ref{thm:kconcentration}, if $\mathcal A_{\ell,1}\le\frac{\sqrt{d_m}(1-\epsilon^2/2)}{2\sqrt{d(t\log N_{\ell}+t^2)}}$, then the samples selected for any sample $\vc x_1$ from subspace $\mathcal S_1$ by using $k$NN (with $k=k_0/C$) contains no samples from other subspaces but $\mathcal S_1$ with probability at least $1-2e^{-t}-e^{-k_0}(eC)^{k_0/C}$.
\end{thm}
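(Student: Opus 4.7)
The plan is to show that, with high probability, the $k$ largest inner products (or smallest distances) between $\vc{x}_1$ and the other data points all come from its own subspace $\mathcal{S}_1$. This is a two-sided affair: a \emph{lower} bound on the $k$-th largest in-subspace inner product, and an \emph{upper} bound on the largest out-of-subspace inner product. Once those two bounds are in place, the condition on $\mathcal{A}_{\ell,1}$ is exactly what is needed to force the lower bound above the upper bound.

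\medskip

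For the in-subspace side, I would invoke the $k$-concentration result (Theorem \ref{thm:kconcentration}) directly. That theorem guarantees, with probability at least $1-e^{-k_0}(eC)^{k_0/C}$, the existence of at least $k_0/C$ samples in $\mathcal{S}_1$ whose inner product with $\vc{x}_1$ is at least of order $\sqrt{d_m}(1-\epsilon^2/2)/\sqrt{d}$ after normalising for ambient dimension (the $\sqrt{d_m}$ factor reflects that unit vectors drawn uniformly from a $d_m$-dimensional subspace concentrate at magnitude $\sqrt{d_m/d}$ along any fixed direction inside that subspace, minus the small $\epsilon^2/2$ concentration slack). Since we are taking $k=k_0/C$ nearest neighbours, this gives a floor $\tau_{\mathrm{in}}$ for the $k$-th largest in-subspace inner product.

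\medskip

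For the out-of-subspace side, I would use the definition of subspace affinity $\mathcal{A}_{\ell,1}$ (which captures the principal-angle similarity between $\mathcal{S}_\ell$ and $\mathcal{S}_1$) together with a standard subgaussian-type concentration for the inner product of $\vc{x}_1$ with a random unit vector drawn from $\mathcal{S}_\ell$. A single such inner product concentrates around its mean at scale $\mathcal{A}_{\ell,1}$, and taking a union bound over all $N_\ell$ samples in $\mathcal{S}_\ell$ (and over the finitely many $\ell\ne 1$) inflates the deviation by a factor of order $\sqrt{t\log N_\ell + t^2}/\sqrt{d}$, so that with probability at least $1-2e^{-t}$ every out-of-subspace inner product is bounded above by $\tau_{\mathrm{out}}\lesssim 2\mathcal{A}_{\ell,1}\sqrt{t\log N_\ell + t^2}/\sqrt{d}$. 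Plugging the hypothesised bound on $\mathcal{A}_{\ell,1}$ straight into $\tau_{\mathrm{out}}$ immediately yields $\tau_{\mathrm{out}} \le \tau_{\mathrm{in}}$, which is precisely what ensures that $k$NN picks no out-of-subspace point.

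\medskip

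Finally, a union bound over the two failure events yields the stated probability $1 - 2e^{-t} - e^{-k_0}(eC)^{k_0/C}$. The main obstacle is the out-of-subspace deviation bound: one has to choose the right concentration tool (Hanson–Wright or a Bernstein inequality applied to $\langle \vc{x}_1, \vc{x}_j\rangle$ with $\vc{x}_j$ uniform on the unit sphere of $\mathcal{S}_\ell$) so that the tail gives exactly the $\sqrt{t\log N_\ell + t^2}$ factor after the union bound over the $N_\ell$ points of $\mathcal{S}_\ell$, and to verify that the constant hidden in $\mathcal{A}_{\ell,1}$ matches the factor of $2$ appearing in the denominator of the hypothesis. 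The in-subspace step, by contrast, reduces to a direct citation of Theorem \ref{thm:kconcentration}, and the final comparison between $\tau_{\mathrm{in}}$ and $\tau_{\mathrm{out}}$ is an algebraic identity forced by the statement.
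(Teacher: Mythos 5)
Your overall architecture matches the paper's proof exactly: a floor on the in-subspace inner products coming from Theorem \ref{thm:kconcentration}, a union-bounded ceiling on the out-of-subspace inner products, and a final comparison that the affinity hypothesis is engineered to close, with the two failure probabilities added at the end. The paper does precisely this, citing Theorem \ref{thm:kconcentration} and Corollary \ref{cor:innerprodupperbound} and then checking one line of algebra.

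There is, however, a concrete error in where you place the dimension factors, and it leaves a gap. The in-subspace floor is not ``of order $\sqrt{d_m}(1-\epsilon^2/2)/\sqrt{d}$'': the $k_0/C$ points guaranteed by Theorem \ref{thm:kconcentration} lie in the $\epsilon$-neighbourhood $A_\epsilon$ of $\vc x_1$ on the unit sphere of $\mathcal S_1$ itself, and for normalised vectors $\|\vc x-\vc y\|^2 = 2-2\vc x^\top\vc y$, so the floor is simply $\tau_{\mathrm{in}} = 1-\epsilon^2/2$ with no $\sqrt{d_m/d}$ attenuation --- both vectors are unit vectors of the ambient space lying in the same subspace, so there is no projection loss. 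The factors $\sqrt{d_m}$ and $\sqrt{d}$ in the hypothesis come entirely from the out-of-subspace side: Corollary \ref{cor:innerprodupperbound} gives $\vc x_i^\top\vc y \le 2\mathcal A_{1,\ell}\sqrt{\min\{d_1,d_\ell\}(t\log N_\ell+t^2)}/\sqrt{d_1}$, where $\sqrt{\min\{d_1,d_\ell\}}$ arises from $\|\mat U_1^\top\mat U_\ell\|_F = \mathcal A_{1,\ell}\sqrt{\min\{d_1,d_\ell\}}$ and $\sqrt{d_1}$ from the spherical-cap bound; your $\tau_{\mathrm{out}}$ is missing the $\sqrt{\min\{d_1,d_\ell\}}$ factor. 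Because your $\tau_{\mathrm{out}}$ understates the actual ceiling while your $\tau_{\mathrm{in}}$ understates the actual floor, your final comparison happens to close numerically but does not establish that the \emph{true} out-of-subspace inner products fall below the \emph{true} in-subspace floor. Restoring the factors to their correct sides, the hypothesis on $\mathcal A_{\ell,1}$ reduces the comparison to $2\mathcal A_{1,\ell}\sqrt{\min\{d_1,d_\ell\}(t\log N_\ell+t^2)}/\sqrt{d_1} \le 1-\epsilon^2/2$, which is exactly the paper's one-line verification. The remainder of your argument --- Borell/subgaussian concentration plus union bound over the $N_\ell$ points for the ceiling, and the final union bound over the two failure events --- is correct and is what the paper does.
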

Second in the case of noisey data
\begin{thm}[Recalled from Theorem \ref{thm:knnnoise}]
Let $d_m$ be the minimum dimensionality of all the subspaces. Given the conditions in Theorem \ref{thm:kconcentration} and the noise model \eqref{e:noisemodel}, for a small positive $\delta$, if $\mathcal A_{\ell,1}\le\frac{\sqrt{d_m}(1-\epsilon^2/2-6\delta)}{2\sqrt{d(t\log N_{\ell}+t^2)}}$, then the samples selected for any sample $\vc y_1$ from subspace $\mathcal S_1$ by using $k$NN (with $k=k_0/C$) contains no samples from other subspaces but $\mathcal S_1$ with probability at least $1-2e^{-t}-e^{-k_0}(eC)^{k_0/C}-2\exp(1-\frac{c\delta^2}{\sigma^2}) - \frac{d\sigma^4}{\delta^2}$.
\end{thm}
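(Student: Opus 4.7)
The plan is to reduce the noisy case to the noise-free result of Theorem \ref{thm:knn} by a perturbation argument, showing that each pairwise inner product driving the $k$NN selection deviates from its clean counterpart by at most $3\delta$ with high probability, and then reapplying Theorem \ref{thm:knn} after absorbing a margin of $6\delta$ into the threshold. Concretely, under the additive Gaussian model \eqref{e:noisemodel}, each noisy inner product decomposes as
\begin{align}
\inprod{\vc y_i}{\vc y_j} = \inprod{\vc x_i}{\vc x_j} + \inprod{\vc x_i}{\vc n_j} + \inprod{\vc n_i}{\vc x_j} + \inprod{\vc n_i}{\vc n_j},
\end{align}
so that controlling the three noise terms on each side of the within-subspace vs.\ across-subspace comparison is what feeds the factor $6\delta$ in the threshold.

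First I would handle the two cross terms $\inprod{\vc x_i}{\vc n_j}$ and $\inprod{\vc n_i}{\vc x_j}$: conditionally on the clean data, each is a scalar Gaussian with variance $\sigma^2\norm{\vc x_i}^2\le\sigma^2$ (assuming unit-norm clean points as is standard in the SSC analyses cited in the paper), so a sub-Gaussian tail bound gives $\Pr(\abs{\inprod{\vc x_i}{\vc n_j}}\ge\delta)\le 2\exp(1-c\delta^2/\sigma^2)$; this produces the $2\exp(1-c\delta^2/\sigma^2)$ term in the failure probability. Next I would handle the noise-noise term $\inprod{\vc n_i}{\vc n_j}$, whose mean is zero and whose variance is of order $d\sigma^4$; by Chebyshev's inequality, $\Pr(\abs{\inprod{\vc n_i}{\vc n_j}}\ge\delta)\le d\sigma^4/\delta^2$, which is precisely the last term in the stated probability. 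On the good event that all three noise terms are bounded by $\delta$, the noisy inner products satisfy $\abs{\inprod{\vc y_i}{\vc y_j}-\inprod{\vc x_i}{\vc x_j}}\le 3\delta$; consequently any clean separation between within-subspace and cross-subspace inner products of size greater than $6\delta$ is preserved after the perturbation.

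Having established this, I would invoke Theorem \ref{thm:knn} with the tightened threshold $\mathcal A_{\ell,1}\le\frac{\sqrt{d_m}(1-\epsilon^2/2-6\delta)}{2\sqrt{d(t\log N_{\ell}+t^2)}}$: the argument from the noise-free case then carries through verbatim on the good event, because the ordering of the top-$k$ inner products around $\vc y_1$ agrees with that around $\vc x_1$. A union bound over the three failure modes (the concentration event from Theorem \ref{thm:kconcentration}, the $k$NN selection event from Theorem \ref{thm:knn}, and the noise-bound event) then yields the stated success probability $1-2e^{-t}-e^{-k_0}(eC)^{k_0/C}-2\exp(1-c\delta^2/\sigma^2)-d\sigma^4/\delta^2$.

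The main obstacle will be the uniform control of the noise perturbations: the cross term and noise-noise bounds must hold simultaneously for enough pairs $(1,j)$ to identify the correct $k$ nearest neighbours, not just for a single fixed pair. A naive union bound over all pairs would inflate the failure probability by a factor of $N_\ell$ and destroy the clean form of the stated bound. Reconciling this with the probability $2\exp(1-c\delta^2/\sigma^2) + d\sigma^4/\delta^2$ written in the theorem requires either arguing that it is enough to control only those pairs whose clean inner product lies inside a narrow margin around the separation threshold (so that the effective cardinality is $O(1)$ after the concentration in Theorem \ref{thm:kconcentration}), or absorbing the $\log N_\ell$ factor into $\delta$ through a slightly larger choice. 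A secondary technicality is ensuring the bound remains valid after the $\ell_2$-normalisation that $k$NN implicitly uses, which I would handle by noting that $\norm{\vc y_i}=\norm{\vc x_i}+O(\sigma)$ with high probability and that the implied multiplicative distortion of the inner products can be folded into the same $\delta$-slack.
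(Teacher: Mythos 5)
Your proposal follows essentially the same route as the paper: the paper's Lemma \ref{lem:genipwithnoise} packages exactly your decomposition $\vc y_1^\top\vc y_2 = \vc x_1^\top\vc x_2 + \vc x_1^\top\vc e_2 + \vc e_1^\top\vc x_2 + \vc e_1^\top\vc e_2$ with the sub-Gaussian bound (Lemma \ref{lem:ipgausandfix}) on the cross terms and the Chebyshev bound (Lemma \ref{lem:IPgausandgaus}) on the noise--noise term, and the theorem is then proved by shifting the within-subspace lower bound from Theorem \ref{thm:kconcentration} down by $3\delta$ and the cross-subspace upper bound from Corollary \ref{cor:innerprodupperbound} up by $3\delta$ before rerunning the comparison of Theorem \ref{thm:knn}. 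The uniformity-over-pairs issue you flag as the main obstacle is genuine but is not resolved in the paper either: Lemma \ref{lem:genipwithnoise} is applied simultaneously to all cross-subspace samples and to the minimum over $\mathcal N_1$ without any union bound over $j$.
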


Therefore we set $\Omega_i$ for $\mathbf x_i$ as its $k$ nearest neighbours from the original ambient space. However when $\mathbf X$ is subject to noise this assumption no longer holds. For this reason we recommend setting $k$ well above $d_i$ to provide sufficient head room. Furthermore we suggest increasing $k$ as the magnitude of expected noise increases, since as noise increases, so does the likelihood of false positive neighbour selection. In Figure \ref{Figure:yale_observation} and Figure \ref{Figure:hopkins_observation} we demonstrate this effect on the Yale and Rigid Motion datasets respectively. We note that the required value for $k$ to select sufficient true positives via kNN exceeds $d_i$. This is due to the presence of noise and corruptions in the data and the sometimes small distance between subspaces, particularly for the Extended Yale B dataset. Although still extremely small relative to $N$.

In summary we propose to eliminate the calculation of redundant elements of $\mathbf Z$ by computing only $k$ rather than $N$ coefficients for each $\mathbf x_i$. An overview of the entire method can be found in Algorithm \ref{alg_final}. Subspace identification accuracy can be exactly maintained from SSC provided that the following conditions are met:
\begin{itemize}
\item $k$ is equal to or greater than $\max(d_i)$
\item the elements of $\Omega_i$ are nearest neighbours of $\mathbf x_i$
\end{itemize}
These conditions are sufficient but not necessary. In some cases clustering accuracy could be maintained when $k$ is less than $\max(d_i)$ or different filtering method is used. However when these conditions are met kSSC ensures that  SSC's guarantee of correct subspace identification and robustness to noise is preserved since kNN is guaranteed to correctly identify neighbours (see Appendix \ref{Appendix:Analysis}). Furthermore kSSC is easily solved in parallel as $\Omega_i$ and each column of $\mathbf Z$ is independent.

\begin{algorithm}[!t]
\caption{kSSC}
\label{alg_final}
\begin{algorithmic}[1]

\REQUIRE $\mathbf X^{D \times N}$ - observed data, $k$ - number of neighbours, $c$ - number of subspaces

\FOR{$i \to N$ in parallel}

\STATE Set $\Omega_i$ by kNN

\STATE Obtain coefficients $\mathbf z_i$ by solving \eqref{kssc_objective_relaxed}

\ENDFOR

\STATE Form the similarity graph $\mathbf W = |\mathbf Z| + |\mathbf Z|^T$

\STATE Apply N-Cut to $\mathbf W$ to partition the data into $c$ subspaces

\RETURN Subspaces $\{S_i\}^c_{i=1}$

\end{algorithmic}
\end{algorithm}

\subsection{Optimisation}
\label{sec:optimisation}

To solve \eqref{kssc_objective_relaxed} we use FISTA (Fast Iterative Shrinkage-Thresholding Algorithm) \citep{BeckTeboulle2009, ji2009accelerated}. FISTA is an accelerated gradient descent scheme for solving objective functions containing a smooth part and non-smooth part as is the case with \eqref{kssc_objective_relaxed}. One of the key abilities of FISTA is that it guarantees a convergence rate of $\BigO{\frac{1}{\mathtt t^2}}$ where $\mathtt t$ is the iteration counter. This is achieved by dynamically setting the rate of descent parameter (Lipschitz constant) and using the two previous iteration points to accelerate the gradient descent. Furthermore FISTA provides the aforementioned ability with minimal computational and memory overhead. Each iteration of FISTA only requires solving a closed form proximity problem which in the case of $\ell_1$ minimisation can be solved at an element wise level. This allows us to resolve the selective fitting term of \eqref{kssc_objective_relaxed} since we can enforce it by ignoring the elements of $\mathbf Z$ that are outside of $\Omega$.

We begin by re-writing, with some abuse of notation, the original objective \eqref{kssc_objective_relaxed} for a single column of $\mathbf Z$
\begin{align}
\min_{\mathbf z_i} L = \lambda \| \mathbf z_i \|_1 + \frac{1}{2}\| \mathbf x_i - \mathbf X_i \mathbf z_i \|_2^2
\end{align}
where $\mathbf z_i = \mathbf z_{{\Omega_i}i}$ i.e.\ the vector of rows $\Omega_i$ and column $i$ of $\mathbf Z$ and $\mathbf X_i = \mathbf X_{(:,\Omega_i)}$ i.e.\ the matrix formed from the columns of $\mathbf X$ indexed by $\Omega_i$. Note that we have removed the constraint $\textrm{diag}(\mathbf Z) = \mathbf 0$ since we enforce it by ensuring that no diagonal entries are present in each $\Omega_i$.

At each iteration in the FISTA scheme one must solve the $\ell_1$ proximal linearised form of $L$. Denote the linearisation of $L$ at point $\mathbf z_i^{\mathtt t}$
\begin{align}
\min_{\mathbf z_i} \widetilde{L}_{\rho}(\mathbf{z_i, z_i^{\mathtt t}}) =  \lambda\| \mathbf z_i \|_{1} + \frac{\rho}{2} \| \mathbf z_i - ( \mathbf z_i^{\mathtt t} - \frac{1}{\rho} \partial F(\mathbf z_i^{\mathtt t})) \|_2^2,
\label{kssc_linearisation}
\end{align}
where $F = \frac{1}{2}\|\mathbf x_i - \mathbf X_i \mathbf z_i \|^2_2$ and correspondingly $\partial F = - \mathbf X_i^T (\mathbf x_i - \mathbf X_i \mathbf z_i^{\mathtt t})$. The solution to \eqref{kssc_linearisation} is given by the closed-form $\ell_1$ shrinkage function $\mathcal S_{\tau}$ as follows
\begin{align}
\mathcal S_{\frac{\lambda}{\rho}}(\mathbf z_i^{\mathtt t}) = \textrm{sign}(\mathbf z_i^{\mathtt t} - \frac{1}{\rho} \partial F(\mathbf z_i^{\mathtt t})) \max(| \mathbf z_i^{\mathtt t} - \frac{1}{\rho} \partial F(\mathbf z_i^{\mathtt t}) | - \frac{\lambda}{\rho}, 0).
\end{align}
We refer readers to \citep{bach2011convex, liu2010efficient} for further details. The full algorithm is outlined in Algorithm \ref{Algorithm:kSSC_relaxed}.  

\begin{algorithm}
\caption{Solving \eqref{kssc_objective_relaxed} via FISTA}
\begin{algorithmic}
\label{Algorithm:kSSC_relaxed}

\REQUIRE $r_i = \infty$, $\mathbf z_i = \mathbf 0$,  $\mathbf j_i = \mathbf 0$, $\alpha_i = 1$, $\lambda$, $\rho_i$, $\gamma$, $\epsilon$

\WHILE{$r_i^{\mathtt t} - r_i^{\mathtt t - 1} \geq \epsilon$ }

	\WHILE{$L(\mathcal S_{\frac{\lambda}{\rho}}(\mathbf j_i^{\mathtt t})) \geq \widetilde{L}_{\rho}(\mathcal S_{\frac{\lambda}{\rho}}(\mathbf j_i^{\mathtt t}), \mathbf j_i^{\mathtt t})$}
	
		\STATE $\rho_i = \gamma \rho_i$	
	
	\ENDWHILE

	\STATE $\mathbf z_i^{\mathtt t+1} = \mathcal S_{\frac{\lambda}{\rho}}(\mathbf j_i^{\mathtt t}))$
	\STATE $\alpha_i^{\mathtt t+1} = \frac{1 + \sqrt{1 + 4 \alpha_i^t{^2})}}{2}$
	\STATE $\mathbf j_i^{\mathtt t+1} = \mathbf z_i^{\mathtt t+1} + \left ( \frac{\alpha_i^{\mathtt t} - 1}{\alpha_i^{\mathtt t+1}} \right ) (\mathbf z_i^{\mathtt t+1} - \mathbf z_i^{\mathtt t})$
	\STATE $r_i^{\mathtt t+1} = L(\mathcal S_{\frac{\lambda}{\rho}}(\mathbf j_i^{\mathtt t}))$
	
\ENDWHILE

\end{algorithmic}
\end{algorithm}

\subsection{Segmentation}

After solving \eqref{kssc_objective_relaxed} for each $\mathbf z_i$ the next step is to form $\mathbf Z$ and use the information encoded in $\mathbf Z$ to assign each data point to a subspace. A robust approach is to use spectral clustering. The matrix $\mathbf Z$ can be interpreted as the affinity or distance matrix of an undirected graph. Element $Z_{ij}$ corresponds to the edge weight or affinity between vertices (data points) $i$ and $j$. Then we use the spectral clustering technique, Normalised Cuts (N-Cut) \citep{shi2000normalized}, to obtain final segmentation. N-Cut has been shown to be robust in subspace segmentation tasks and is considered state of the art. Since we expect $\mathbf Z$ to be sparse in most cases N-Cut should have reasonable computation time, particularly in comparison to a full $\mathbf Z$ matrix. However in cases where N-Cut is too slow one can use approximate techniques such as the Nystr\"{o}m method \citep{fowlkes2004spectral}. Spectral segmentation techniques such as N-Cut require the number of subspaces $p$ as a parameter. In this paper we assume that the number of subspaces is estimated by the user although automatic techniques exist for the estimation of $p$, see \citep{tierney2014subspace} for details.

Spectral segmentation techniques such as N-Cut require the number of subspaces $p$ as a parameter. In the case  where the number of subspaces is unknown one can use either the Eigen-gap \citep{zelnik2004self,vidal2011subspace,soltanolkotabi2014robust} or the closely related SVD-gap heuristic of \cite{6180173}. The Eigen-gap heuristic uses the eigenvalues of $\mathbf W$ to find the number subspaces. It does this by finding the largest gap between the ordered eigenvalues, the number of eigenvalues before this point is treated as the number of clusters. Let $\{\delta_i\}_{i=1}^N$ be the descending sorted eigenvalues of $\mathbf W$ such that $\delta_1 \geq \delta_2 \geq \dots \geq \delta_N$. Then $p$ can be estimated by
\[
p = \argmax_{i = 1, \dots, N-1} (\delta_i - \delta_{i+1})
\]
The SVD-gap heuristic is the same procedure with eigenvalues of $\mathbf W$ replaced with singular values. Further improvements upon the Eigen-gap heuristic have been made, see \citep{zelnik2004self} for details.

\subsection{Complexity Analysis}

\begin{table}
\centering

\begin{tabular}{c | c | c }
\bf Method & \bf FLOPs (CPU) & \bf Floats (RAM) \\
\hline
SSC Exact & $7N^2 + 4DN^2 + 4DN$ & $2N^2 + DN$\\
\hline
SSC Relaxed & $6N^2 + 4DN^2 + DN$ & $4N^2$\\
\hline
kSSC Exact & $7kN + 4kDN + 4DN$ & $2kN + DN$\\
\hline
kSSC Relaxed & $6kN + 4kDN + DN$ & $4kN$\\
\end{tabular}

\caption{Overview of FLOP requirements w.r.t\ $\mathbf Z$ per iteration of SSC and kSSC and Float requirements over entire operation of SSC and kSSC.}
\label{Table:requirements}
\end{table}

\begin{figure*}
\centering
	\subfloat[Memory requirements w.r.t. $\mathbf Z$ for SSC and kSSC. $D = 500$.]{ \includegraphics[width=0.3\textwidth]{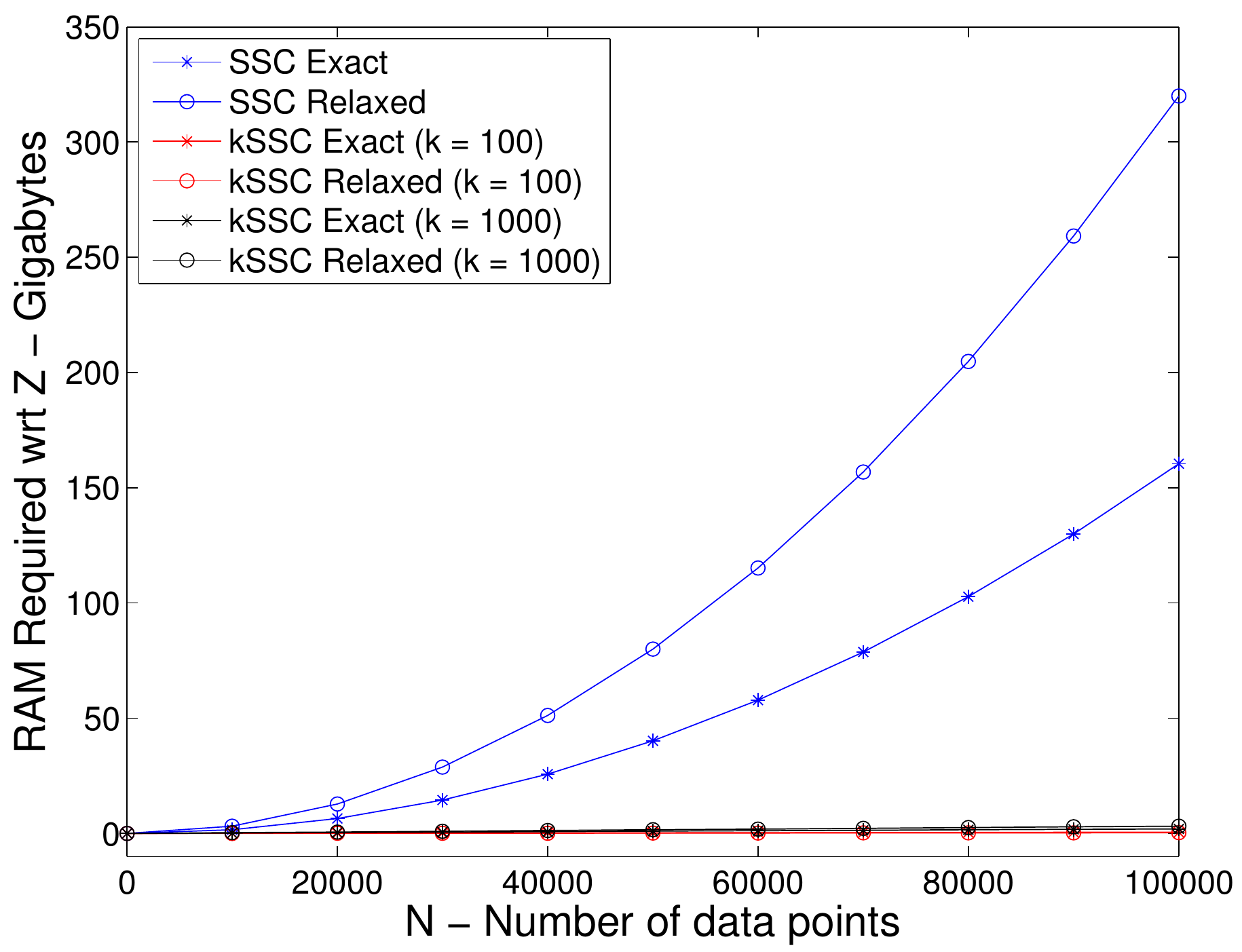}}\;
	\subfloat[Memory requirements w.r.t. $\mathbf Z$ for kSSC as $k \to N$. $D = 500$.]{\includegraphics[width=0.3\textwidth]{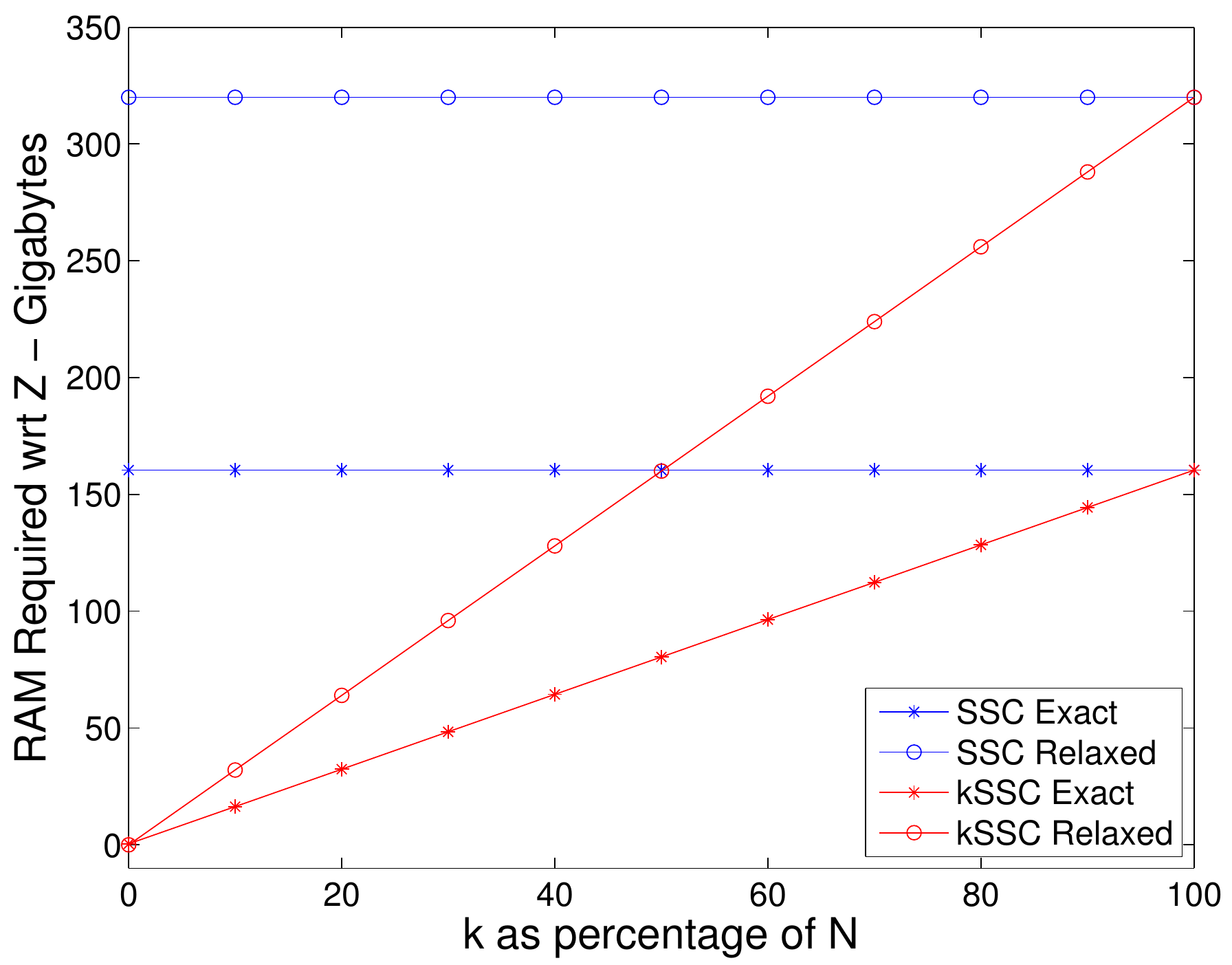}}\;
	\subfloat[FLOP requirements w.r.t. $\mathbf Z$ for a single iteration of SSC and kSSC. $D = 500$.]{\includegraphics[width=0.3\textwidth]{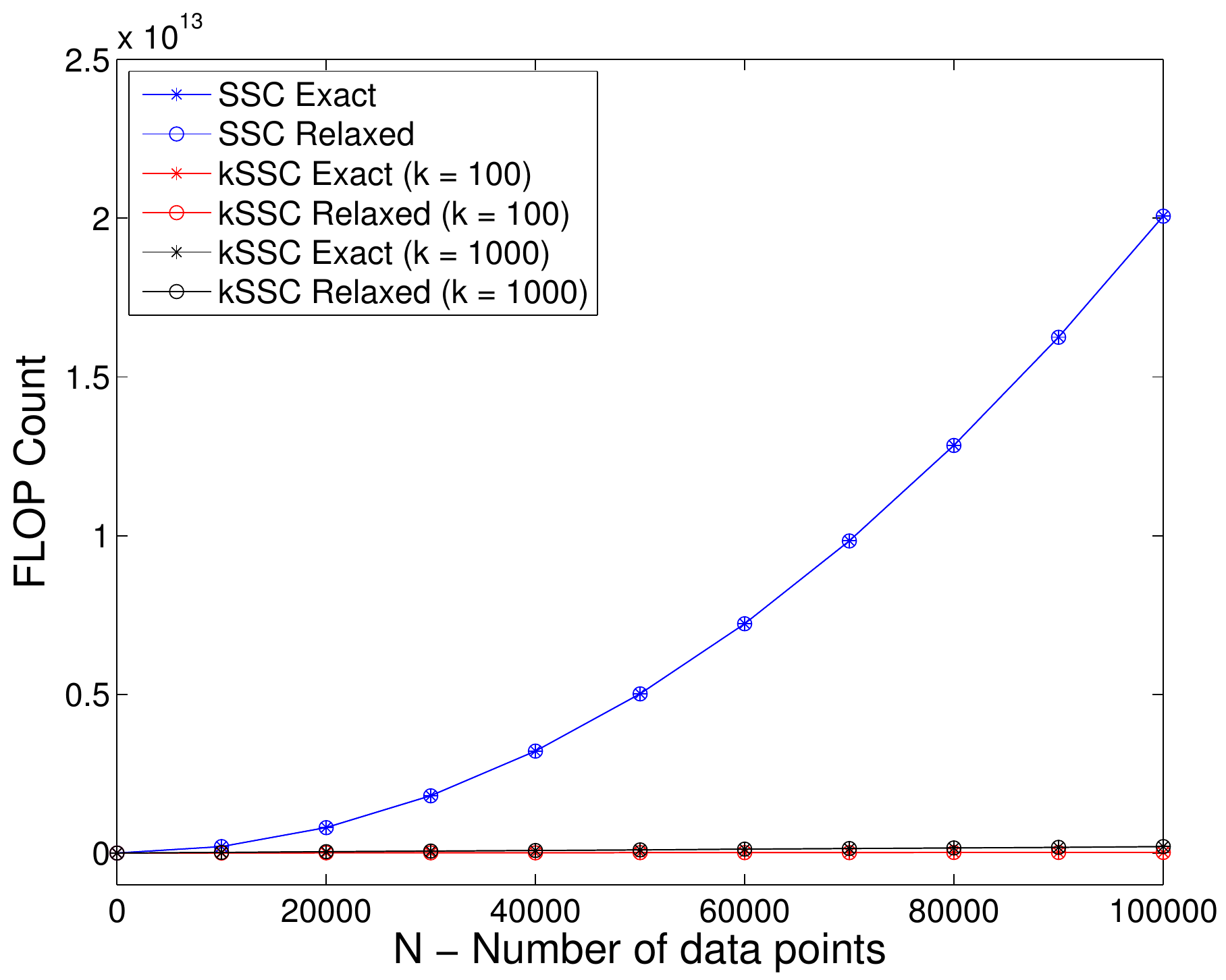}}
\caption{Visual comparison of memory and computation requirements from Table \ref{Table:requirements}. Even as $k$ grows much larger than $d_i$ it has little impact on computation time and memory.}
\label{Figure:requirements}
\end{figure*}

The complexity of kSSC only varies from SSC w.r.t.\ $\mathbf Z$ as can be seen from Algorithm \ref{alg_final} and a comparison of Algorithm \ref{Algorithm:kSSC_relaxed} and \ref{Algorithm:SSC_relaxed}. It differs in two ways. First we must find the $k$ nearest neighbours of each $\mathbf x_i$. Fortunately fast approximate methods exist for computing kNN and are freely available in packages such as FLANN \citep{muja_flann_2009}. The computation time for kNN is on the order of \BigO{N \log N} and \BigO{\log N} for preprocessing and searching respectively \citep{he2012computing,arya1998optimal,flann_pami_2014,muja_flann_2009}.

Second is the updating of $\mathbf z_i$ at each iteration. Since we are only updating $k$ entries of each column of $\mathbf Z$ instead of the full $N$ entries the FLOP count is drastically reduced. We enumerate the different FLOP counts per iteration for all columns of $\mathbf Z$ in Table \ref{Table:requirements} and visualise the dramatic improvement that kSSC offers in Figure \ref{Figure:requirements}. Similarly the amount of RAM required in the form of floats for updating $\mathbf Z$ is drastically reduced. For both FLOP and RAM the complexity is reduced from \BigO{N^2} to \BigO{N}. Note that the relaxed variant has markedly lower FLOP counts than the exact variants. This assumes one execution of the $\ell_1$ shrinkage operator per iteration. However in the case of FISTA, a single iteration may require many executions of the shrinkage operator due to the search scheme for optimal rate of descent parameter. If the rate of descent parameter $\rho$ is initialised poorly this may lead to the exact variant solved by LADMPSAP (see Appendix \ref{Appendix:kSSC_Exact}) to be quicker. However in practice we find that solving the relaxed variant by FISTA is usually faster since choosing $\rho$ is not a difficult task and can be estimated by running the solver on a small sub section of the data. Furthermore the FISTA based solver will converge much faster than the LADMPSAP solver. We provide a brief sample of running time differences in Figure \ref{Figure:runtime_wrt_N} to illustrate the difference between implementation variants. We also demonstrate the effect of varying the number of available cores for the parallel implementations in Figure \ref{Figure:runtime_wrt_cores}. We find that in the case of SSC as the number of cores increase the computation time also increases, which indicates that the performance of SSC is not as straight forward as outlined in Table \ref{Table:requirements}. In fact the performance is markedly worse than expected due to the overhead of sharing and multiple accessing of the full data matrix $\mathbf X$, which further reinforces the point that SSC does not scale well with large datasets. On the other hand kSSC benefits greatly from increasing the core count and eventually plateaus due to it's own overhead.

\begin{figure*}
\centering
\includegraphics[height=0.3\textwidth]{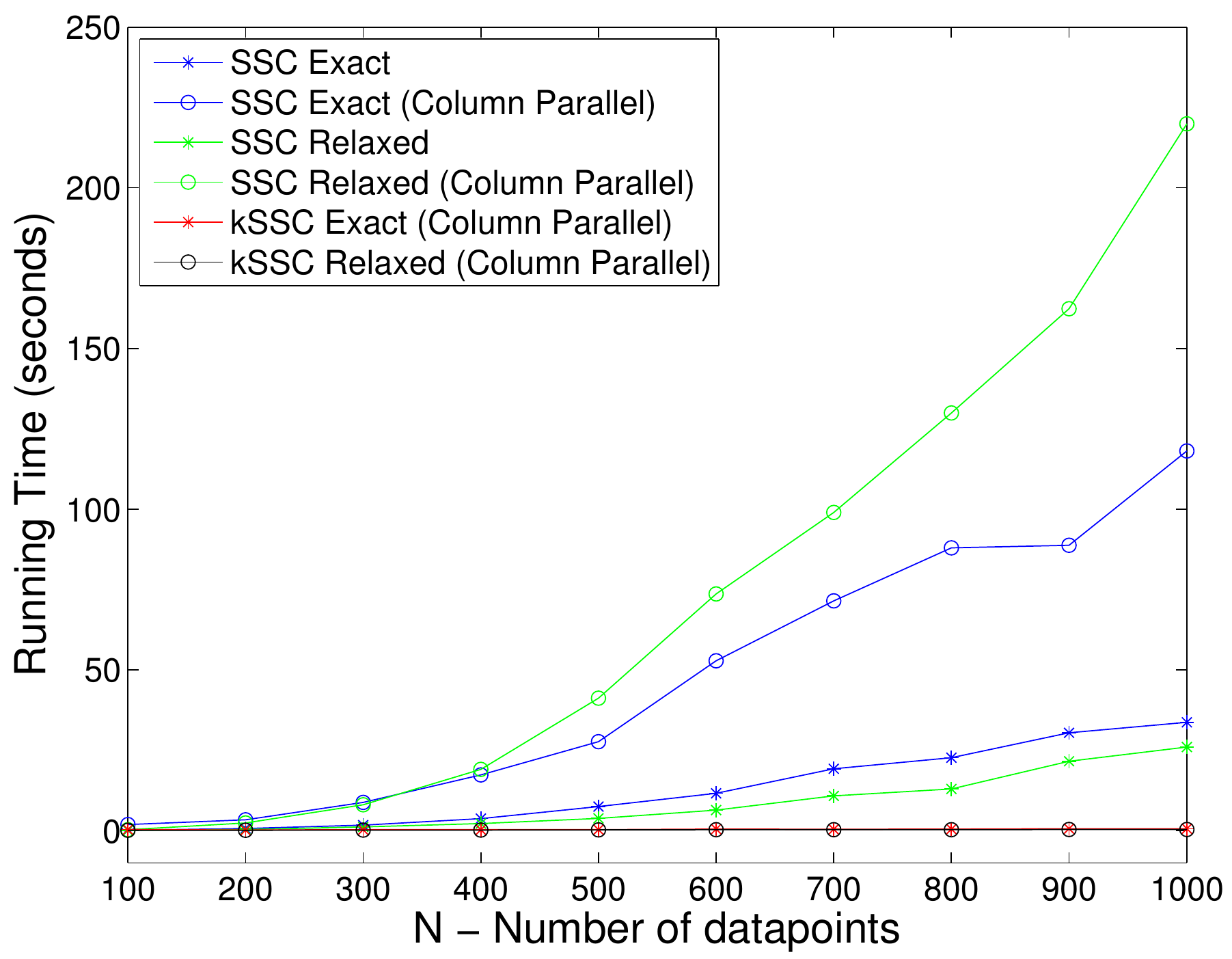}
\includegraphics[height=0.3\textwidth]{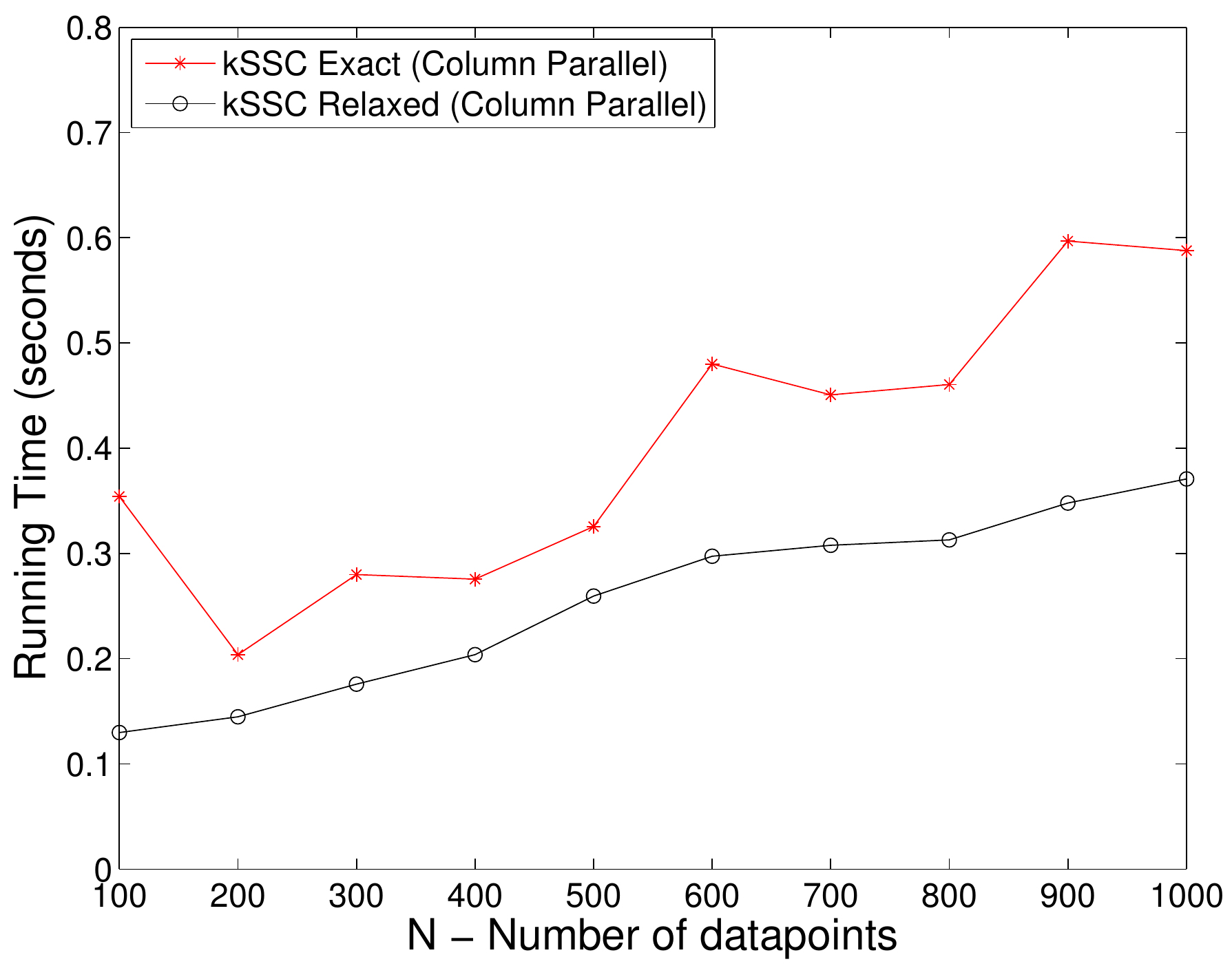}
\caption{Left: A comparison of running times with increasing $N$ between kSSC, SSC and their various implementations. Right: A zoomed comparison of running times with increasing $N$ for kSSC relaxed and exact variants (taken from the Left plot). Note that the scales are different since kSSC takes a fraction of the running time of SSC.}
\label{Figure:runtime_wrt_N}
\end{figure*}

\begin{figure*}
\centering
\includegraphics[height=0.3\textwidth]{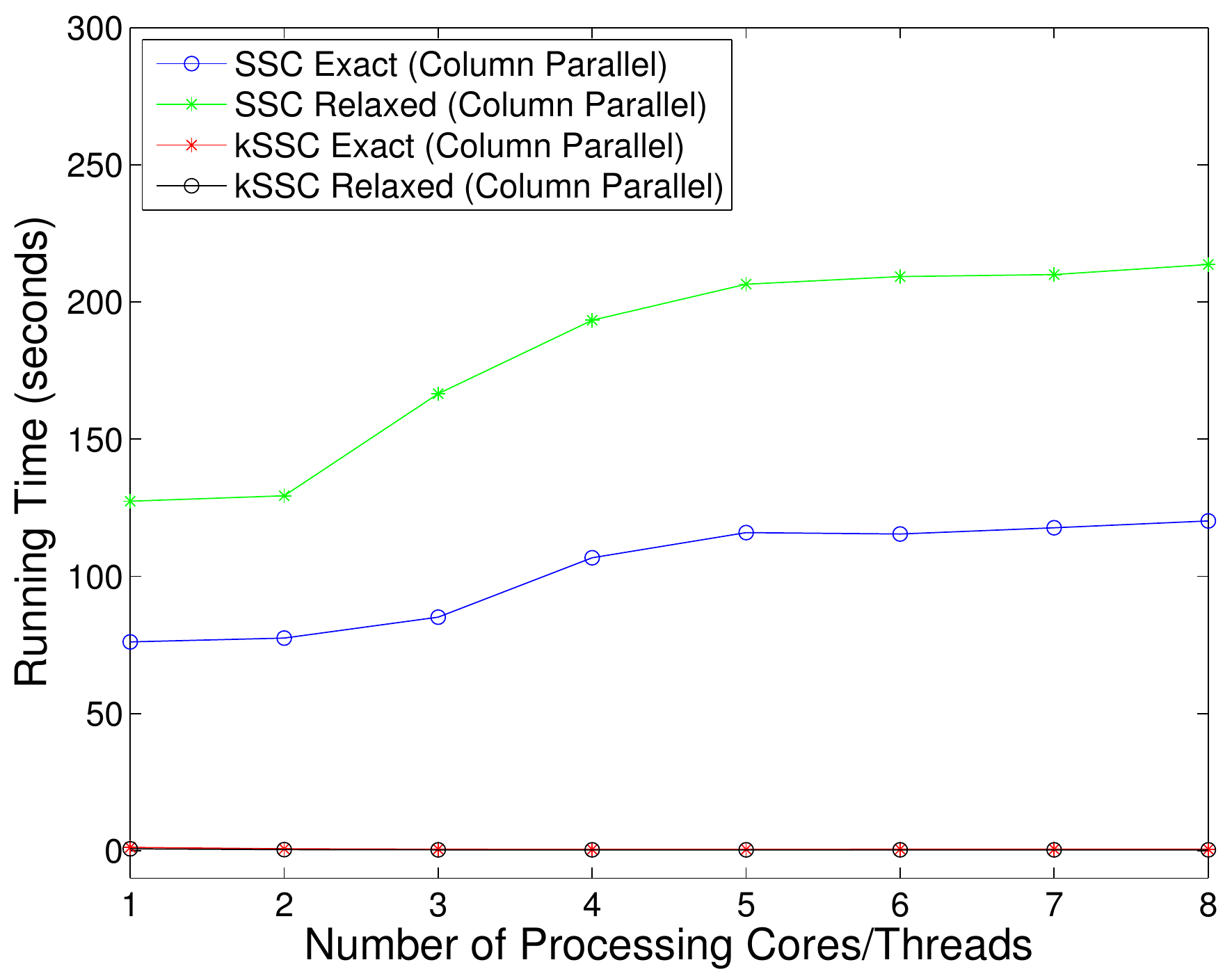}
\includegraphics[height=0.3\textwidth]{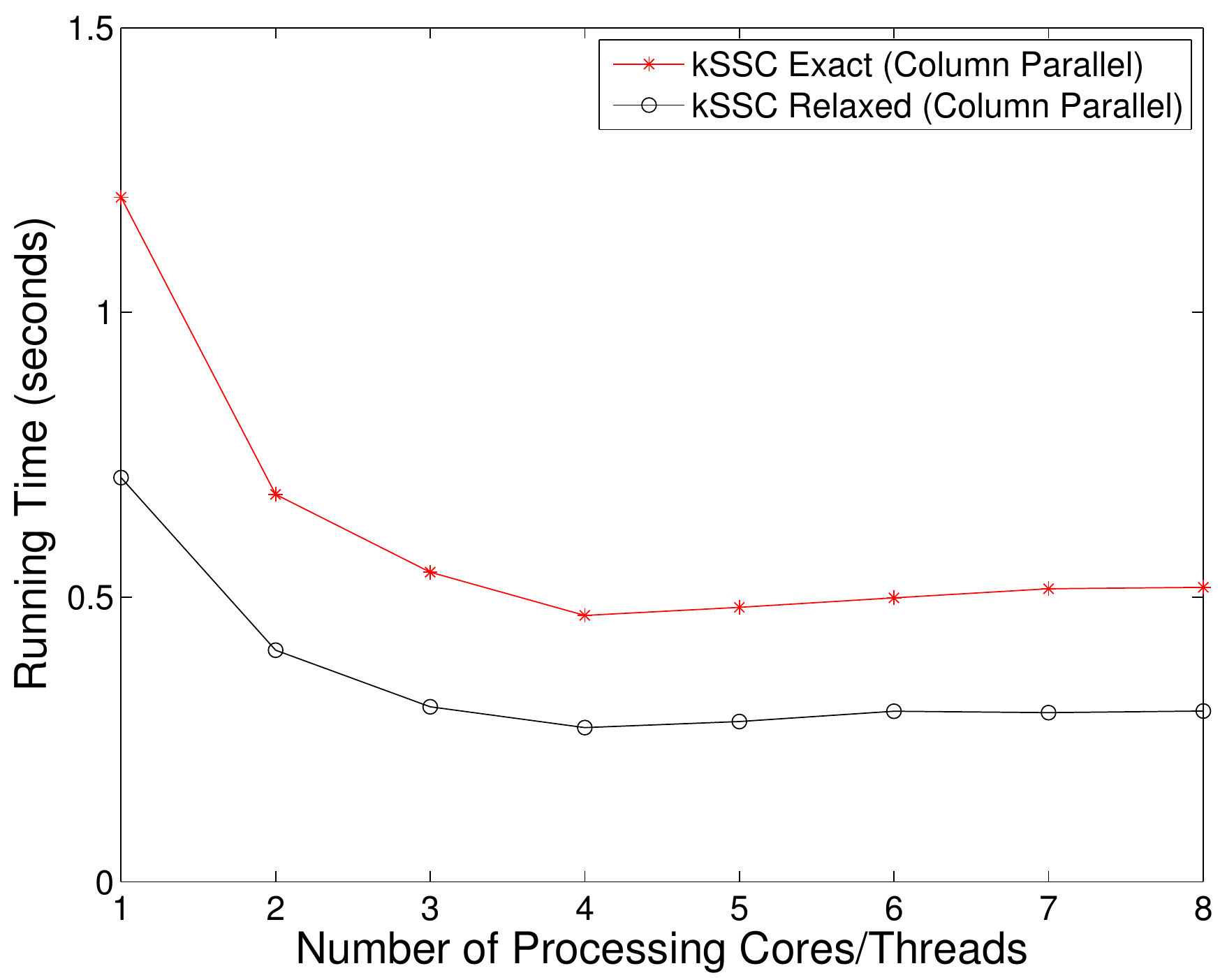}
\caption{Left: A comparison of running times with increasing threads/cores between kSSC, SSC and their various implementations. Right: A zoomed comparison of running times with increasing threads/core for kSSC relaxed and exact variants (taken from the Left plot). Note that the scales are different since kSSC takes a fraction of the running time of SSC.}
\label{Figure:runtime_wrt_cores}
\end{figure*}

\section{Preliminary Synthetic Evaluation}
\label{Section:Synthetic}

In this section we use synthetic data to experimentally evaluate our hypothesis proposed in Section \ref{Section:kssc} and the therotetical analysis in Appendix \ref{Appendix:Analysis} that kSSC can match the clustering accuracy of SSC.

In an effort to maximise transparency and repeatability, all MATLAB code and data used for these experiments and those in Section \ref{Section:experiments} can be found online at \url{https://github.com/sjtrny/kSSC}. To help evaluate consistency parameters except for $k$ were fixed for each experiment, which we further explain in the following subsections and are recorded in the code repository.

\subsection{Metrics}

Segmentation accuracy was measured using the subspace clustering error (SCE) metric \citep{elhamifar2012sparse}, which is defined as
\begin{align}
\text{SCE} = \frac{\text{num.\ of misclassified points}}{\text{total num.\ of points}} \times 100,
\end{align}
where lower subspace clustering error means greater clustering accuracy. In cases where we inject extra noise we report the level of noise using Peak Signal-to-Noise Ratio (PSNR) defined as
\begin{align}
\text{PSNR} = 10 \log_{10} \left( \frac{s^2}{\frac{1}{m n} \sum_i^{m} \sum_j^{n} ( A_{ij} - X_{ij})^2} \right)
\label{PSNR}
\end{align}
where $\mathbf{X = A + N}$, $\mathbf A$ is the original data, $\mathbf N$ is noise and $s$ is the maximum possible value of an element of $\mathbf A$. Decreasing values of PSNR indicate increasing amounts of noise.

\subsection{Effect of subspace dimension, cluster size and ambient dimension}

As noted in other works such as \citep{heckel2013robust, SoltanolkotabiCandesothers2012} the ratio of the subspace dimension $d_i$ to the number of points in each cluster $N_i$ can play a dramatic role in the clustering accuracy of SSC. However these works also ignore the role of ambient dimension $D$. In this section we demonstrate the relationship between all three variables.

We generate $5$ subspaces and vary their dimension $d_i$ from $3$ to $30$ and $N_i$ from $15$ to $150$. Each subspace is created using random orthonormal vectors as the basis with uniform random coefficients. For each pair of $d_i$ and $N_i$ we take the mean of the SCE over $50$ problem instances. We repeat this again for $3$ instances with $D$ set to $30, 50$ and $100$. For this experiment we set $k = \frac{N_i}{2}$ or $k = 1.5D$, whichever is smaller. The results shown in Figure \ref{Figure:synthetic_dvsn} that kSSC can match the performance of SSC even when $k \ll d_i$.

\begin{figure}[]
\centering
	\subfloat[SSC, $D = 30$]{
	\includegraphics[width=0.25\textwidth]{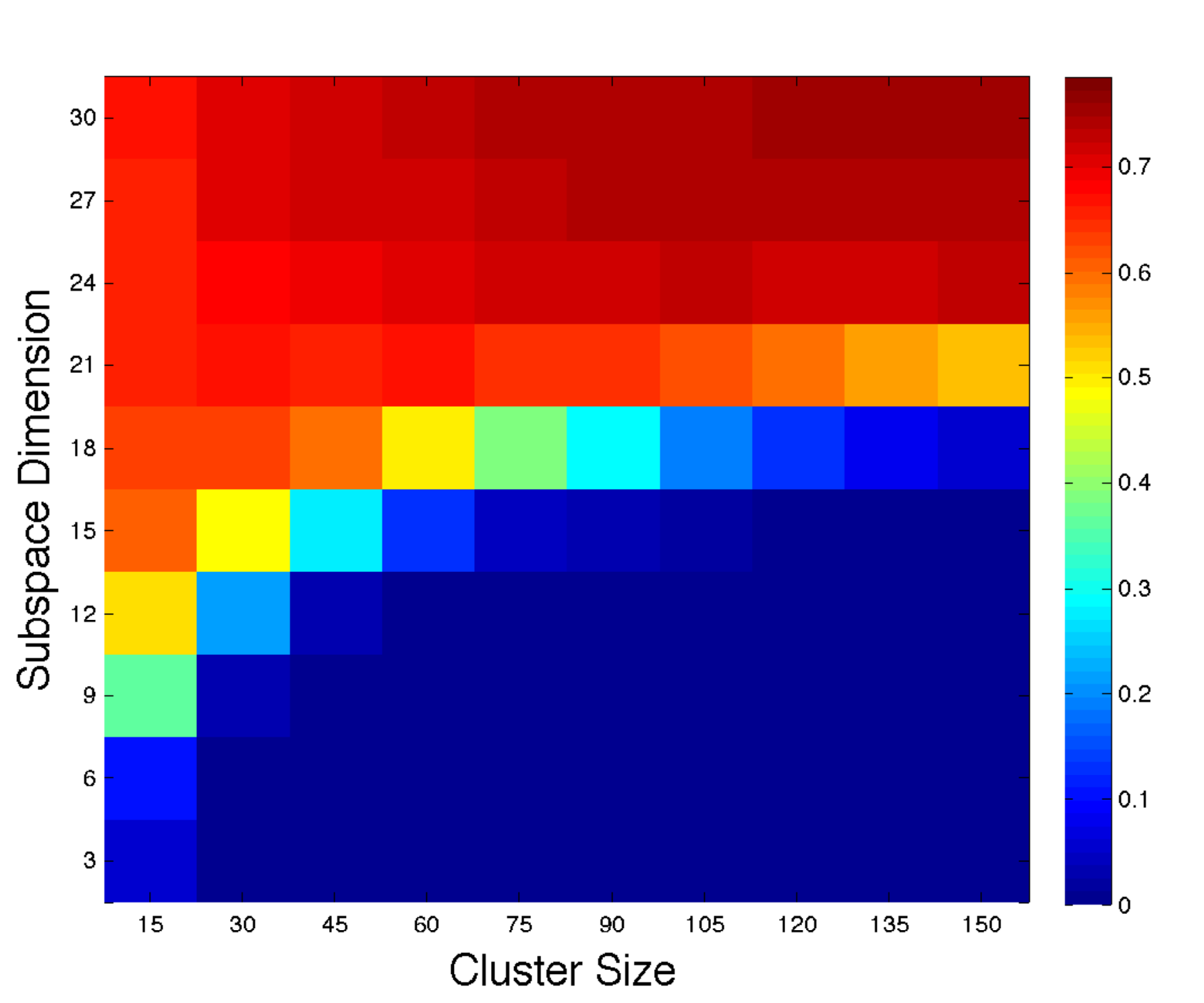}}
	\subfloat[SSC, $D = 50$]{
	\includegraphics[width=0.25\textwidth]{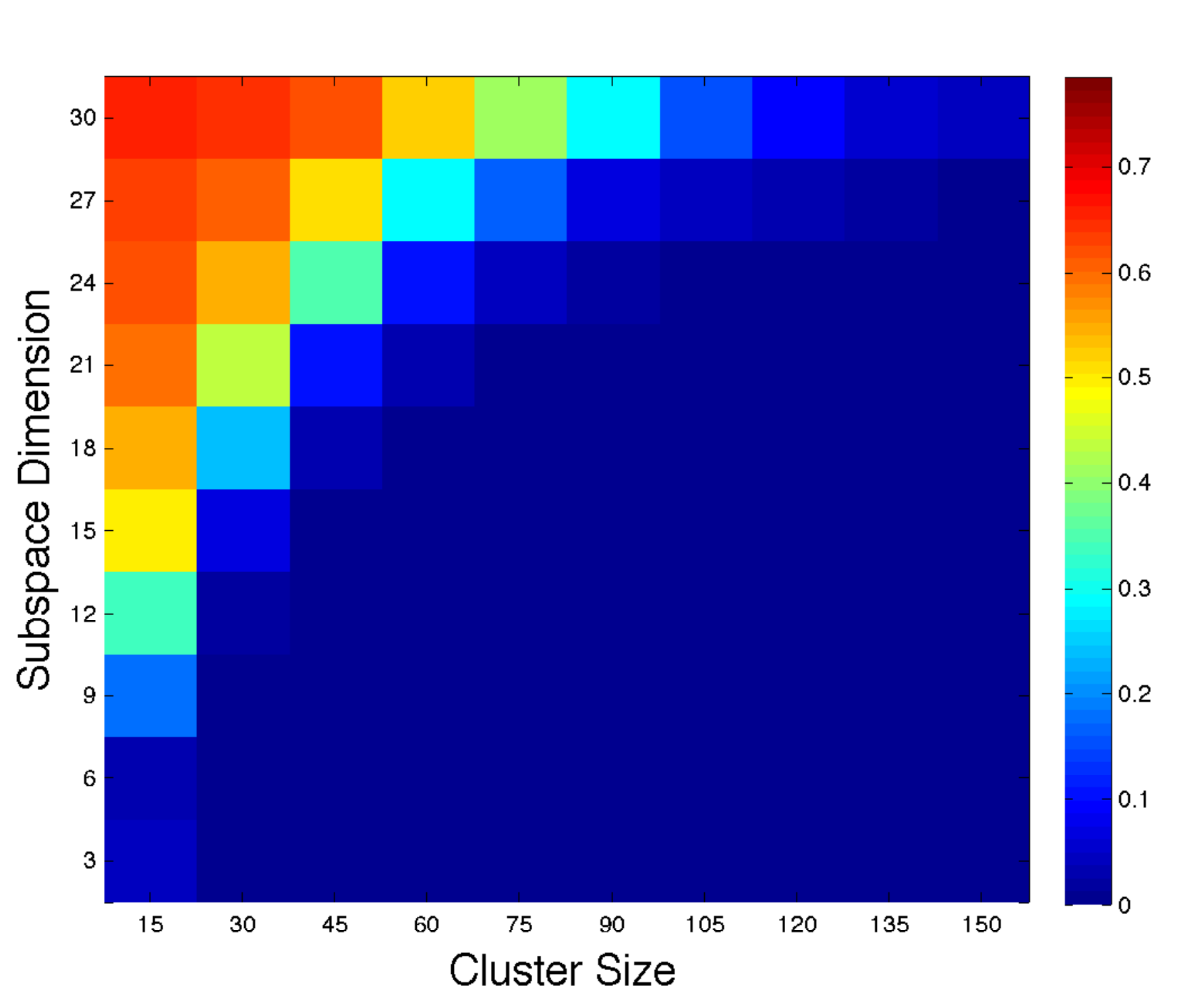}}
	\subfloat[SSC, $D = 100$]{
	\includegraphics[width=0.25\textwidth]{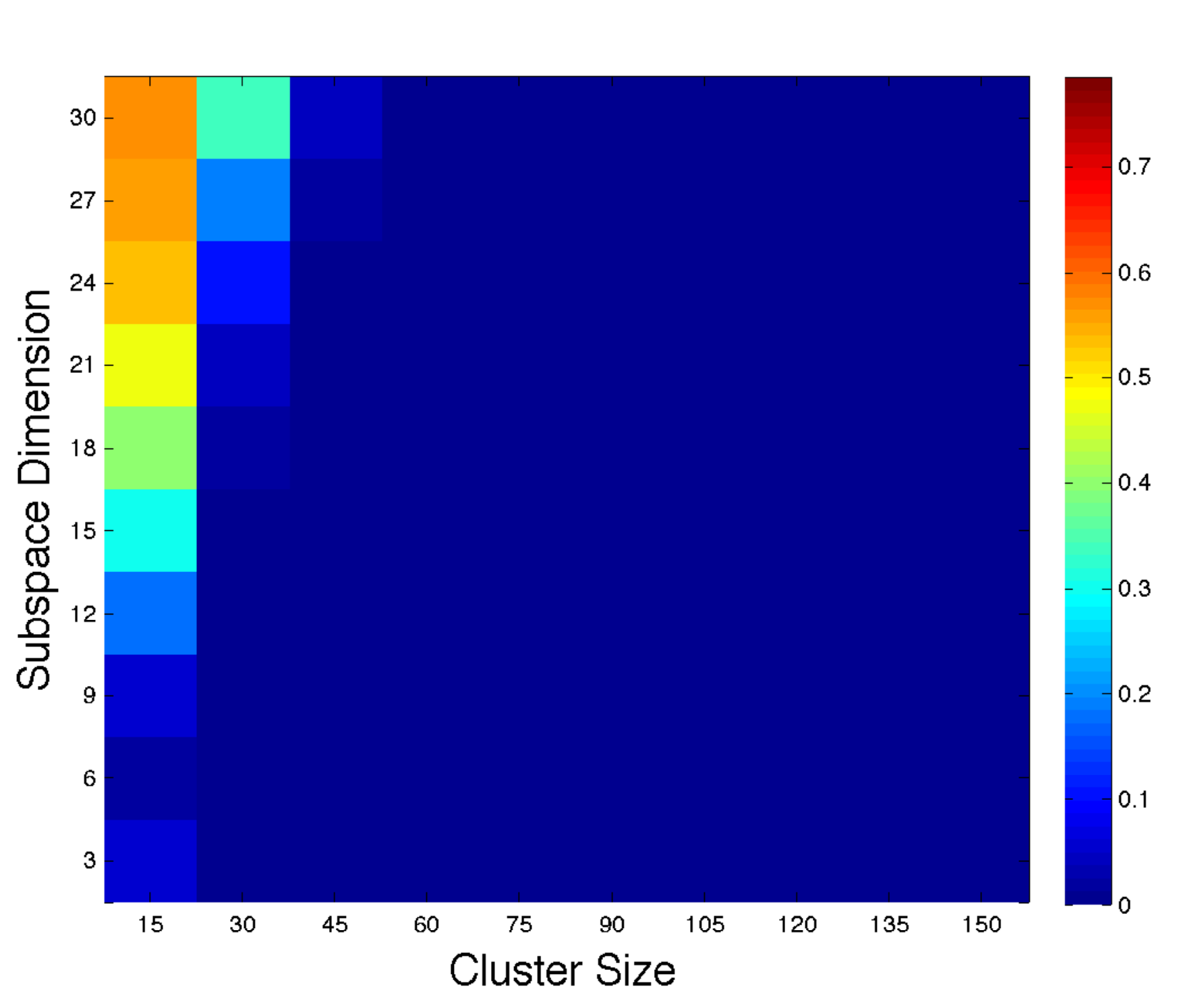}}\\
	\subfloat[kSSC, $D = 30$]{
	\includegraphics[width=0.25\textwidth]{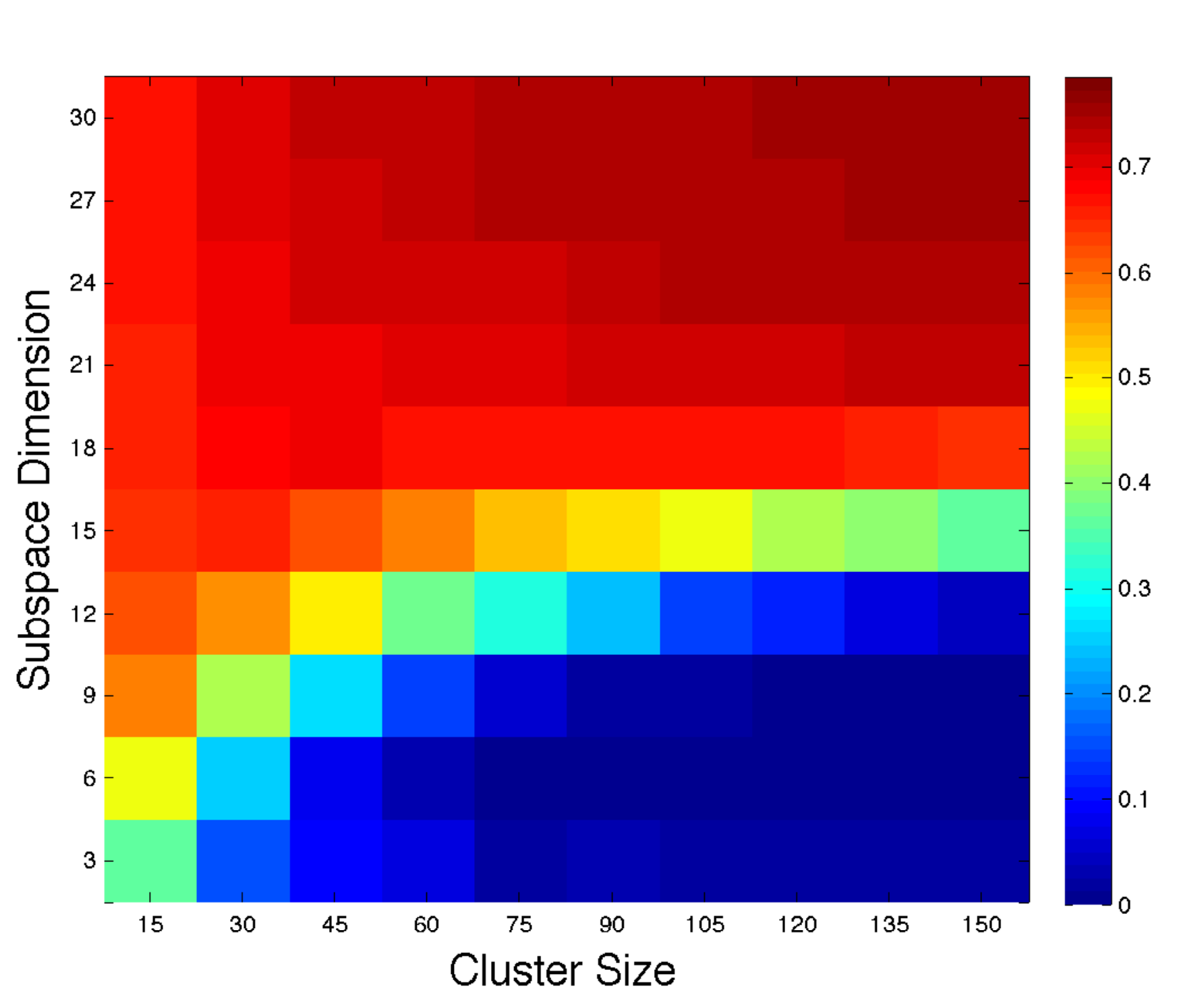}}
	\subfloat[kSSC, $D = 50$]{
	\includegraphics[width=0.25\textwidth]{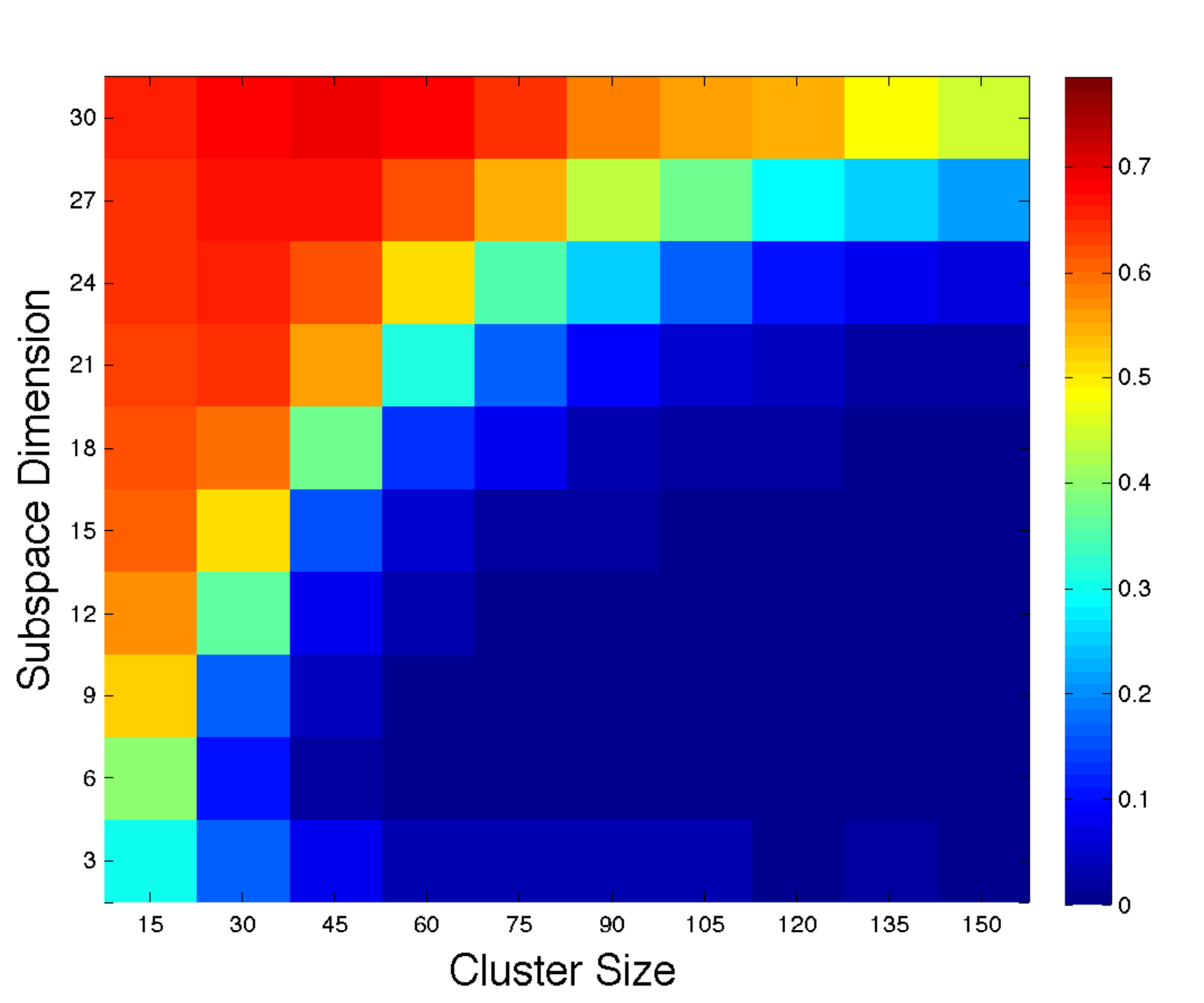}}
	\subfloat[kSSC, $D = 100$]{
	\includegraphics[width=0.25\textwidth]{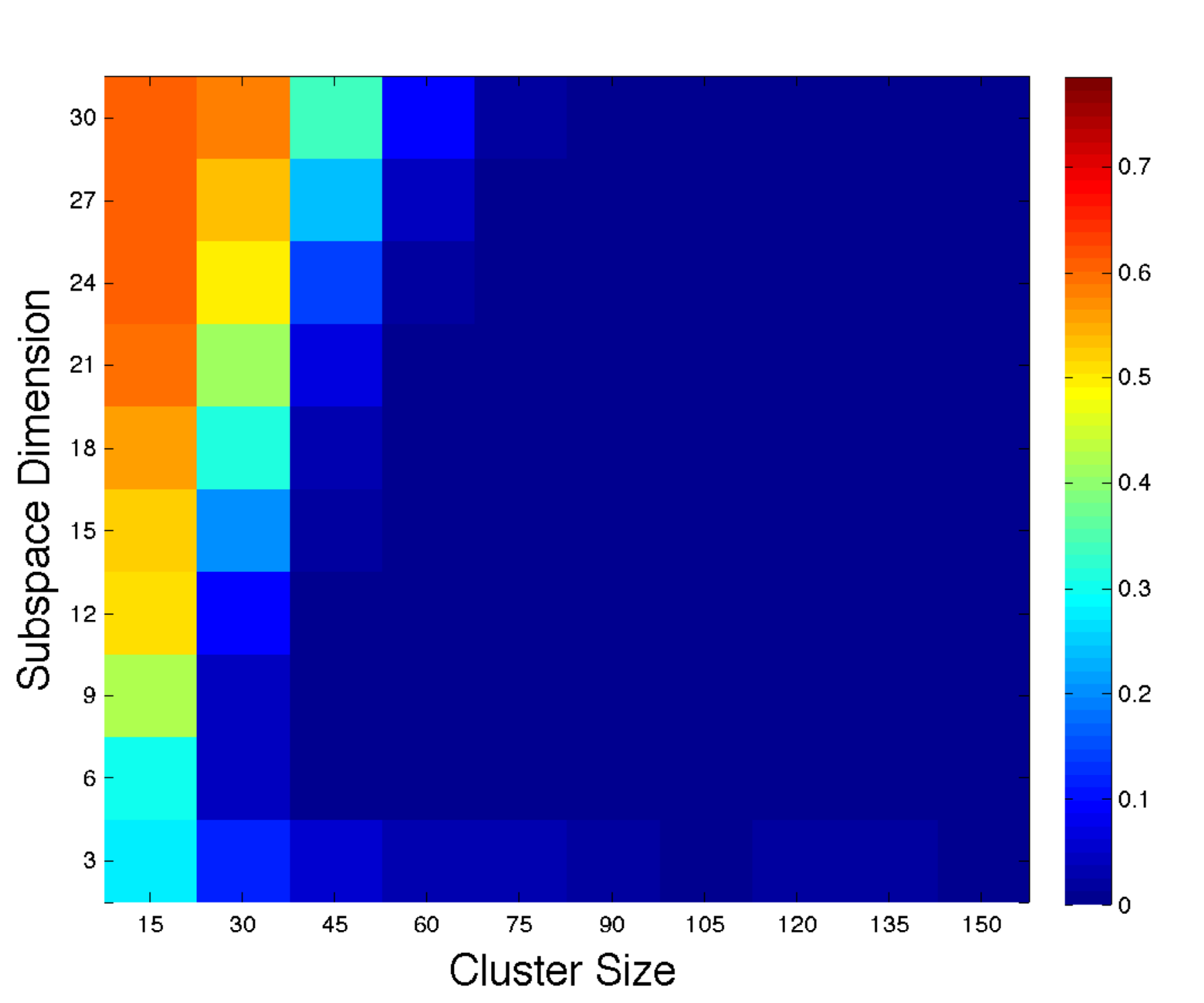}}
\caption{Effect of subspace dimension, cluster size and ambient dimension}
\label{Figure:synthetic_dvsn}
\end{figure}

\subsection{Effect of mean, variance and noise in subspace distribution}

Previous works have only performed empirical evaluation on uniformly distributed synthetic data. That is, the coefficients chosen are uniform random, as was used in the prior subsection. However we contend instead that in reality most data points encountered in the real world are Gaussian distributed over their respective subspace's basis. Furthermore the data points are often corrupted with noise, which we assume will be $\mathcal{N} (0, 1)$.

For this experiment we vary the mean $\mu$ and variance $\sigma^2$ of Gaussian distributed data points using random orthonormal vectors as the basis for each subspace. We create $5$ subspaces with $d_i = 5$, $N_i = 50$ and $D = 50$. For each pair of $\mu$ and $\sigma^2$ we take the mean SCE over $50$ problem instances. We repeat this again for $3$ instances, each time increasing the noise factor, which we report using PSNR. For this experiment we set $k = 10$. The results shown in Figure \ref{Figure:synthetic_mv} that kSSC can match the performance of SSC even when $k \ll d_i$. We note that the effect of mean and variance on point distribution in the subspaces is significantly more pronounced as PSNR decreases.

\begin{figure}[]
\centering
	\subfloat[SSC, PSNR 100]{
	\includegraphics[width=0.25\textwidth]{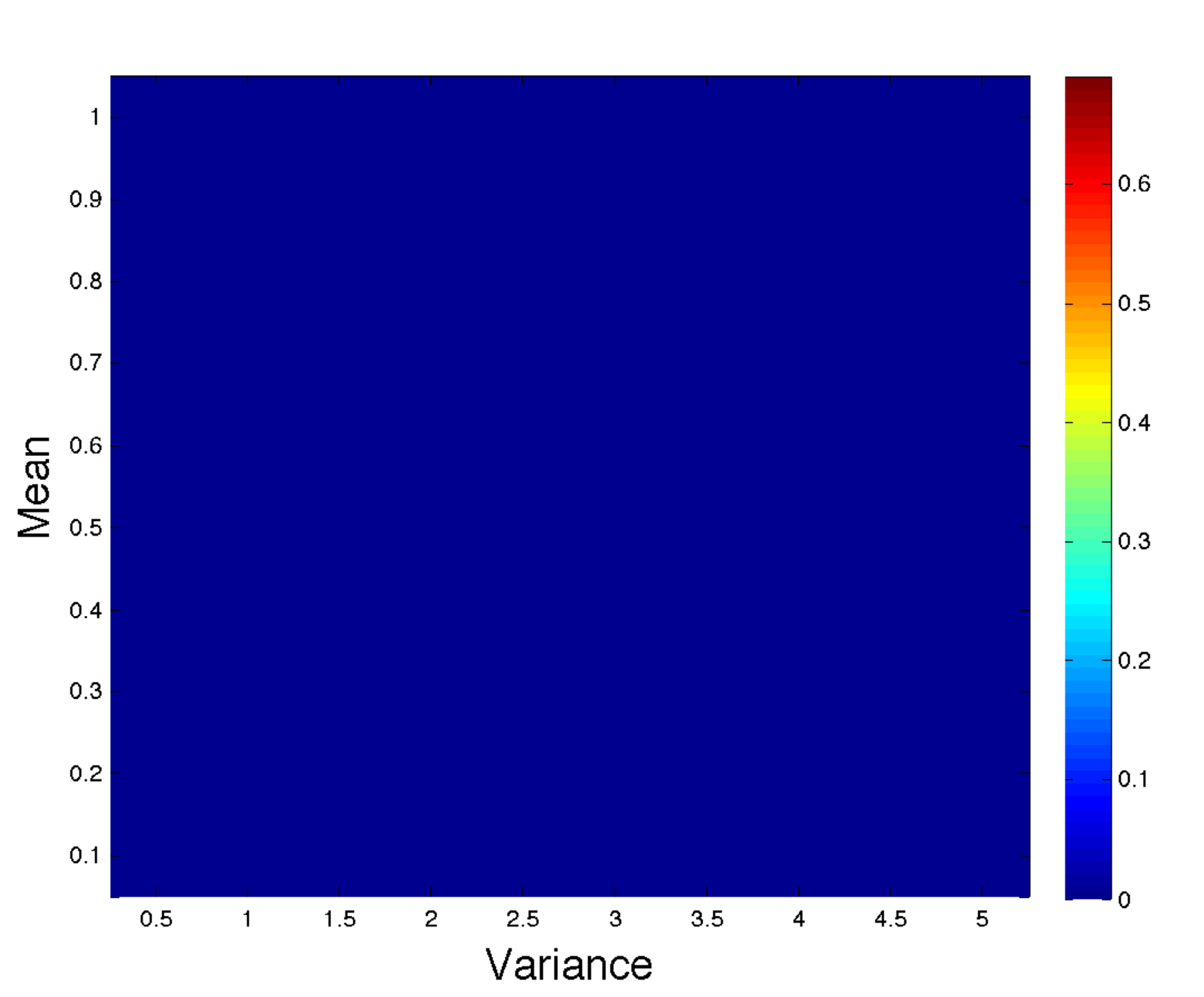}}
	\subfloat[SSC, PSNR 60]{
	\includegraphics[width=0.25\textwidth]{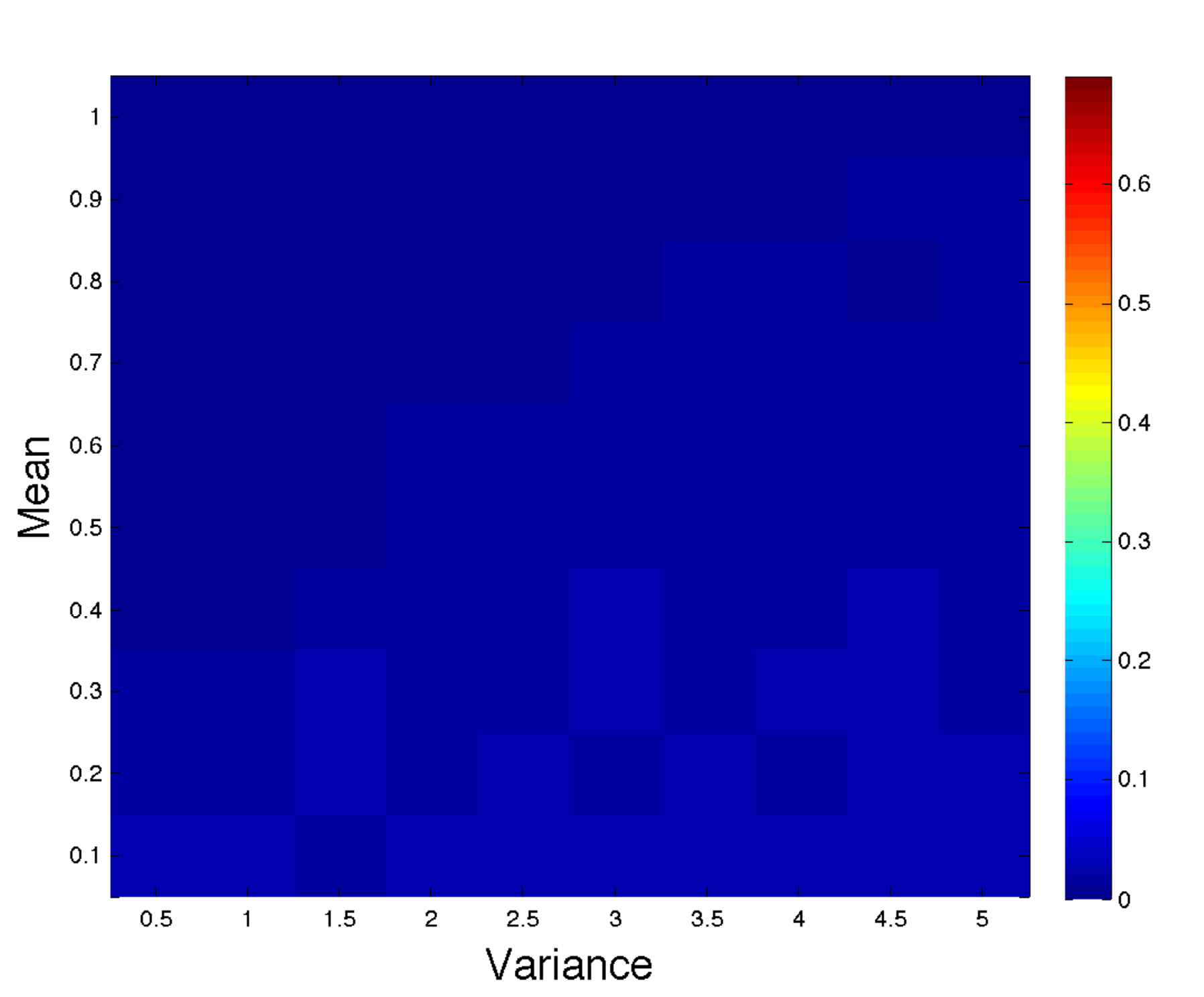}}
	\subfloat[SSC, PSNR 46]{
	\includegraphics[width=0.25\textwidth]{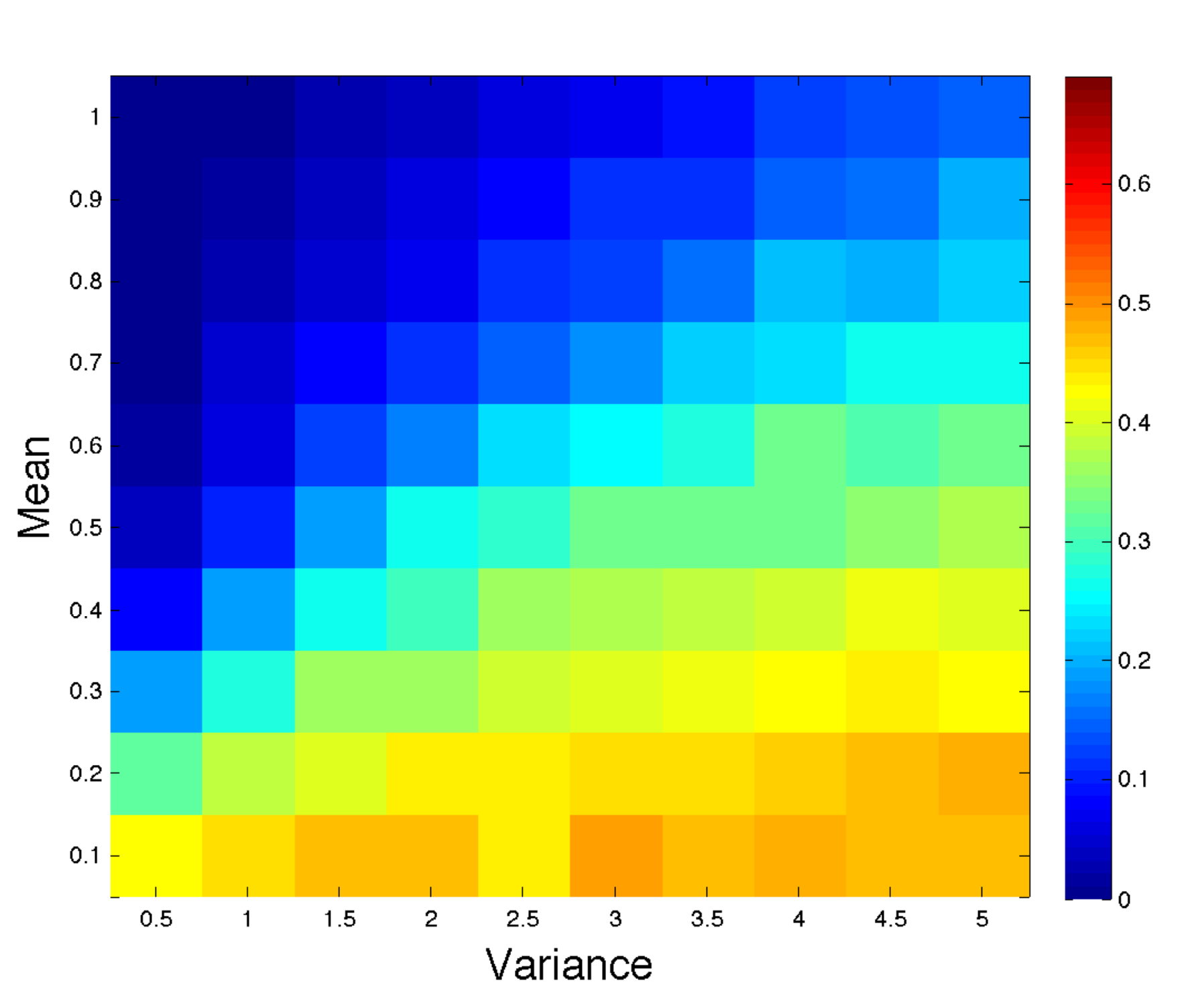}}\\
	\subfloat[kSSC, PSNR 100]{
	\includegraphics[width=0.25\textwidth]{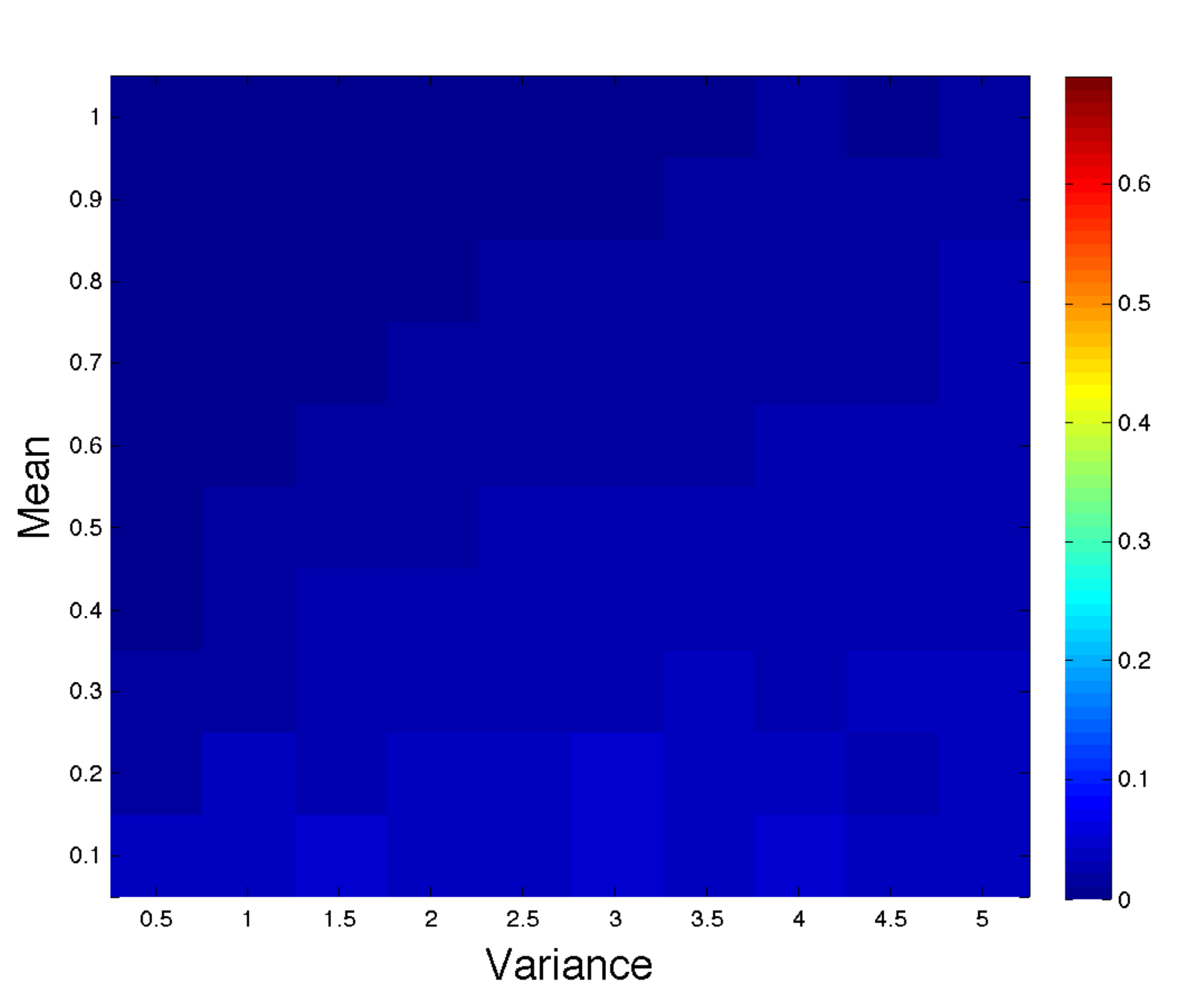}}
	\subfloat[kSSC, PSNR 60]{
	\includegraphics[width=0.25\textwidth]{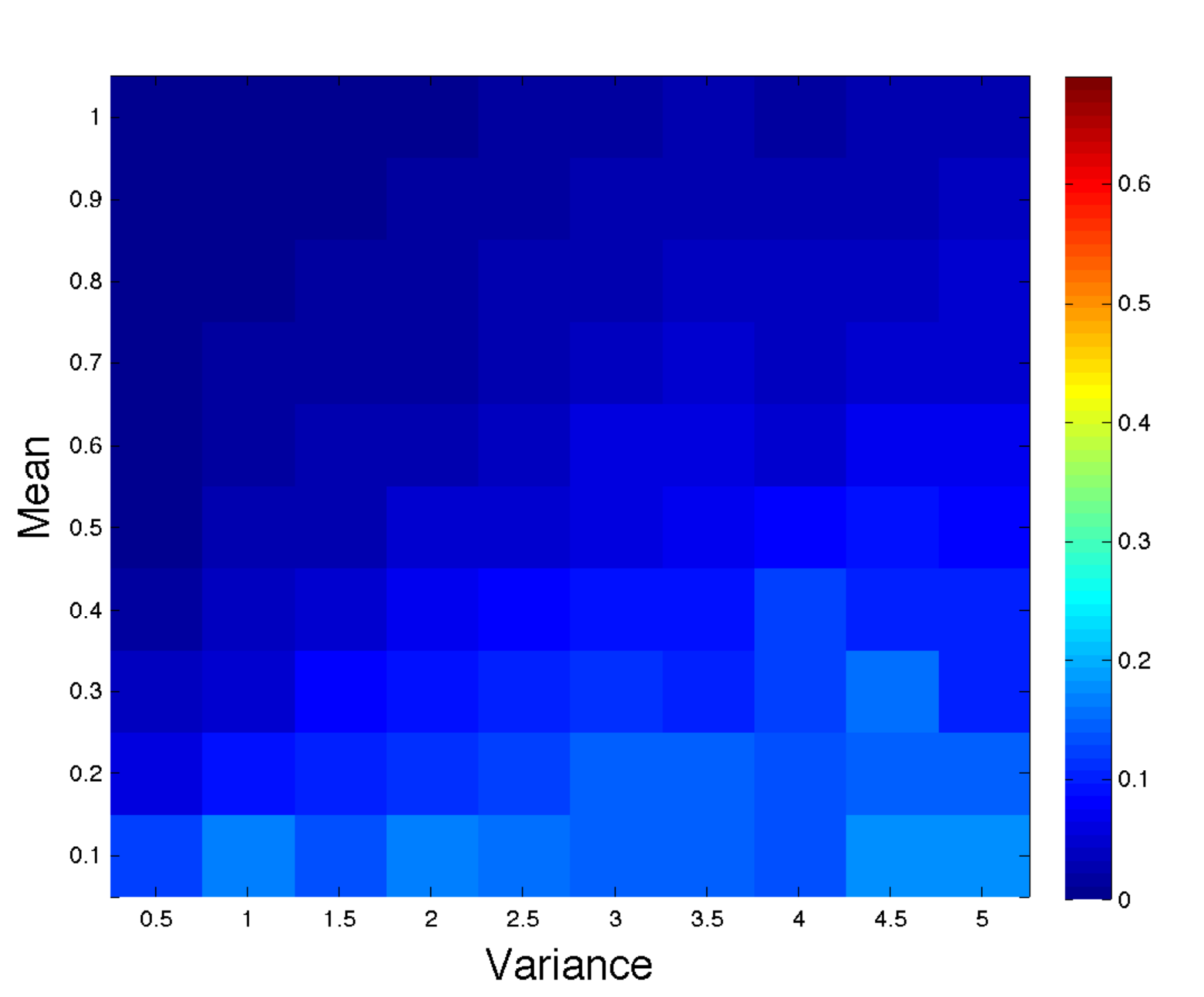}}
	\subfloat[kSSC, PSNR 46]{
	\includegraphics[width=0.25\textwidth]{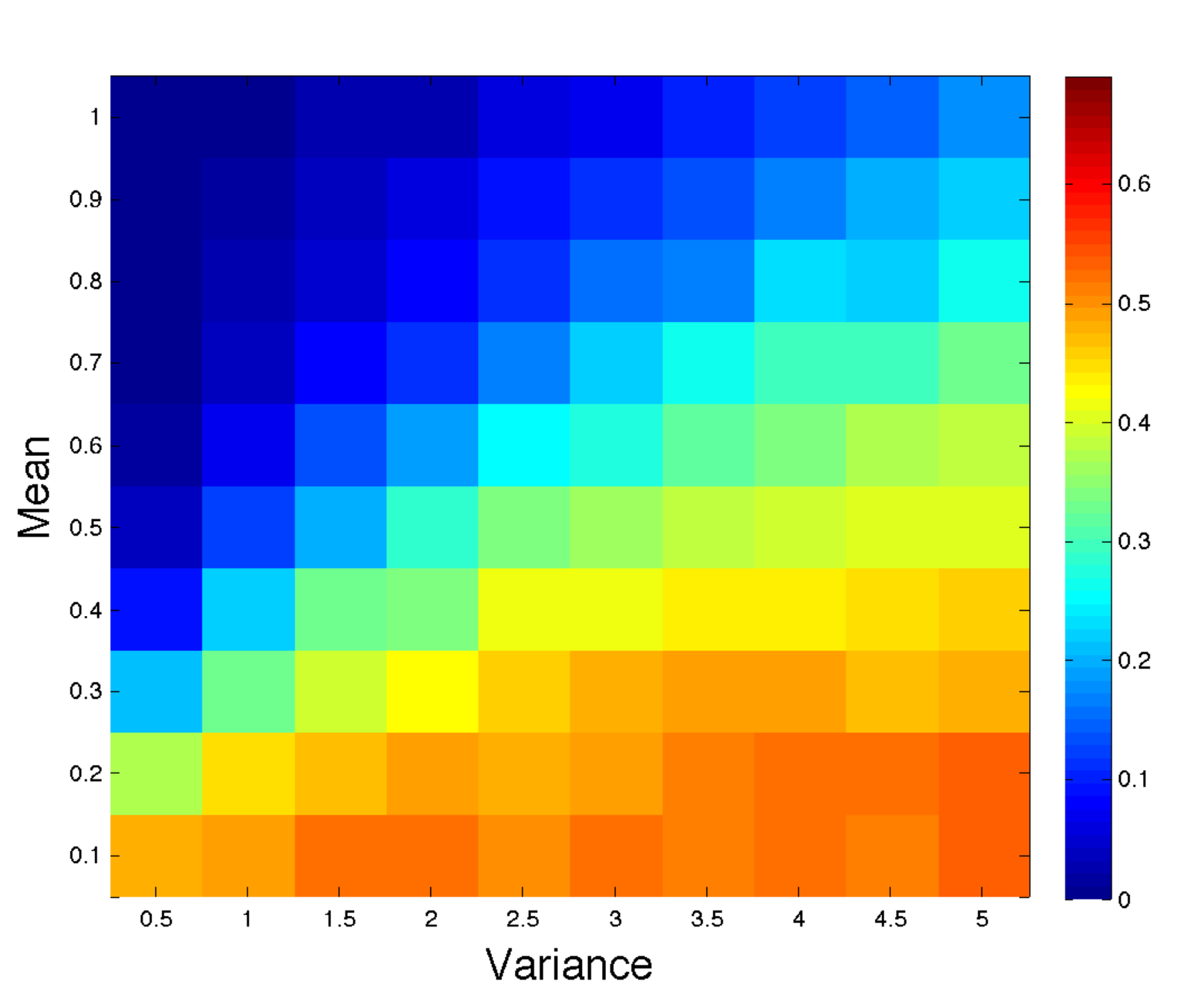}}
\caption{Effect of mean, variance and noise in subspace distribution}
\label{Figure:synthetic_mv}
\end{figure}

\subsection{Effect of subspace intersection}

The intersection of subspaces (shared basis vectors) plays an important role in the clustering accuracy. As previously reported by others, the clustering accuracy decreases as the dimension of intersection increases. To demonstrate this effect and that kSSC can match SSC, we perform the same experiment as found in Section 8.1.1 of \citep{heckel2013robust} and Section 5.1.2 of \citep{SoltanolkotabiCandesothers2012}. We generate two subspaces with $D = 200$, $d_i = 10$ and $N_i = 20d_i$ and vary the number of shared basis vectors $t$ from $0$ to $d_i$. We generate $\mathbf U \in \mathbb R^{D \times 2 d_i - t}$ random orthonormal basis vectors and set the basis vectors for $\mathcal S_1$ to the first $d_i$ columns of $\mathbf U$ and correspondingly the basis for $\mathcal S_1$ to the last $d_i$ columns. We then take the average SCE over $20$ problem instances for each $t$. Results are reported in Figure \ref{Figure:synthetic_intersection}, where we can clearly see that kSSC closely matches the performance of SSC.

\begin{figure}[]
\centering
	\includegraphics[width=0.50\textwidth]{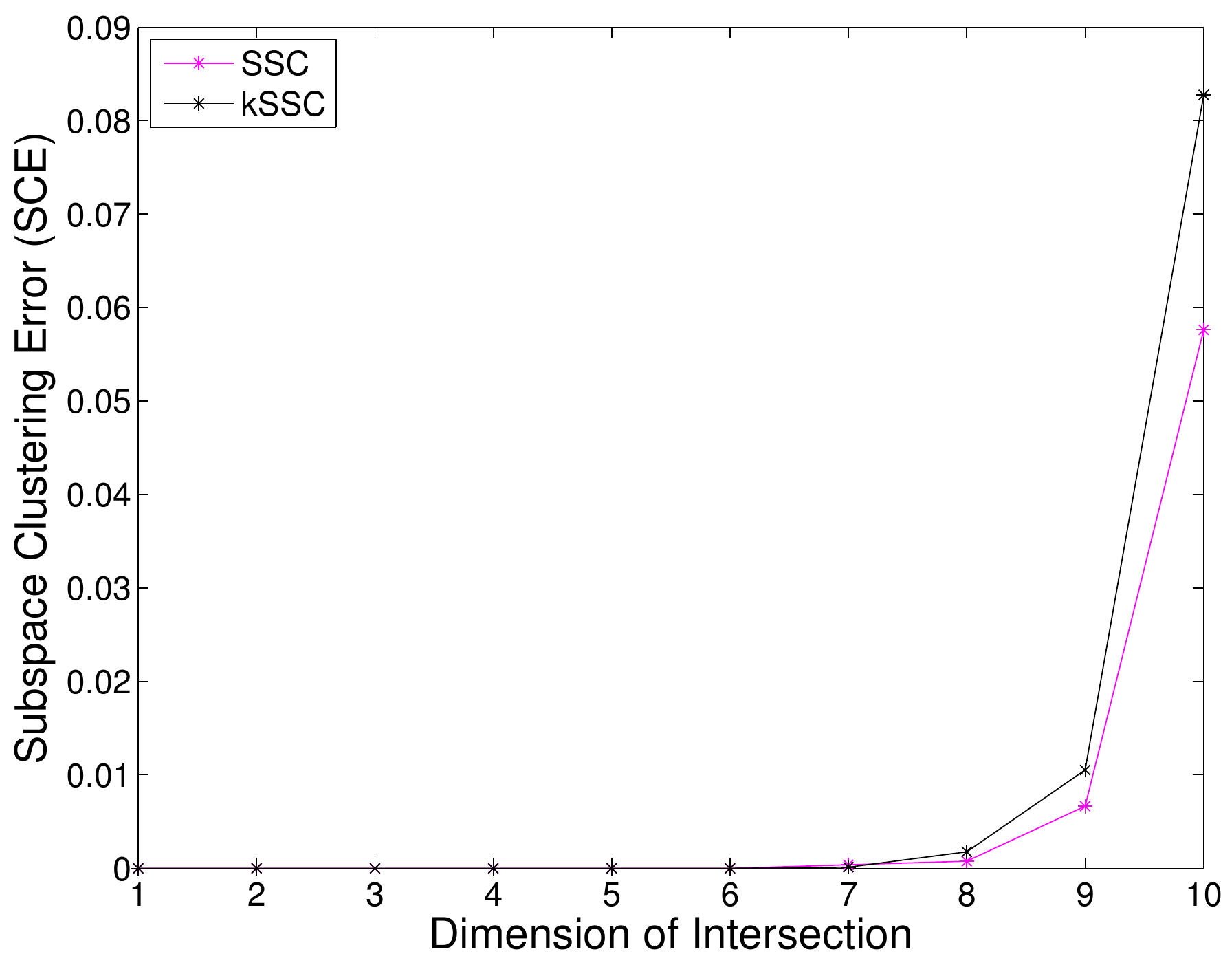}
\caption{Effect of subspace intersection}
\label{Figure:synthetic_intersection}
\end{figure}

\section{Experimental Evaluation}
\label{Section:experiments}

In this section, we evaluate the clustering performance of kSSC on semi-synthetic and real world datasets. We vary the amount of additional noise in some of these experiments to compare the robustness of kSSC against the pre-existing competitor algorithms Greedy Feature Selection (GFS), Greedy Subspace Clustering (GSC), Scalable Sparse Subspace Clustering (SSSC) and Robust Subspace Clustering via Thresholding (TSC). Additionally we use SSC to gauge baseline performance. 


The running times of the experiments carried out in Sections \ref{experiment:tir} and \ref{experiment:motion} can be found in Figure \ref{Figure:tir_motion_time}. Since these experiments are small in size the running time reduction of kSSC is not that significant. However these tests indicate that kSSC matches the clustering accuracy of SSC very closely, an attribute that is not found in other methods. We perform a test in Section \ref{experiment:large_scale} to evaluate the running time of kSSC for a large scale data set.

\begin{figure*}
\centering
\includegraphics[height=0.3\textwidth]{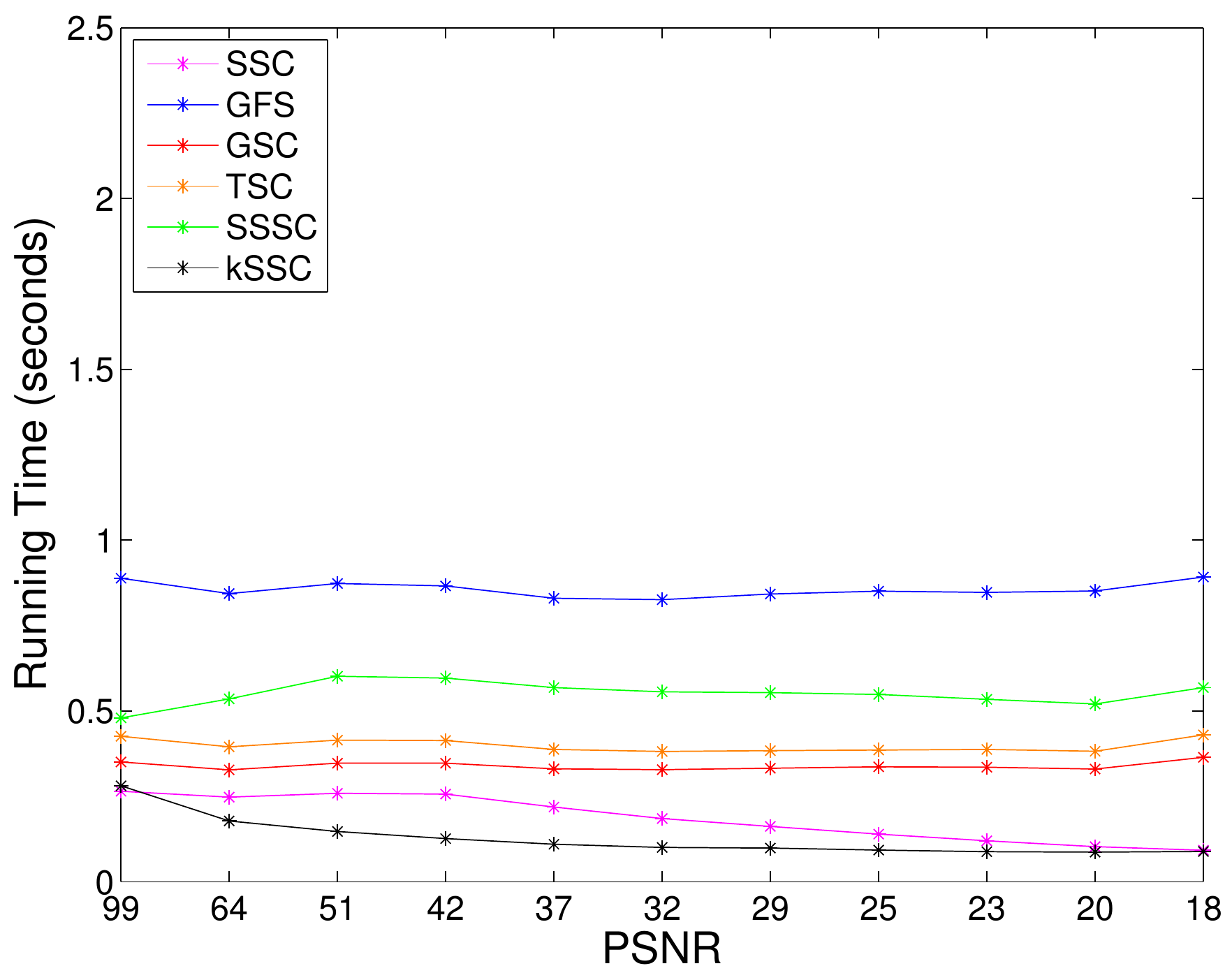}
\includegraphics[height=0.3\textwidth]{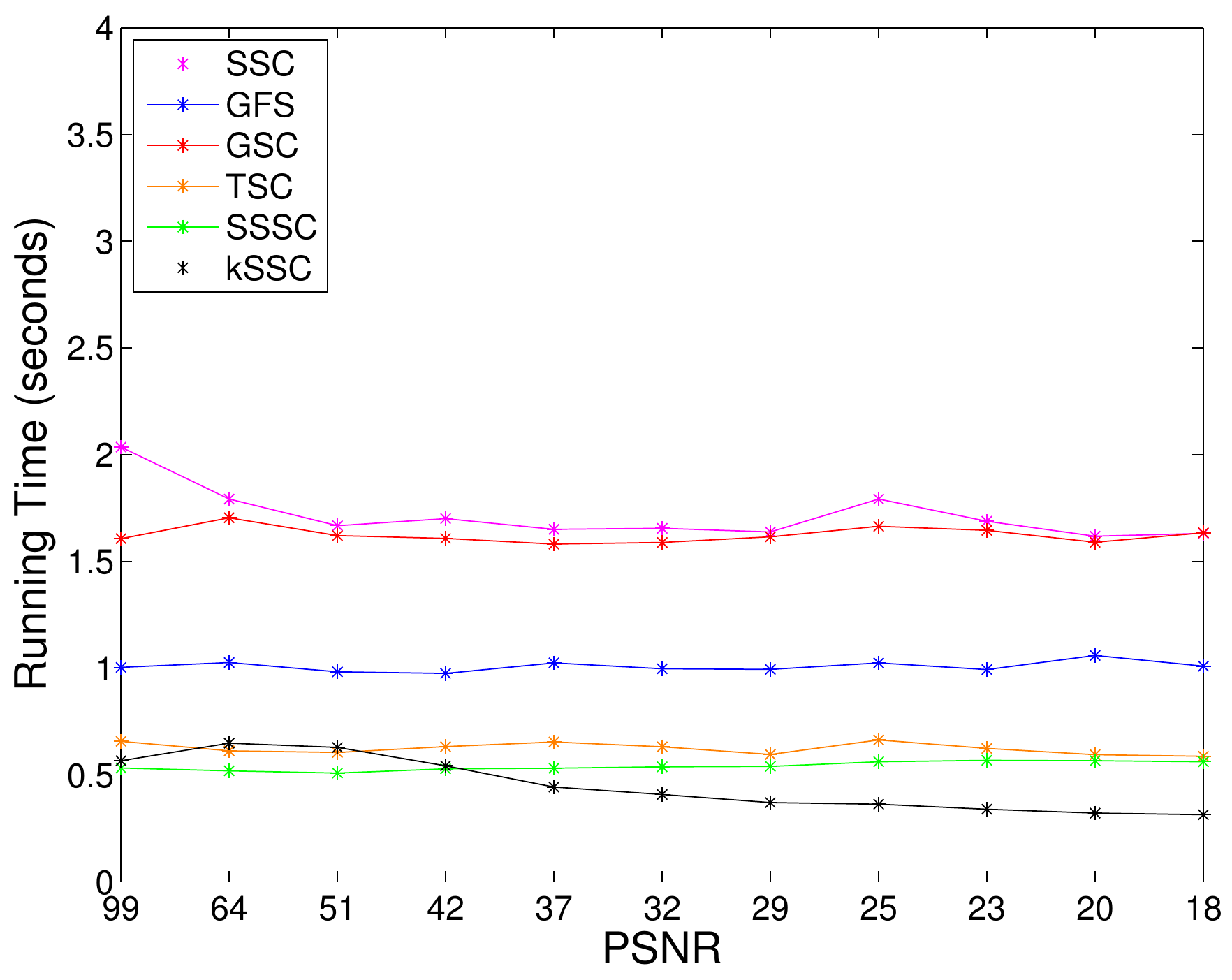}
\caption{Left: Median running time of each tested algorithm for the Thermal Infrared Data Segmentation experiment found in Section \ref{experiment:tir}. Right: Median running time of of each tested algorithm for the Motion Segmentation experiment found in Section \ref{experiment:motion}. Overall in these experiments the benefit of kSSC is slight in comparison to other methods since $N$ is low. We refer to readers Section \ref{experiment:large_scale} for a comparison in running time where $N$ is large.}
\label{Figure:tir_motion_time}
\end{figure*}

\subsection{Thermal Infrared Data Segmentation}
\label{experiment:tir}

We assemble synthetic data from a library of thermal infrared (TIR) hyper spectral mineral data. The library consists of $120$ spectra samples with $D = 321$. We generate $5$ subspaces with $d_i = 5$. For each subspace we randomly select $5$ basis vectors from the spectra samples in the TIR library and generate $50$ points using uniform random coefficients. We then corrupt data with various levels of standard Gaussian noise and evaluate clustering performance of our framework SSC and the Scalable SSC. The experiment is repeated for 50 problem instances for each level of noise to obtain an average SCE. Results can be found in Figure \ref{Figure:tir_results}. kSSC closely tracks the performance of SSC and outperforms all other methods.

\begin{figure}[]
\centering
	\subfloat[Mean Error]{
	\includegraphics[width=0.25\textwidth]{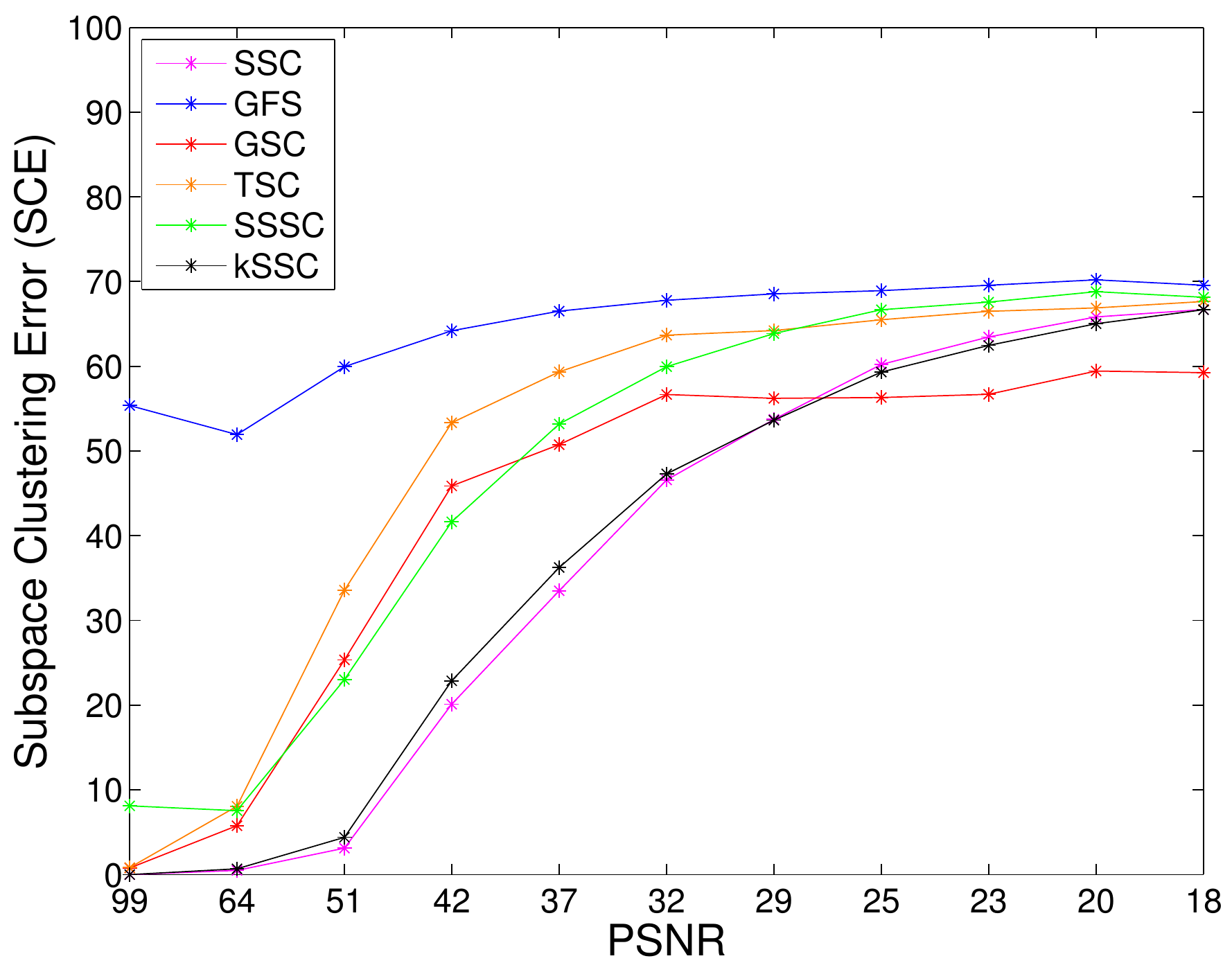}}
	\subfloat[Median Error]{
	\includegraphics[width=0.25\textwidth]{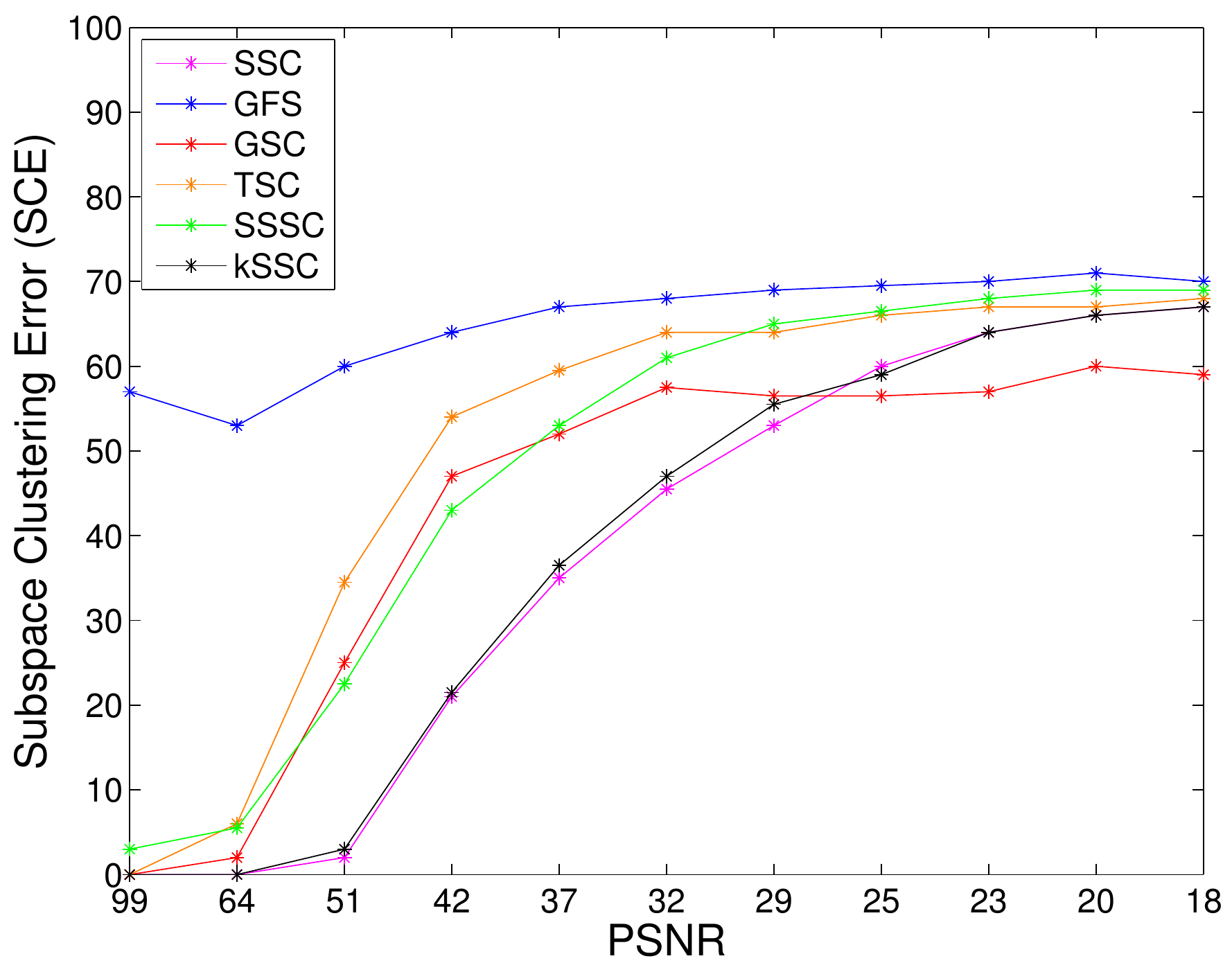}}
	\subfloat[Max Error]{
	\includegraphics[width=0.25\textwidth]{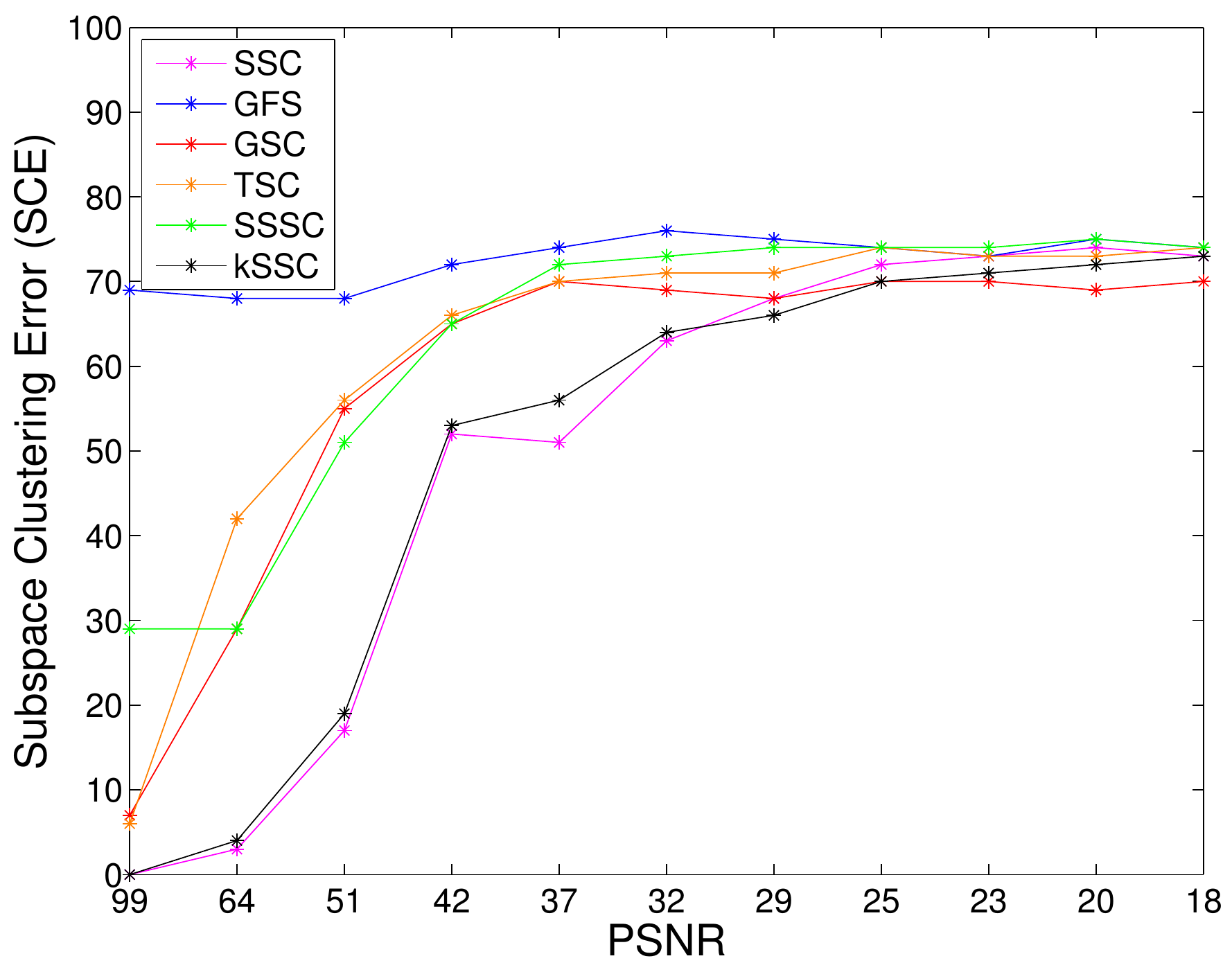}}
	\subfloat[Min Error]{
	\includegraphics[width=0.25\textwidth]{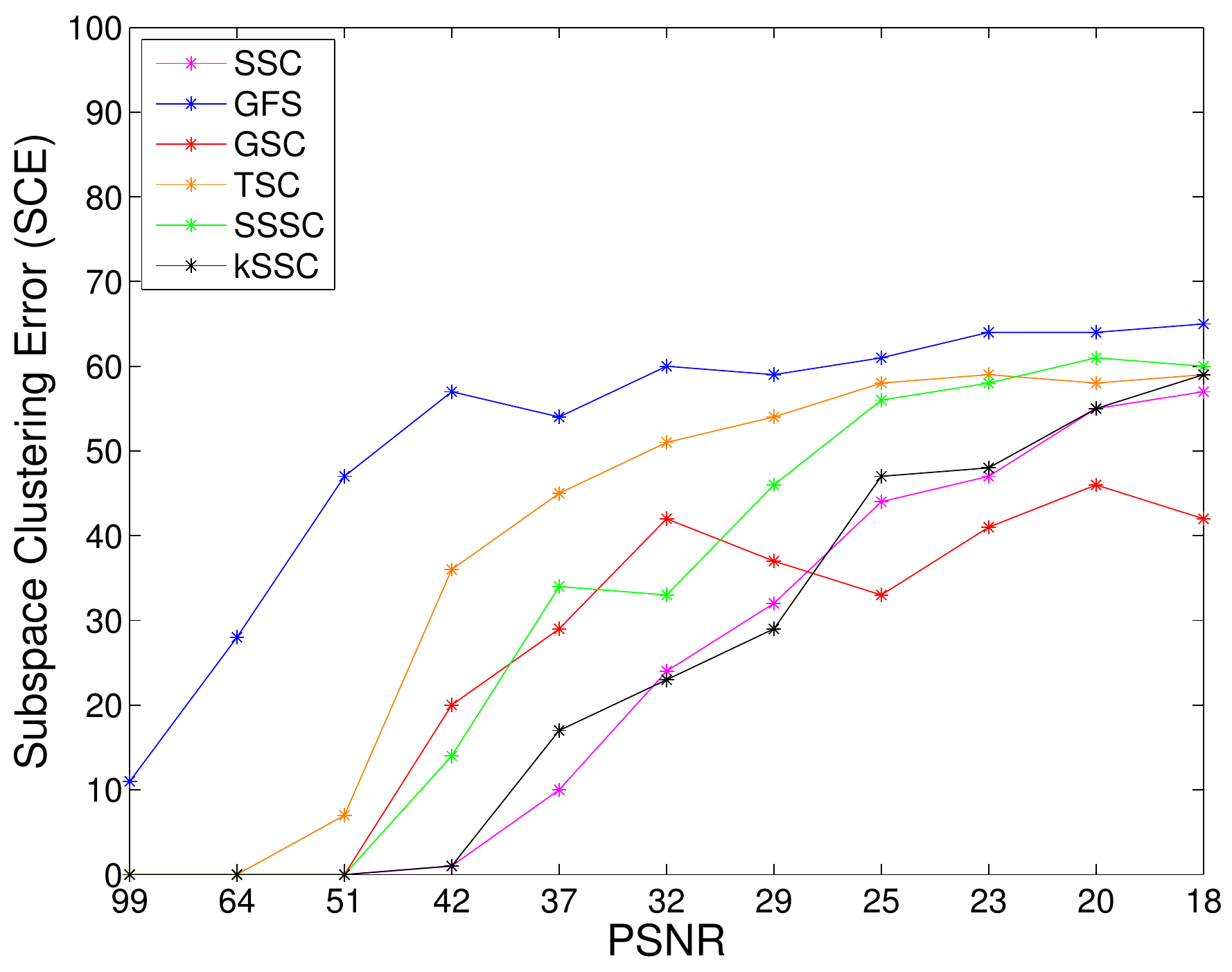}}
	
\caption{Semi-synthetic Hyperspectral TIR}
\label{Figure:tir_results}
\end{figure}

\subsection{Large Scale Thermal Infrared Segmentation}
\label{experiment:large_scale}

The main goal of kSSC is to maintain SSC's clustering accuracy but in a fraction of the time. To confirm this ability, we create a large scale semi-synthetic dataset from the TIR data used in the previous subsection. We generate data in a similar fashion to the previous section. However for each subspace we generate $N_i$ points using uniform random coefficients where we vary $N_i$ from $100$ to $4000$. For this experiment, we stop SSC, GFS and GSC early since they do not scale well in this application (see Section \ref{Section:background}). From Figure \ref{Figure:large_results} we find that kSSC has similar run time characteristics to TSC and SSSC.

\begin{figure}[]
\centering
	\includegraphics[width=0.5\textwidth]{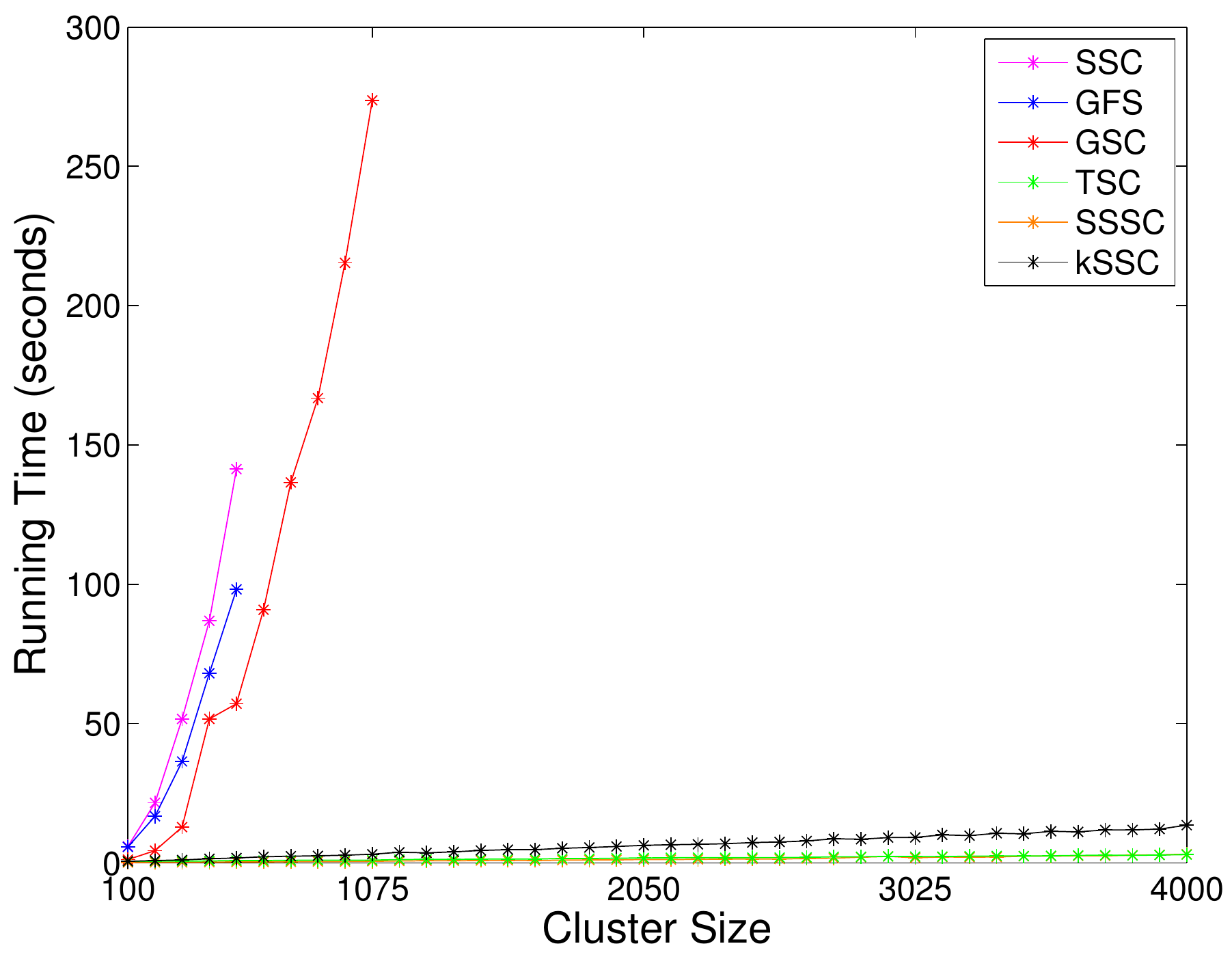}
\caption{Running time of Large Scale Experiment}
\label{Figure:large_results}
\end{figure}

\subsection{Hopkins 155 Motion Segmentation}
\label{experiment:motion}

The aim of this experiment is to assign feature points extracted from a video to their corresponding motion or object in the scene. As previously mentioned, it has been shown in \cite{elhamifar2012sparse} that these features trajectories actually correspond to low-dimensional subspaces.  The data from this experiment is drawn from the rigid motion sequences of the Hopkins 155 dataset \cite{tron2007benchmark}. These sequences have around $200$-$500$ feature trajectories and range in number of frames from $20$-$60$. Results can be found in Figure~\ref{Figure:tir_results2}.. Again kSSC closely tracks the performance of SSC and consistently performs as the PSNR decreases.


\begin{figure}[]
\centering
	\subfloat[Mean Error]{
	\includegraphics[width=0.25\textwidth]{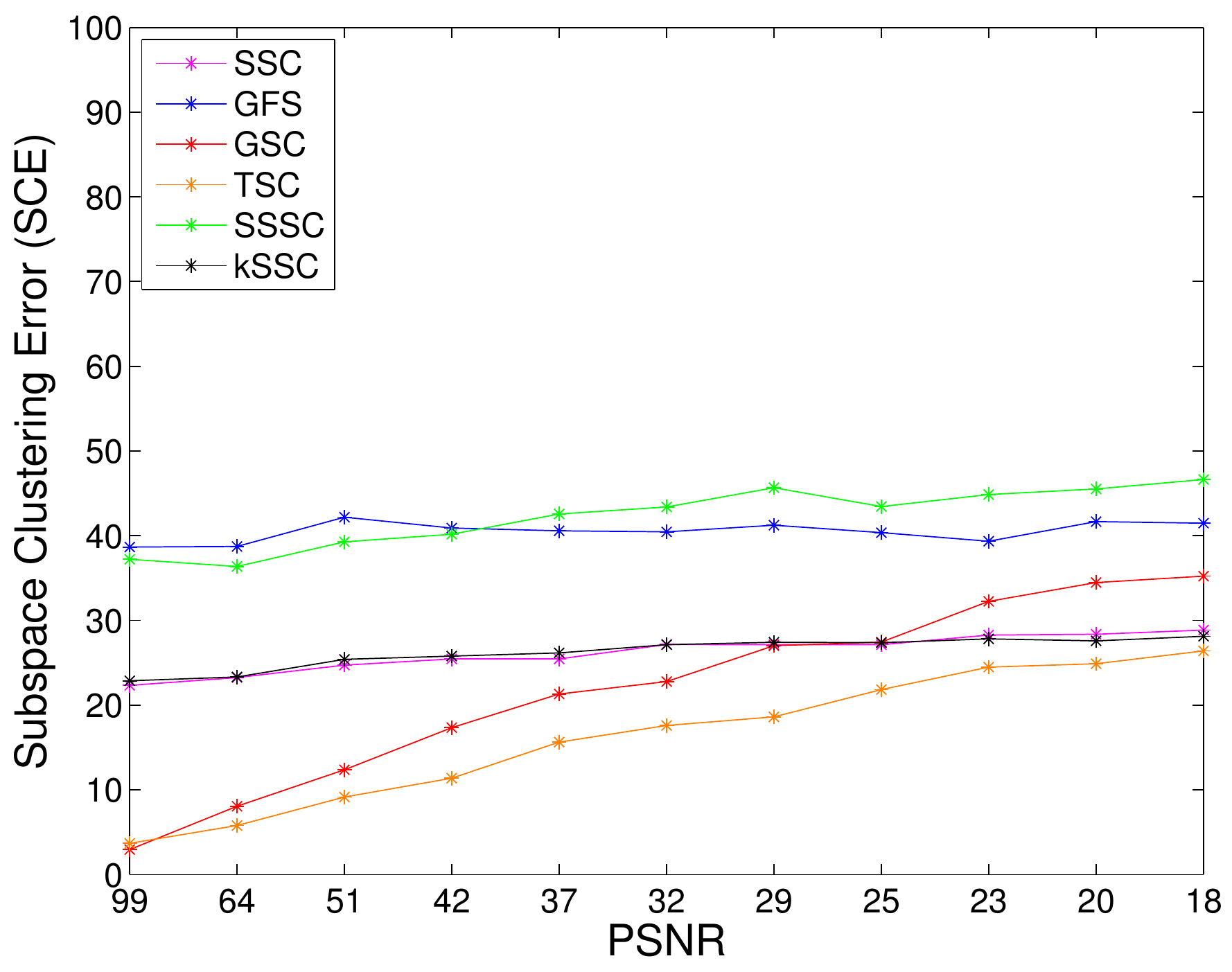}}
	\subfloat[Median Error]{
	\includegraphics[width=0.25\textwidth]{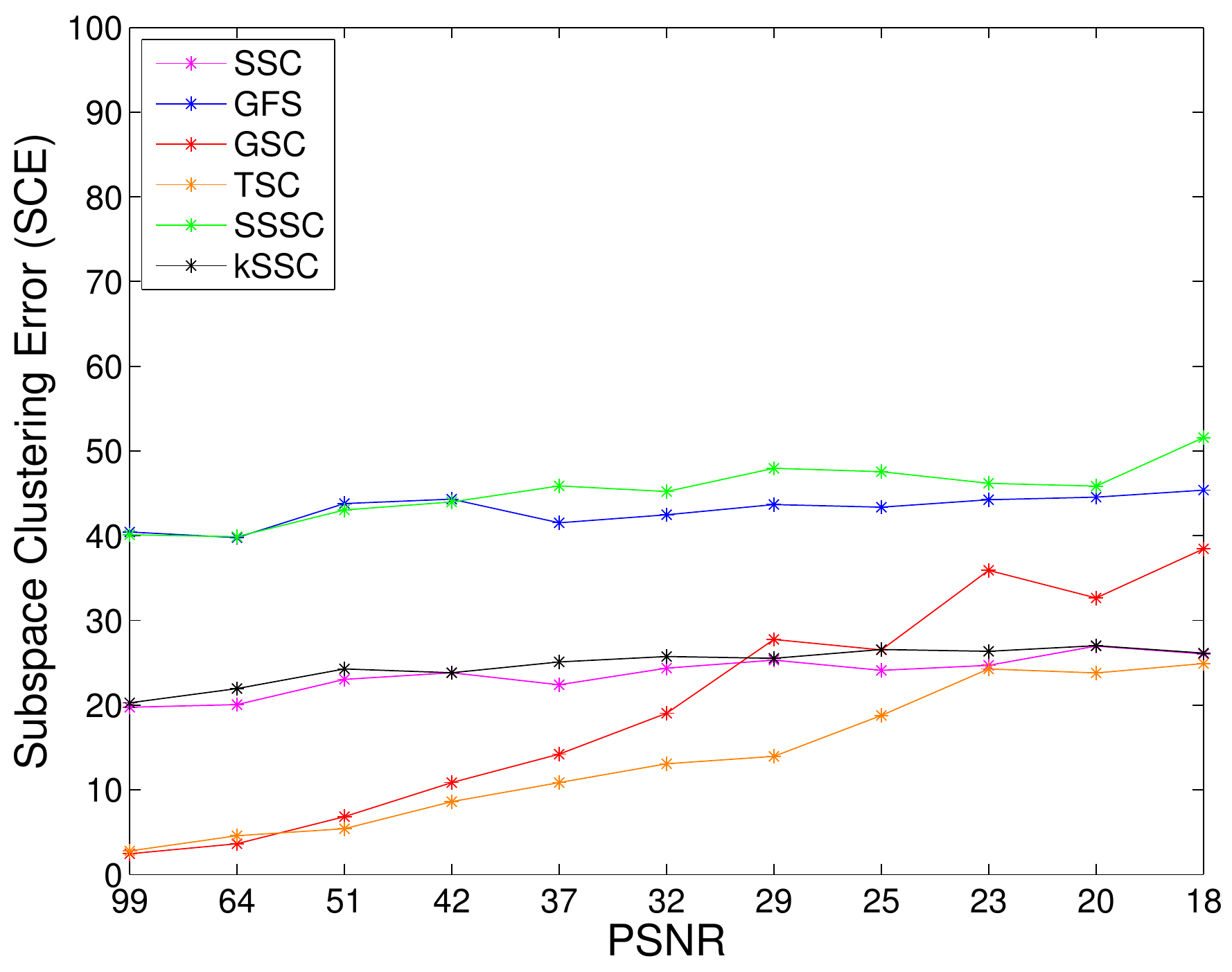}}
	\subfloat[Max Error]{
	\includegraphics[width=0.25\textwidth]{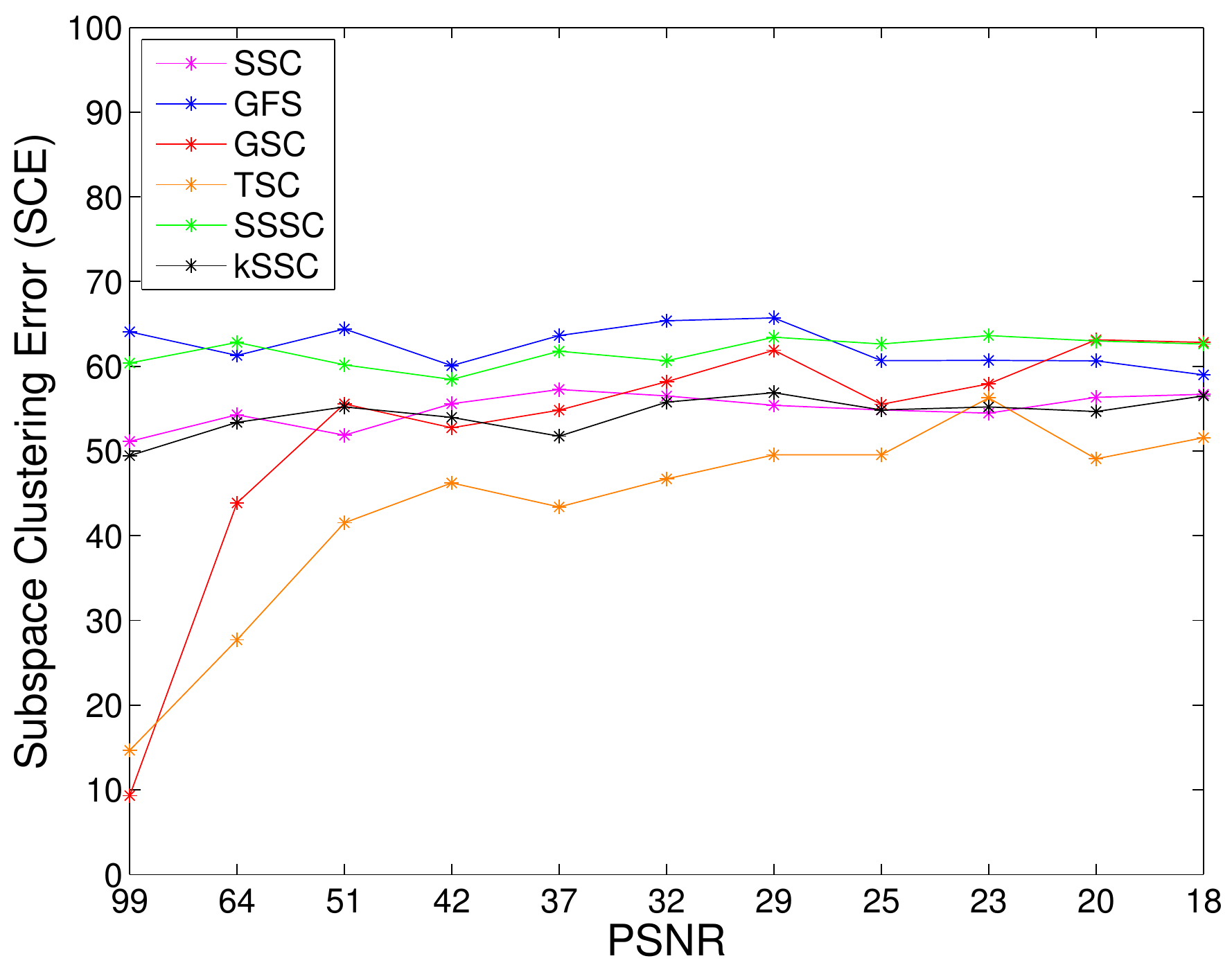}}
	\subfloat[Min Error]{
	\includegraphics[width=0.25\textwidth]{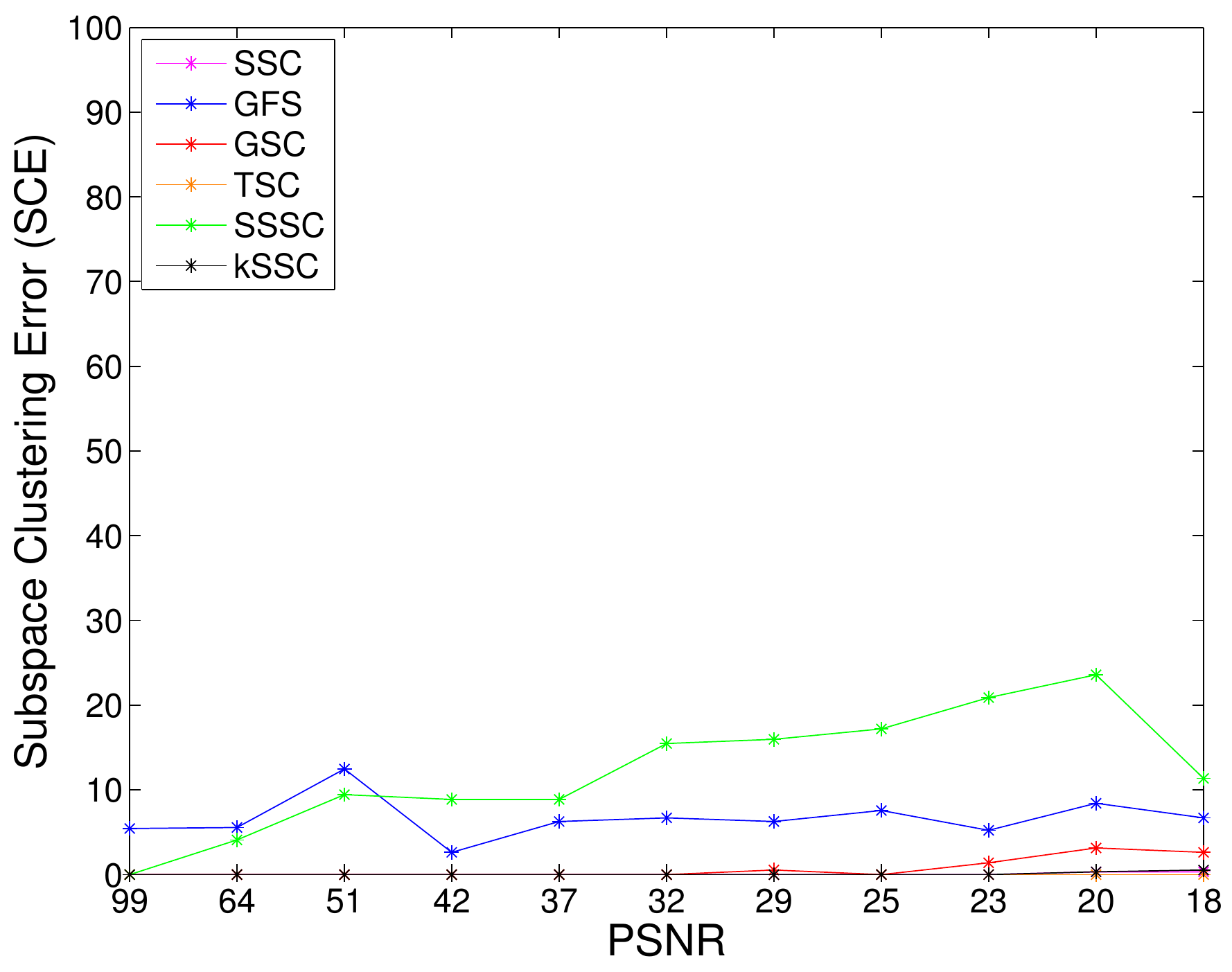}}
	
\caption{Rigid Motion Segmentation}
\label{Figure:tir_results2}
\end{figure}


\subsection{Extended Yale B Face Clustering}
\label{experiment:faces}

\begin{table*}
\centering
\begin{tabular}{c c c c c c c}
\hline
 & Mean & Median & Min & Max & Std & Mean Run Time (s) \\
\hline

SSC & 28.1\% & 31.5\% & \hl{0.0\%} & 65.6\% & 24.5\% & 21.38 \\ 
GSC & 30.6\% & 29.9\% & 0.5\% & \hl{55.2\%} & 15.6\% & 30.38 \\ 
TSC & 58.1\% & 61.2\% & 36.5\% & 66.1\% & \hl{7.3\%} & 7.59 \\ 
SSSC & 59.5\% & 58.6\% & 31.8\% & 100.0\% & 17.2\% & \hl{0.66} \\ 
kSSC & \hl{22.3\%} & \hl{17.2\%} & \hl{0.0\%} & 65.1\% & 19.3\% & 30.38 \\ 

\end{tabular}

\caption{Face Clustering results from the Extended Yale B Dataset.}
\label{Table:face_results}
\end{table*}

The aim of this experiment is to cluster unique human subjects from a set of face images. We draw our data from the Exteded Yale Face Database B \cite{georghiades2001few}. The dataset consists of approximately 64 photos of 38 subjects under varying illumination. We select three subjects randomly then resample their images to $96 \times 84$ and form data vectors $\mathbf x_i \in \mathbb{R}^{2016}$ by concatenating them together. This test was repeated 50 times with new random subjects each time. This is a challenging dataset since the original data is already corrupted by shadows from the varied illumination. Results can be found in Table \ref{Table:face_results}.  Surprisingly we find that kSSC even outperforms SSC for this task, which may be due to the aggressive kNN pre-screening process removing all but the most similar data points. In this dataset, there are many face images of different subjects that contain large regions of highly similar data due to the extreme occlusions from shadows. We believe the nearest neighbour filtering selection helps to prevent the possibility of extreme false positives connections in $\mathbf Z$.

\section{Conclusion}
\label{Section:conclusion} 

In this paper we proposed a new algorithm, kSSC, to accurately and tractably approximate SSC for large scale datasets. By accurately screening out the vast majority of eligible data points as neighbours the memory and computational requirements are reduced from $\BigO{N^2}$ to $\BigO{N}$. Our theoretical analysis shows that we can theoretically match the subspace identification performance of SSC provided that we have sufficient sampling and the magnitude of the noise is not too great. Moreover our empirical results on synthetic and real data demonstrate that kSSC outperforms the existing SSC approximation methods in terms of accuracy and matches or beats the computational and memory requirements.

\section*{Acknowledgment}

The research project is supported by the Australian Research Council (ARC) through grant DP140102270. It is also partly supported by the  Natural Science Foundation of China (NSFC) through project 41371362.

\section*{References}

\bibliography{references}

\appendix

\section{Analysis of True Discoveries by kNN}
\label{Appendix:Analysis}

We need the concentration of measure Theorem 1 in \citep{BerestyckiNickl2009}
\begin{thm}\label{thm:concentrationmeasure}
Suppose that $Z\in\Real^d$ is uniformly distributed on unit sphere $\mathbb S^{d-1}$ and A is any subset with $P(A)\le 1/2$. $A_\epsilon$ is the $\epsilon$-neighborhood as
\[
A_\epsilon = \{ \vc x \in \mathbb S^{d-1} : d(\vc x,\vc y) <\epsilon \text{ for some }\vc y\in A\}
\]
then $Z$ belongs to $A_\epsilon$ with some probability, i.e.
\[
P(Z \in A_\epsilon) \le 1- e^{-(d-2)\epsilon^2/2}.
\]
\end{thm}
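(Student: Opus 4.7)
The plan is to derive the bound from the classical L\'evy--Gromov spherical isoperimetric inequality, which I would cite rather than reprove. That inequality asserts that among all Borel subsets of $\mathbb{S}^{d-1}$ of a prescribed spherical measure, a spherical cap minimises (respectively, maximises, depending on which side one takes) the measure of the $\epsilon$-enlargement. Given a subset $A$ at the extremal measure regime considered in the theorem, this lets me replace $A$ by a concrete spherical cap and then carry out the estimate explicitly on that cap.

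First I would reduce to the case where $A$ is a hemisphere. By the isoperimetric inequality, if $C$ is the spherical cap with $P(C)=P(A)$, then $P(A_\epsilon)\ge P(C_\epsilon)$, and in the balanced regime $P(A)=1/2$ we may take $C=H$ where $H=\{\vc x\in\mathbb{S}^{d-1}:x_d\ge 0\}$ is the closed hemisphere; the other measure cases are dispatched by monotonicity of $P(C_\epsilon)$ in the cap size (and, if needed to match the direction of inequality in the statement, by passing to complements, since $A$ and $A^c$ stand in a dual relationship under the $1/2$-threshold).

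Second I would compute $P((H_\epsilon)^c)$ explicitly. The set $(H_\epsilon)^c$ is itself a spherical cap of geodesic radius $\pi/2-\epsilon$, so in the polar parametrisation $x_d=\sin\theta$ its measure becomes a normalised integral of $\cos^{d-2}\theta$ over an interval near the south pole. I would estimate this ratio using $\cos\theta\le e^{-\theta^2/2}$ together with a Gaussian-type tail bound for the normalising integral, which produces the exponent $-(d-2)\epsilon^2/2$ and hence the claimed constant.

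The main obstacle is not the integral estimate, which is routine, but the L\'evy--Gromov isoperimetric inequality itself. Its proof proceeds either by two-point symmetrisation on the sphere or by Gromov's comparison argument that exploits the positive lower Ricci curvature bound of $\mathbb{S}^{d-1}$; neither route is short. For this proposal I would treat that inequality as a black box and contribute only the reduction to the hemisphere extremal and the explicit cap integration that delivers the precise constant $(d-2)/2$ appearing in the statement.
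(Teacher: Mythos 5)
The paper does not actually prove this result --- it is quoted verbatim as Theorem~1 of Berestycki and Nickl --- so the only available comparison is with the standard proof of L\'evy's spherical concentration inequality, which is exactly what you outline: L\'evy--Gromov isoperimetry taken as a black box, reduction to a hemisphere $H$, and an explicit estimate of the polar cap $(H_\epsilon)^c$ of geodesic radius $\pi/2-\epsilon$ via $\cos^{d-2}\theta\le e^{-(d-2)\theta^2/2}$. That part of the plan is sound and would deliver the classical statement with the constant $(d-2)/2$.

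The genuine gap is the direction of the inequality, and the ``passing to complements'' you gesture at cannot close it. Isoperimetry says caps \emph{minimise} the measure of the $\epsilon$-enlargement among sets of given measure, so your argument yields the lower bound $P(A_\epsilon)\ge P(H_\epsilon)\ge 1-e^{-(d-2)\epsilon^2/2}$ under the hypothesis $P(A)\ge 1/2$. The theorem as printed asserts the reverse: an upper bound $P(A_\epsilon)\le 1-e^{-(d-2)\epsilon^2/2}$ under $P(A)\le 1/2$, and that claim is false for general $A$ --- take $A$ to be a finite subset whose $\epsilon$-neighbourhoods cover $\mathbb S^{d-1}$, so that $P(A)=0\le 1/2$ while $P(A_\epsilon)=1$. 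Complementation does not repair this: $(A_\epsilon)^c=\{\vc x : d(\vc x,A)\ge\epsilon\}$ is the set of points lying deep inside $A^c$, which is neither $(A^c)_\epsilon$ nor its complement, so the ``dual relationship'' between $A$ and $A^c$ at the $1/2$ threshold that you invoke does not exist. Note also that the only way the theorem is used downstream (in Theorem~\ref{thm:kconcentration}) is to upper-bound the measure of a single small cap $A_\epsilon$ with $A$ essentially a point; that bound follows from your cap integration alone and needs neither isoperimetry nor the $P(A)\le 1/2$ hypothesis. So the honest outcomes of your plan are either the correctly oriented L\'evy inequality (flagging that the printed statement has both inequalities reversed) or a direct small-cap measure bound; the statement exactly as printed is not provable.
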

Then we obtain the concentration of samples in a small patch when the number of samples is large.
\begin{thm}\label{thm:kconcentration}
Let $A_\epsilon$ be the small $\epsilon$-neighborhood defined previously in $\Real^d$ on $\mathbb S^{d-1}$, and $N$ the number of random variables uniformly distributed on $\mathbb S^{d-1}$. For $N = \frac{k_0}{\epsilon^2} e^{(d-2)\epsilon^2/2}$, we have
\[
P(K > k_0/C) \ge 1 - e^{-k_0}(eC)^{k_0/C}.
\]
where $k_0$ and $C>1$ are some constant.
\end{thm}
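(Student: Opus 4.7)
My plan is to identify $K$ as the number of the $N$ i.i.d.\ uniform samples on $\mathbb{S}^{d-1}$ that land in the patch $A_\epsilon$, so that $K \sim \mathrm{Binomial}(N,p)$ with $p := P(Z \in A_\epsilon)$, and then to bound the lower tail $P(K \le k_0/C)$ by the Poissonised multiplicative Chernoff inequality. Passing to the complement of this event yields exactly the bound in the theorem.

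The first step is to turn Theorem~\ref{thm:concentrationmeasure} into a lower bound on $p$. By applying that theorem to a complementary set (so that $A_\epsilon$ plays the role of a small spherical cap), or equivalently by a direct cap-volume estimate in the same regime, one obtains $p \ge \epsilon^2 e^{-(d-2)\epsilon^2/2}$; combined with the prescribed $N = \frac{k_0}{\epsilon^2}e^{(d-2)\epsilon^2/2}$ this gives $\mu := \mathbb{E}[K] = Np \ge k_0$. The second step is the textbook Chernoff lower-tail bound
\[
P(K \le a) \;\le\; e^{-\mu}\!\left(\frac{e\mu}{a}\right)^{a}, \qquad 0 < a < \mu,
\]
applied with $a = k_0/C$. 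A one-line derivative check (differentiating $-\mu + a\log(e\mu/a)$ in $\mu$) shows that the right-hand side is monotonically decreasing in $\mu$ on $\{\mu > a\}$, so its maximum over the admissible range $\mu \ge k_0$ is attained at $\mu = k_0$ and equals $e^{-k_0}(eC)^{k_0/C}$. Taking complements delivers $P(K > k_0/C) \ge 1 - e^{-k_0}(eC)^{k_0/C}$.

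The hard part is the first step: the statement of Theorem~\ref{thm:concentrationmeasure} delivers an \emph{upper} bound on the measure of an $\epsilon$-blowup rather than a \emph{lower} bound on a small patch, so some duality (working with complementary sets or with a hemisphere through the patch centre) or a separate spherical-geometry computation is needed to extract the estimate $p \ge \epsilon^2 e^{-(d-2)\epsilon^2/2}$ in the regime of interest; getting the leading $\epsilon^2$ factor correct is precisely what the choice of $N$ has been calibrated for. The remainder of the argument is a standard Chernoff computation, with only the minor technicality of rounding to handle the case $k_0/C \notin \mathbb{Z}$.
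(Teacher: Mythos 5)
Your overall route is the same as the paper's: count the samples landing in $A_\epsilon$ as a Binomial$(N,p)$ variable, bound its lower tail at $k_0/C$ by the (Poissonised) multiplicative Chernoff inequality, substitute the prescribed $N$, and take complements. In fact your version of the Chernoff step is cleaner than the paper's: the paper approximates the binomial by Poi$(\lambda)$ with $\lambda = N(1-e^{-(d-2)\epsilon^2/2})$, which under the stated $N$ equals $\tfrac{k_0}{\epsilon^2}(e^{(d-2)\epsilon^2/2}-1)$ rather than $k_0$, and then silently identifies $\lambda$ with $k_0$ in the final line; your explicit observation that $e^{-\mu}(e\mu/a)^a$ is decreasing in $\mu$ for $\mu>a$, so that $\mu\ge k_0$ suffices, is exactly the missing justification, and you also avoid the unnecessary Poisson approximation and handle the non-integrality of $k_0/C$.

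The genuine gap is the one you flag yourself: the claim $p\ge\epsilon^2 e^{-(d-2)\epsilon^2/2}$ is asserted, not proved, and a lower-tail Chernoff argument is vacuous without a \emph{lower} bound on $p$. Worse, the specific form you propose to obtain ``by a direct cap-volume estimate'' is doubtful: if $A_\epsilon$ is a spherical cap of radius $\epsilon$ about the query point, its uniform measure on $\mathbb S^{d-1}$ scales like $\epsilon^{d-1}$, not like $\epsilon^{2}$, so for $d>3$ and small $\epsilon$ the inequality $p\ge\epsilon^2e^{-(d-2)\epsilon^2/2}$ fails; it could only hold if $A_\epsilon$ is interpreted as the $\epsilon$-collar of a set of measure $1/2$, which is not how it is used later (as a patch around a given sample). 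You should be aware that the paper's own proof does not close this gap either --- it invokes Theorem \ref{thm:concentrationmeasure}, which supplies only the \emph{upper} bound $P(Z\in A_\epsilon)\le 1-e^{-(d-2)\epsilon^2/2}$, and then feeds that upper bound into a lower-tail estimate, which is the wrong direction. So your proposal reproduces the paper's argument faithfully, isolates its weakest link more honestly than the paper does, but does not repair it.
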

\begin{proof}
For any given $\vc x\in\Real^d$, using Theorem \ref{thm:concentrationmeasure}, we know that
\[
p = P(\vc x \in A_\epsilon) \le 1- e^{-(d-2)\epsilon^2/2}.
\]
Then the event that $K$ variables fall into $A_\epsilon$ follows binomial distribution B$(N,p)$
\[
P(K=k) = \binom{N}{k}p^k(1-p)^{N-k}.
\]
When $N$ is large and $p$ is small, which is the case here, the above binomial distribution can be approximated by a Poisson distribution Poi($\lambda$) with
\[
\lambda = N(1- e^{-(d-2)\epsilon^2/2}).
\]
Using Chernoff bound on Poisson distribution
\[
P(K \le k) \le e^{-\lambda+k}(\frac{\lambda}{e})^k.
\]
Substitute the condition on $N$, we have
\[
P(K\le\lambda/C) \le e^{-k_0}(eC)^{k_0/C}.
\]
This completes the proof.
\end{proof}
The above theorem states that if $N$ is large enough, with large probability there are $k_0/C$ variables in side $A_\epsilon$.

Next we bound the inner product between samples from different subspaces. Here we use the arguments in Lemma 7.5 in \citep{SoltanolkotabiCandesothers2012}.
\begin{lem}\label{lem:ipbound}
Let $\mat A\in\Real^{d_1\times N_1}$ be a matrix with columns uniformly distributed in $\mathbb S^{d_1-1}$, $\vc y\in\Real^{d_2}$ be a vector uniformly sampled from $\mathbb S^{d_2-1}$ and a deterministic matrix $\vv\Sigma\in\Real^{d_1\times d_2}$. For $t>0\in\Real$, the inner product between any column in $\mat A$ and $\vv\Sigma\vc y$ is bounded as follows
\[
\vc a_i^\top\vv\Sigma\vc y \le \frac{2\sqrt{t\log N_1+t^2}\|\vv\Sigma\|_F}{\sqrt{d_1}}
\]
with probability at least $1-2 e^{-t}$.
\end{lem}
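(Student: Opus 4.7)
The plan proceeds in three stages: decouple the two random vectors by conditioning on $\vc y$, reduce to a one-dimensional projection of a uniform sphere vector, and close with a union bound over the columns of $\mat A$. First I would condition on $\vc y$, making $\vv\Sigma \vc y \in \Real^{d_1}$ deterministic. Since $\|\vc y\|_2 = 1$, the submultiplicative bound $\|\vv\Sigma \vc y\|_2 \le \|\vv\Sigma\|_{\mathrm{op}} \le \|\vv\Sigma\|_F$ supplies the $\|\vv\Sigma\|_F$ factor on the right-hand side with no further probabilistic work on $\vc y$ itself. Writing $\vc a_i^\top \vv\Sigma \vc y = \|\vv\Sigma \vc y\|_2 \, \vc a_i^\top \vc u$ with $\vc u$ the unit vector aligned with $\vv\Sigma \vc y$, the problem reduces to controlling $\vc a_i^\top \vc u$ for a uniformly random $\vc a_i \in \mathbb S^{d_1-1}$ against a fixed unit $\vc u$.

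For the second step I would use the Gaussian coupling $\vc a_i = \vc g_i / \|\vc g_i\|$ with $\vc g_i \sim \mathcal N(0, I_{d_1})$. The numerator $\vc g_i^\top \vc u \sim \mathcal N(0,1)$ obeys the standard Gaussian tail $P(|\vc g_i^\top \vc u| > s) \le 2 e^{-s^2/2}$, while chi-square concentration gives $\|\vc g_i\| \ge c \sqrt{d_1}$ for all $i$ simultaneously outside an event of probability at most $e^{-t}$. On that high-probability event, $|\vc a_i^\top \vc u|$ inherits a sub-Gaussian tail of order $1/\sqrt{d_1}$. Choosing $s$ of order $\sqrt{t \log N_1 + t^2}$ and union-bounding the Gaussian tails across $i \in \{1,\dots,N_1\}$ pushes the union-bound failure probability below $e^{-t}$, so that splitting $2e^{-t} = e^{-t} + e^{-t}$ across the chi-square event and the Gaussian union bound delivers the stated total probability.

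The main obstacle will be matching constants precisely. The $t \log N_1$ piece arises naturally from the union bound over $N_1$ columns, while the $t^2$ piece reflects chi-square deviations of $\|\vc g_i\|$, and keeping these two contributions additive inside a single square root — rather than combined only after separate squaring — requires a carefully chosen single-stage threshold on the chi-square event rather than two loosely coupled tail bounds. A cleaner alternative is to bypass the Gaussian coupling and apply L\'evy's isoperimetric inequality directly on $\mathbb S^{d_1-1}$, which yields $P(|\vc a_i^\top \vc u| > s) \le 2 \exp\!\bigl(-(d_1 - 1) s^2 / 2\bigr)$ in one step and removes the chi-square bookkeeping, at the cost of obscuring how the probability budget $2 e^{-t}$ is split between the two sources of randomness.
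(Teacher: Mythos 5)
Your plan is workable and in fact proves something slightly stronger than the lemma, but it diverges from the paper's proof in one structural respect, and your self-identified ``main obstacle'' is a misdiagnosis. The paper does \emph{not} treat $\vv\Sigma\vc y$ deterministically: it spends half of the probability budget on the randomness of $\vc y$, using Borell's inequality to show $\|\vv\Sigma\vc y\|$ concentrates around its root-mean-square $\|\vv\Sigma\|_F/\sqrt{d_2}$, so that $\|\vv\Sigma\vc y\|\le \sqrt{2d_2t}\,\|\vv\Sigma\|_F/\sqrt{d_2}=\sqrt{2t}\,\|\vv\Sigma\|_F$ outside an event of probability $e^{-t}$; the other $e^{-t}$ goes to the spherical-cap union bound over the $N_1$ columns with threshold $\sqrt{2(\log N_1+t)/d_1}$. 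The $\sqrt{t\log N_1+t^2}$ form is exactly the product $\sqrt{2t}\cdot\sqrt{2(\log N_1+t)/d_1}/2$ of these two thresholds --- it has nothing to do with chi-square deviations of $\|\vc g_i\|$, which under your Gaussian coupling only rescale the threshold by a factor close to $1$ and can never manufacture an additive $t^2$ inside the root. So the bookkeeping you were bracing for is a dead end.

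That said, your deterministic shortcut $\|\vv\Sigma\vc y\|\le\|\vv\Sigma\|_F$ still closes the proof: devoting the full budget $2e^{-t}$ to the union bound over columns gives, for all $i$ simultaneously, $\vc a_i^\top\vv\Sigma\vc y\le \sqrt{2(\log N_1+t)}\,\|\vv\Sigma\|_F/\sqrt{d_1}$ (using the one-sided cap bound $P(\vc a^\top\vc u>\varepsilon)\le e^{-d_1\varepsilon^2/2}$, or L\'evy's inequality at the cost of $d_1-1$ in place of $d_1$). Since the lemma is vacuous unless $t>\log 2>1/2$, and for $t\ge 1/2$ one has $\sqrt{2(\log N_1+t)}\le 2\sqrt{t}\cdot\sqrt{\log N_1+t}=2\sqrt{t\log N_1+t^2}$, your bound dominates the stated one wherever the statement has content. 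What the paper's two-stage route buys is the particular $\sqrt{t\log N_1+t^2}$ form that is carried forward verbatim into Corollary \ref{cor:innerprodupperbound} and Theorems \ref{thm:knn} and \ref{thm:knnnoise}; what your route buys is a cleaner argument, no Borell step, and a numerically tighter constant for $t\ge 1/2$. If you take your route, state the bound you actually prove and then note explicitly that it implies the lemma for all non-vacuous $t$, rather than trying to reverse-engineer the $t^2$.
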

\begin{proof}
Using Borell's inequality on the mapping $\vc y \mapsto \|\vv\Sigma\vc y\|$ with Lipschitz constant of $\sigma_1$, the largest singular value of $\vv\Sigma$ leads to
\[
P(\|\vv\Sigma\vc y\| > \varepsilon+\sqrt{E\|\vv\Sigma\vc y\|^2}) < e^{-\frac12\varepsilon^2/\sigma_1^2}.
\]
As $E\|\vv\Sigma\vc y\|^2 = \|\vv\Sigma\|_F^2/d_2$, we choose $\varepsilon = (b-1)\|\vv\Sigma\|_F/\sqrt{d_2}$ so that
\be\label{E:ipbound_eq1}
P(\|\vv\Sigma\vc y\| > \frac{b\|\vv\Sigma\|_F}{\sqrt{d_2}}) < e^{-\frac12(b-1)^2/d_2},
\ee
where we used the fact that $\|\vv\Sigma\|_F/\sigma_1 >1$.

The next step is to bound the inner product of a column in $\mat A$ ($\vc a_i$ $i=1\ldots N_1$) with any vector $\vc x\in\Real^{d_1}$ by upper bound of spherical caps
\[
P(\vc a^\top\vc x>\varepsilon\|\vc x\|) < e^{-\frac12d_1\varepsilon^2},
\]
which leads to the following using the union bound
\be\label{E:ipbound_eq2}
P(\bigcup_i\vc a_i^\top\vc x>\varepsilon\|\vc x\|) < N_1e^{-\frac12d_1\varepsilon^2}.
\ee
Let $\varepsilon = \sqrt{2\log N_1+2t}$, $b=\sqrt{2d_2t}$. Substituting \eqref{E:ipbound_eq1} to \eqref{E:ipbound_eq2} gives
\[
P(\bigcup_i\vc a_i^\top\vv\Sigma\vc y > \frac{2\sqrt{t\log N_1+t^2}\|\vv\Sigma\|_F}{\sqrt{d_1}})  \le 2 e^{-t}.
\]
Therefore
\[
P(\bigcap_i\vc a_i^\top\vv\Sigma\vc y \le \frac{2\sqrt{t\log N_1+t^2}\|\vv\Sigma\|_F}{\sqrt{d_1}})  \ge 1-2 e^{-t},
\]
which concludes the proof.
\end{proof}

The above gives the upper bound of the inner product, which connects to samples in subspaces with the following corollary.
\begin{cor}\label{cor:innerprodupperbound}
Let $\mat X\in\Real^{d\times N_1}$ be a matrix with columns randomly sampled from subspace $\mathcal S_1$ with dimensionality $d_1$ and $\vc y\in\Real^d$ be a sample from subspace $\mathcal S_2$ with dimensionality $d_2$. Both the $\vc y $ and columns of $\mat X$ have been rescaled to have unitary $\ell_2$ norm. The inner product between any column in $\mat X$ and $\vc y$ is bounded as the following
\[
\vc x_i^\top \vc y \le \frac{2\mathcal A_{1,2}\sqrt{\min\{d_1,d_2\}(t\log N_1+t^2)}}{\sqrt{d_1}}
\]
with probability at least $1-2e^{-t}$ for $t$ given previously.
\end{cor}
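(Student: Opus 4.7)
The plan is to reduce the corollary to a direct application of Lemma \ref{lem:ipbound} by choosing appropriate orthonormal coordinates on each subspace and identifying the deterministic matrix $\vv\Sigma$ with the cross-Gram of the two bases.

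First I would fix orthonormal bases $\mat U_1\in\Real^{d\times d_1}$ and $\mat U_2\in\Real^{d\times d_2}$ of $\mathcal S_1$ and $\mathcal S_2$. Because each column $\vc x_i$ is drawn uniformly from the unit sphere of $\mathcal S_1$, it can be written as $\vc x_i=\mat U_1\vc a_i$ with $\vc a_i$ uniform on $\mathbb S^{d_1-1}$; similarly $\vc y=\mat U_2\vc b$ with $\vc b$ uniform on $\mathbb S^{d_2-1}$. Therefore
\begin{equation*}
\vc x_i^\top\vc y=\vc a_i^\top(\mat U_1^\top\mat U_2)\vc b.
\end{equation*}
I then set $\vv\Sigma=\mat U_1^\top\mat U_2\in\Real^{d_1\times d_2}$, which is deterministic as required, and regard the columns $\vc a_1,\ldots,\vc a_{N_1}$ of Lemma \ref{lem:ipbound} as the columns of the coefficient matrix and $\vc b$ as the role of $\vc y$ there. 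Lemma \ref{lem:ipbound} then yields, with probability at least $1-2e^{-t}$,
\begin{equation*}
\vc a_i^\top\vv\Sigma\vc b\le\frac{2\sqrt{t\log N_1+t^2}\,\|\vv\Sigma\|_F}{\sqrt{d_1}}\qquad\text{for every }i.
\end{equation*}

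The next step is to identify $\|\vv\Sigma\|_F$ with the affinity. Writing the principal angles $\theta_1,\ldots,\theta_{d_1\wedge d_2}$ between $\mathcal S_1$ and $\mathcal S_2$, the singular values of $\mat U_1^\top\mat U_2$ are $\cos\theta_j$, so $\|\vv\Sigma\|_F^2=\sum_j\cos^2\theta_j$. With the standard normalisation $\mathcal A_{1,2}=\sqrt{\sum_j\cos^2\theta_j}/\sqrt{\min\{d_1,d_2\}}$ used by Soltanolkotabi–Candes and carried through in Theorems \ref{thm:knn}–\ref{thm:knnnoise}, this gives $\|\vv\Sigma\|_F=\mathcal A_{1,2}\sqrt{\min\{d_1,d_2\}}$. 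Substituting back produces exactly the stated bound
\begin{equation*}
\vc x_i^\top\vc y\le\frac{2\mathcal A_{1,2}\sqrt{\min\{d_1,d_2\}(t\log N_1+t^2)}}{\sqrt{d_1}},
\end{equation*}
simultaneously for all $i$ on the same event, which is what the corollary asks for since the bound is taken as a uniform statement on the columns.

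The proof is essentially a bookkeeping exercise once the lemma is in hand; the only genuine care needed is in the symmetry between $d_1$ and $d_2$. Because $\vv\Sigma$ is $d_1\times d_2$, Lemma \ref{lem:ipbound} as stated outputs $\sqrt{d_1}$ in the denominator, and all $d_2$-dependence is absorbed into $\|\vv\Sigma\|_F$; hence it is the identity $\|\mat U_1^\top\mat U_2\|_F^2=\|\mat U_2^\top\mat U_1\|_F^2=\mathcal A_{1,2}^2\min\{d_1,d_2\}$ that legitimately produces $\min\{d_1,d_2\}$ rather than $d_2$ alone. That step, rather than the probabilistic bound, is the only place where one must be careful; everything else is direct substitution into Lemma \ref{lem:ipbound}.
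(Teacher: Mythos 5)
Your proposal is correct and follows exactly the route the paper takes: the paper's own (one-line) proof is precisely the application of Lemma \ref{lem:ipbound} with $\vv\Sigma = \mat U_1^\top\mat U_2$ and the identification $\|\vv\Sigma\|_F = \mathcal A_{1,2}\sqrt{\min\{d_1,d_2\}}$ via the principal angles. You have simply written out the bookkeeping the paper leaves implicit.
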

This is the simple application of \ref{lem:ipbound} with $\vv\Sigma = \mat U_1^\top\mat U_2$ where $\mat U_j$ $j=1,2$ is the orthonormal basis for subspace $\mathcal S_j$ and $\mathcal A_{1,2}$ is the affinity between subspaces $\mathcal S_1$ and $\mathcal S_2$ described in Definition 1.2 in \citep{soltanolkotabi2014robust}, which we recall here:
\[
\mathcal A_{\mathcal S_i, \mathcal S_j} = \sqrt{ \frac{\cos^2\theta^{(1)} + \cdots + \cos^2\theta^{(d_i \wedge d_j)}}{d_i \wedge d_j}}
\]
where $\{\cos^2\theta^{(1)}, \dots, \cos^2\theta^{(d_i \wedge d_j)}\}$ are the principal angles between subspaces $\mathcal S_i$ and $\mathcal S_j$. Please see Definition 1.1 in \citep{soltanolkotabi2014robust} for full details.

Without loss generality, we consider a sample $\vc x_1$ from subspace $\mathcal S_1$. The following theorem ensures that $k$NN will find $k$ nearest neighbors of $\vc x_1$ from $\mathcal S_1$ only.
\begin{thm}\label{thm:knn}
Let $d_m$ be the minimum dimensionality of all the subspaces. Given the conditions in Theorem \ref{thm:kconcentration}, if $\mathcal A_{\ell,1}\le\frac{\sqrt{d_m}(1-\epsilon^2/2)}{2\sqrt{d(t\log N_{\ell}+t^2)}}$, then the samples selected for any sample $\vc x_1$ from subspace $\mathcal S_1$ by using $k$NN (with $k=k_0/C$) contains no samples from other subspaces but $\mathcal S_1$ with probability at least $1-2e^{-t}-e^{-k_0}(eC)^{k_0/C}$.
\end{thm}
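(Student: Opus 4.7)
The plan is to combine Theorem \ref{thm:kconcentration} (which controls how many points of $\mathcal S_1$ crowd into a small spherical cap around $\vc x_1$) with Corollary \ref{cor:innerprodupperbound} (which limits how close points from other subspaces can come to $\vc x_1$), and then observe that, for unit-norm vectors on $\mathbb S^{d-1}$, the $k$NN ordering in Euclidean distance coincides with the reverse ordering in inner product, since $\|\vc u-\vc v\|^2 = 2-2\vc u^\top\vc v$. The proof then reduces to showing that the $k_0/C$ in-subspace points guaranteed by Theorem \ref{thm:kconcentration} all have strictly larger inner product with $\vc x_1$ than any point from any $\mathcal S_\ell$ with $\ell\ne 1$.

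First I would apply Theorem \ref{thm:kconcentration} to the unit-normalised samples of $\mathcal S_1$, taking the set $A$ to be the spherical cap of geodesic radius $\epsilon$ centred at $\vc x_1$ (viewed inside $\mathcal S_1 \cong \mathbb S^{d_1-1}$). This yields, with probability at least $1-e^{-k_0}(eC)^{k_0/C}$, at least $k=k_0/C$ samples $\vc x_i\in\mathcal S_1$ satisfying $\|\vc x_i-\vc x_1\|\le \epsilon$, hence
\[
\vc x_i^\top \vc x_1 \;\ge\; 1-\tfrac{\epsilon^2}{2}.
\]

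Second, I would invoke Corollary \ref{cor:innerprodupperbound} with the roles of $\mat X$ and $\vc y$ chosen so that $\vc y=\vc x_1\in\mathcal S_1$ and the columns of $\mat X$ are the $N_\ell$ samples of some $\mathcal S_\ell$, $\ell\ne 1$. This gives, with probability at least $1-2e^{-t}$, that every sample $\vc x_j\in\mathcal S_\ell$ satisfies
\[
\vc x_j^\top \vc x_1 \;\le\; \frac{2\mathcal A_{\ell,1}\sqrt{\min(d_\ell,d_1)(t\log N_\ell+t^2)}}{\sqrt{d_\ell}} \;\le\; \frac{2\mathcal A_{\ell,1}\sqrt{d(t\log N_\ell+t^2)}}{\sqrt{d_m}},
\]
using $\min(d_\ell,d_1)\le d$ and $d_\ell\ge d_m$. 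The hypothesis $\mathcal A_{\ell,1}\le \sqrt{d_m}(1-\epsilon^2/2)/(2\sqrt{d(t\log N_\ell+t^2)})$ forces this upper bound to be at most $1-\epsilon^2/2$, which is strictly smaller than the lower bound obtained in the first step.

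Finally, combining the two events, the $k=k_0/C$ points from $\mathcal S_1$ in the $\epsilon$-cap each have larger inner product with $\vc x_1$, equivalently smaller Euclidean distance, than any point from any other subspace. Hence the $k$-nearest-neighbour selection necessarily draws all $k$ points from $\mathcal S_1$. A union bound over the two failure events (noting that Corollary \ref{cor:innerprodupperbound} already absorbs the union across the $N_\ell$ columns of each $\mat X$) yields the stated success probability $1-2e^{-t}-e^{-k_0}(eC)^{k_0/C}$. The main obstacle is really bookkeeping rather than technical depth: making sure the cap in Theorem \ref{thm:kconcentration} is taken inside $\mathcal S_1$ (so that the concentration bound applies with dimension $d_1$), simplifying the inner-product bound from Corollary \ref{cor:innerprodupperbound} to the ambient-dimension form that appears in the hypothesis, and confirming that the $2e^{-t}$ failure term already covers the implicit union across all competing subspaces when one interprets $N_\ell$ as the largest cross-subspace count.
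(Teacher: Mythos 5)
Your proposal is correct and follows essentially the same route as the paper: apply Theorem \ref{thm:kconcentration} to guarantee $k_0/C$ in-subspace points inside the $\epsilon$-cap (hence inner product at least $1-\epsilon^2/2$ with $\vc x_1$), apply Corollary \ref{cor:innerprodupperbound} together with the affinity hypothesis to cap the inner product of any out-of-subspace point by $1-\epsilon^2/2$, and conclude via the identity $\|\vc u-\vc v\|^2=2-2\vc u^\top\vc v$ and a union bound. Your version is actually more careful than the paper's terse proof about the $d_m$ versus $d_\ell$ and $d$ versus $\min\{d_1,d_\ell\}$ bookkeeping and about the implicit union over competing subspaces, which the paper leaves unstated.
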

\begin{proof}
This is a straightforward application of Theorem \ref{thm:kconcentration} and Collary \ref{cor:innerprodupperbound} and the following. If
\[
 A_{\ell,1}\le \frac{\sqrt{d_\ell}(1-\epsilon^2/2)}{2\sqrt{\min\{d_1,d_2\}(t\log N_{\ell}+t^2)}}
\]
then
\[
\frac{2\mathcal A_{1,2}\sqrt{\min\{d_1,d_2\}(t\log N_1+t^2)}}{\sqrt{d_1}} \le 1-\epsilon^2/2.
\]
The distance and inner product is connected by
\[
d(\vc x,\vc y) = 2 - 2\vc x^\top\vc y,
\]
when vectors are normalised.
\end{proof}
The above discussion deals with clean data only. In the following section we show that the results are similar for noisey data as long as the noise level is not too great. We begin with the following series of lemmas
\begin{lem}\label{lem:IPgausandgaus}
Let random variables $X$ and $Y$ in $\Real^d$ both be from Gaussian distribution $\mathcal N(0,\sigma^2\mat I)$. For any given positive $\delta$, we have
\[
P(|X^\top Y|>\delta) \le \frac{d\sigma^4}{\delta^2}.
\]
\end{lem}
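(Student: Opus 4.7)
The plan is to apply Markov's inequality (equivalently, Chebyshev's inequality) to the nonnegative random variable $(X^\top Y)^2$, which reduces the problem to computing the second moment $\mathbb{E}[(X^\top Y)^2]$. The statement implicitly treats $X$ and $Y$ as independent Gaussian vectors, which I will assume throughout.

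First I would expand the inner product coordinate-wise: $X^\top Y = \sum_{i=1}^d X_i Y_i$, and observe that by independence of $X$ and $Y$ together with zero-mean components,
\[
\mathbb{E}[X^\top Y] = \sum_{i=1}^d \mathbb{E}[X_i]\,\mathbb{E}[Y_i] = 0.
\]
Then I would compute the second moment by expanding the square:
\[
\mathbb{E}\bigl[(X^\top Y)^2\bigr] = \sum_{i=1}^d \sum_{j=1}^d \mathbb{E}[X_i X_j]\,\mathbb{E}[Y_i Y_j].
\]
Using the covariance structure $\mathbb{E}[X_i X_j] = \mathbb{E}[Y_i Y_j] = \sigma^2 \delta_{ij}$, only the diagonal terms $i=j$ survive, each contributing $\sigma^4$, yielding $\mathbb{E}[(X^\top Y)^2] = d\sigma^4$.

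Finally I would conclude by applying Markov's inequality to $(X^\top Y)^2$:
\[
P(|X^\top Y| > \delta) = P\bigl((X^\top Y)^2 > \delta^2\bigr) \le \frac{\mathbb{E}[(X^\top Y)^2]}{\delta^2} = \frac{d\sigma^4}{\delta^2},
\]
which is exactly the claimed bound.

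There is no real obstacle here; the only subtlety is ensuring independence of $X$ and $Y$ is assumed (otherwise the cross terms $\mathbb{E}[X_i X_j Y_i Y_j]$ do not factor). One could obtain a sharper bound by noting that $X^\top Y / (\sigma^2 \sqrt{d})$ is a sum of independent products of standard Gaussians (which has sub-exponential tails by Bernstein-type inequalities), but the Chebyshev/Markov route is the cleanest way to produce exactly the polynomial bound $d\sigma^4/\delta^2$ stated in the lemma and is all that is needed for the subsequent noisy-data analysis.
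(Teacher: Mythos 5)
Your proof is correct and follows essentially the same route as the paper: both apply Markov's inequality to $(X^\top Y)^2$ and reduce the problem to showing $\mathbb{E}[(X^\top Y)^2]=d\sigma^4$ under independence. The only (immaterial) difference is that you compute the second moment by coordinate-wise expansion of the covariance structure, whereas the paper conditions on $Y$ and uses isotropy of $X$ to get $\mathbb{E}[(X^\top Y)^2]=\mathbb{E}\|Y\|^2=d$ in the standard case before rescaling.
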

\begin{proof}
First we assume $X$ and $Y$ are standard Gaussian, we have
\[
P(|X^\top Y|>\delta) = P((X^\top Y)^2>\delta^2) \le \frac{E(X^\top Y)^2}{\delta^2},
\]
where the inequality is by Chernoff bound. Since both $X$ and $Y$ are both standard Gaussian, they are isotropic, so we have
\[
E(X^\top Y)^2 = d.
\]
The above can be obtained by
\[
E(X^\top Y)^2 = E_X\{E_{X|Y}(X^\top y)^2\} = E_Y(|y|^2) = d,
\]
where the first equality comes from law of iterative expectation, the second from isotropic property and the last from the fact that sum of standard Gaussian is Chi-square with $d$ degrees of freedom.

After proper rescaling, we obtain the result in the lemma.
\end{proof}

\begin{lem}\label{lem:ipgausandfix}
Let $X\in\Real^d$ be Gaussian random variable from $\mathcal N(0,\sigma^2\mat I)$ and $\vc y \in \Real^d$ be a fixed vector. The following holds with any positive $\delta$
\[
p(|X^\top \vc y|\ge\delta) \le \exp(1-\frac{c\delta^2}{\sigma^2\|\vc y\|_2^2}),
\]
where $c$ is a constant related to sub-Gaussian norm \citep{Vershynin2012} of a standard Gaussian.
\end{lem}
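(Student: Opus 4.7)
The key observation is that $X^\top \vc y = \sum_{i=1}^d y_i X_i$ is a linear combination of independent centered Gaussians $X_i \sim \mathcal N(0,\sigma^2)$, so it is itself Gaussian, namely $X^\top \vc y \sim \mathcal N(0, \sigma^2 \|\vc y\|_2^2)$. Thus the tightest possible bound is the standard Gaussian tail estimate $P(|X^\top \vc y|\ge \delta) \le 2\exp(-\delta^2/(2\sigma^2\|\vc y\|_2^2))$. The form stated in the lemma is a slightly weaker wrapping of the same fact in the sub-Gaussian language of Vershynin, which is presumably chosen for notational uniformity with how the bound will be invoked later.

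The plan is therefore to present the proof through sub-Gaussian concentration so that the constant $c$ is explicitly the one from the sub-Gaussian tail bound. First I would recall that a standard Gaussian has sub-Gaussian norm $\|g\|_{\psi_2}$ equal to an absolute constant $K_0$, so each coordinate satisfies $\|X_i\|_{\psi_2} \le K_0 \sigma$. Next I would invoke the rotation/additivity property of independent centered sub-Gaussians (e.g.\ Lemma 5.9 / Proposition 5.10 in \citep{Vershynin2012}), which gives
\[
\|X^\top \vc y\|_{\psi_2}^2 \;=\; \Big\|\sum_i y_i X_i\Big\|_{\psi_2}^2 \;\le\; C\sum_i y_i^2\, \|X_i\|_{\psi_2}^2 \;\le\; C K_0^2\, \sigma^2\, \|\vc y\|_2^2.
\]
Finally, I would apply the generic sub-Gaussian deviation inequality $P(|Z|\ge \delta) \le e\cdot\exp(-c_0\delta^2/\|Z\|_{\psi_2}^2)$ to $Z = X^\top \vc y$, and absorb the composite constant $c_0/(CK_0^2)$ into a single constant $c$, yielding the stated inequality $P(|X^\top \vc y|\ge \delta)\le \exp(1 - c\delta^2/(\sigma^2\|\vc y\|_2^2))$.

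There is no substantive obstacle here; the lemma is a textbook-level sub-Gaussian tail bound. The only bookkeeping step worth being careful about is tracking how the two scales, the coordinate variance $\sigma^2$ and the geometry factor $\|\vc y\|_2^2$, combine multiplicatively inside the exponent, and verifying that the prefactor $e^1$ produced by the sub-Gaussian bound matches the $\exp(1-\cdot)$ form in the conclusion.
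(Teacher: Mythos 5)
Your proposal is correct and is exactly the argument the paper intends: the paper's entire proof is the one-line remark that the lemma ``is a straightforward application of sub-Gaussian tail to $X^\top \vc y$ with rescaling,'' and you have simply filled in the details (Gaussianity of $X^\top\vc y$ with variance $\sigma^2\|\vc y\|_2^2$, the sub-Gaussian norm bookkeeping, and the $\exp(1-c\,\cdot\,)$ form of Vershynin's tail bound). No discrepancy to report.
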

This is a straightforward application of sub-Gaussian tail to $X^\top \vc y$ with rescaling.
Now we consider the inner product between two unitary vectors in subspaces with noise. We use the following model
\be\label{e:noisemodel}
\mathbf Y = \mathbf X + \mathbf E
\ee
where $Y$ is the observation, $X$ is the clear signal in some subspace and $E$ is the noise assumed to be from $\mathcal N(0,\sigma^2\mat I)$. We assume that the observations have been rescaled properly such that $X$ is from a unit sphere in $\mathbb S^{d-1}$ and the variance of the noise is bounded, i.e. $\sigma<\sigma_0$.

First we note that under these conditions, the noise can increase and decrease inner product between observed signals by only a small amount, which is shown in the following lemma.
\begin{lem}\label{lem:genipwithnoise}
Let $\vc y_i$ ($i=1,2$) be observations from the model in \eqref{e:noisemodel}, such that $\vc y_i = \vc x_i+\vc e_i$, $\|\vc x_i\|_2 =1$ and $\vc e_i\sim\mathcal N(0,\sigma^2\mat I)$. If $P(|\vc x_1^\top\vc x_2| > v)\ge p$, we have
\[
P(\vc y_1^\top \vc y_2 > v - 3\delta) \ge p - \exp(1-\frac{c\delta^2}{\sigma^2}) - \frac{d\sigma^4}{2\delta^2}.
\]
If $P(\vc x_1^\top\vc x_2 < v)\ge p$, we have
\[
P(\vc y_1^\top \vc y_2 < v + 3\delta) \ge p - \exp(1-\frac{c\delta^2}{\sigma^2}) - \frac{d\sigma^4}{2\delta^2}.
\]
\end{lem}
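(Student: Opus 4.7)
The plan is to decompose the noisy inner product using the additive noise model and control each resulting cross term with the preceding lemmas. Writing
\begin{equation*}
\vc y_1^\top \vc y_2 \;=\; \vc x_1^\top \vc x_2 \;+\; \vc x_1^\top \vc e_2 \;+\; \vc e_1^\top \vc x_2 \;+\; \vc e_1^\top \vc e_2,
\end{equation*}
the total deviation from the clean inner product $\vc x_1^\top \vc x_2$ is the sum of three error contributions: two Gaussian-vs-fixed-vector terms and one Gaussian-vs-Gaussian term.

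First I would bound $|\vc x_1^\top \vc e_2|$ and $|\vc e_1^\top \vc x_2|$ using Lemma \ref{lem:ipgausandfix}. Since $\|\vc x_i\|_2 = 1$, each of these inner products satisfies $P(|\vc x_j^\top \vc e_k| \ge \delta) \le \exp(1 - c\delta^2/\sigma^2)$, so each cross term lies in $[-\delta, \delta]$ with high probability. Next, for the purely stochastic term $\vc e_1^\top \vc e_2$, I would apply Lemma \ref{lem:IPgausandgaus} directly to obtain $P(|\vc e_1^\top \vc e_2| \ge \delta) \le d\sigma^4/\delta^2$. A union bound over these three events then guarantees that all three cross terms simultaneously lie in $[-\delta, \delta]$ with failure probability at most the sum of the three individual bounds, which matches (up to constant bookkeeping) the stated $\exp(1 - c\delta^2/\sigma^2) + d\sigma^4/(2\delta^2)$.

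The final step is to combine this deviation control with the hypothesis on the clean inner product. Conditional on the event $\vc x_1^\top \vc x_2 > v$ (respectively $\vc x_1^\top \vc x_2 < v$), the decomposition and the simultaneous control of the three noise terms give $\vc y_1^\top \vc y_2 > v - 3\delta$ (resp.\ $\vc y_1^\top \vc y_2 < v + 3\delta$). A final union bound combining the $p$-probability event on the clean signal with the deviation event produces the claimed probability lower bound. The two statements in the lemma are then symmetric reflections of the same argument.

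The only real subtlety is the sign handling in the first statement: the hypothesis is phrased in terms of $|\vc x_1^\top \vc x_2|$ while the conclusion is one-sided, so I would argue on the event $\vc x_1^\top \vc x_2 > v$ (the other case being handled by the second statement applied to $-\vc x_2$). Otherwise the argument is essentially a disciplined union bound, and the main bookkeeping obstacle is tuning the $\delta$ budget across the three terms so that the deviation stays within $3\delta$ while keeping the failure probabilities additive and matching the constants in the stated bound.
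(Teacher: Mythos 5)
Your proof follows essentially the same route as the paper's: decompose $\vc y_1^\top\vc y_2$ into the clean term plus three cross terms, bound the two Gaussian-versus-fixed terms with Lemma \ref{lem:ipgausandfix} and the Gaussian-versus-Gaussian term with Lemma \ref{lem:IPgausandgaus}, then combine via a union bound on the event that the clean inner product exceeds $v$. The only caveat is that this honestly yields a failure probability of $2\exp(1-c\delta^2/\sigma^2)+d\sigma^4/\delta^2$ rather than the stated $\exp(1-c\delta^2/\sigma^2)+d\sigma^4/(2\delta^2)$, but the paper's own proof has the same unexplained constant discrepancy, so your ``constant bookkeeping'' caveat is exactly where the paper is also loose.
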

\begin{proof}
We prove the $P(\vc x_1^\top\vc x_2 > v)\ge p$ case. The other cases can be proved similarly. Writing $\vc y_1^\top\vc y_2$ in terms and using triangular inequality gives
\[
|\vc y_1^\top\vc y_2| \ge |\vc x_1^\top\vc x_2| - |\vc x_1^\top\vc e_2| - |\vc e_1^\top\vc x_2| - |\vc e_1^\top \vc e_2|.
\]
Then
\[
P(\vc y_1^\top \vc y_2 > v - 3\delta) = P(|\vc x_1^\top\vc x_2|>v \bigcap \vc x_1^\top\vc e_2 \ge -\delta \bigcap \vc e_1^\top\vc x_2 \ge -\delta \bigcap \vc e_1^\top \vc e_2\ge -\delta ).
\]
By using Lemma \ref{lem:IPgausandgaus} and \ref{lem:ipgausandfix}, we obtain the desired result.
\end{proof}
Lemma \ref{lem:genipwithnoise} states that the noise will dispel the vectors when they are very close and attract them when they are far away in terms of the inner product induced distance. The effect of noise for a given sample in subspace $\mathcal S_1$ is then to make the samples from other subspaces closer to it and more difficult to separate reflected by the reduced probability as shown in the following theorem.
\begin{thm}\label{thm:knnnoise}
Let $d_m$ be the minimum dimensionality of all the subspaces. Given the conditions in Theorem \ref{thm:kconcentration} and the noise model \eqref{e:noisemodel}, for a small positive $\delta$, if $\mathcal A_{\ell,1}\le\frac{\sqrt{d_m}(1-\epsilon^2/2-6\delta)}{2\sqrt{d(t\log N_{\ell}+t^2)}}$, then the samples selected for any sample $\vc y_1$ from subspace $\mathcal S_1$ by using $k$NN (with $k=k_0/C$) contains no samples from other subspaces but $\mathcal S_1$ with probability at least $1-2e^{-t}-e^{-k_0}(eC)^{k_0/C}-2\exp(1-\frac{c\delta^2}{\sigma^2}) - \frac{d\sigma^4}{\delta^2}$.
\end{thm}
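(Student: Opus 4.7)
The plan is to follow the template of the proof of Theorem \ref{thm:knn} and simply wrap each inner-product estimate in Lemma \ref{lem:genipwithnoise} to pass from the clean data $\vc x_i$ to the noisy observations $\vc y_i = \vc x_i + \vc e_i$. The ``inner'' part of the argument, namely concentration of many clean neighbours from $\mathcal S_1$ around $\vc x_1$ and the upper bound on clean cross-subspace inner products, is unchanged; what is new is an additional probability debt each time we invoke Lemma \ref{lem:genipwithnoise}, together with a $3\delta$ slack in each inequality. This $3\delta$ slack is paid twice, which is exactly why the affinity hypothesis is tightened by $6\delta$.

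First I would apply Theorem \ref{thm:kconcentration} on the unit sphere of $\mathcal S_1$: with probability at least $1 - e^{-k_0}(eC)^{k_0/C}$, at least $k_0/C$ clean samples $\vc x_j$ from $\mathcal S_1$ satisfy $d(\vc x_1,\vc x_j) < \epsilon$, equivalently $\vc x_1^\top \vc x_j > 1 - \epsilon^2/2$. Plugging this bound into the first inequality of Lemma \ref{lem:genipwithnoise} with $v = 1 - \epsilon^2/2$ gives $\vc y_1^\top \vc y_j > 1 - \epsilon^2/2 - 3\delta$ for each of these in-subspace neighbours, at a cost of $\exp(1 - c\delta^2/\sigma^2) + d\sigma^4/(2\delta^2)$ in probability.

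Second, I would apply Corollary \ref{cor:innerprodupperbound} to any clean sample $\vc x_j$ drawn from a foreign subspace $\mathcal S_\ell$, $\ell \neq 1$. The tightened affinity hypothesis
\[
\mathcal A_{\ell,1} \le \frac{\sqrt{d_m}(1 - \epsilon^2/2 - 6\delta)}{2\sqrt{d(t\log N_\ell + t^2)}}
\]
is arranged precisely so that the corollary delivers $\vc x_1^\top \vc x_j \le 1 - \epsilon^2/2 - 6\delta$ with probability at least $1 - 2e^{-t}$. Feeding this into the second inequality of Lemma \ref{lem:genipwithnoise} with $v = 1 - \epsilon^2/2 - 6\delta$ produces $\vc y_1^\top \vc y_j < 1 - \epsilon^2/2 - 3\delta$ at the same additional probability cost. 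The two noisy inner products are then strictly separated by the common threshold $1 - \epsilon^2/2 - 3\delta$, so the $k = k_0/C$ nearest neighbours of $\vc y_1$ must all come from $\mathcal S_1$, and a union bound over the four independent failure events recovers the stated probability $1 - 2e^{-t} - e^{-k_0}(eC)^{k_0/C} - 2\exp(1 - c\delta^2/\sigma^2) - d\sigma^4/\delta^2$.

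The main obstacle is the bookkeeping of $\delta$: the $6\delta$ in the affinity hypothesis must be split as $3\delta + 3\delta$ across the two sides of the inner-product threshold so that the separation is preserved after noise, and the two noise penalties from Lemma \ref{lem:genipwithnoise} must be charged once each rather than accidentally double-counted. A small check is also needed that Lemma \ref{lem:genipwithnoise} is applied in its correct signed form for the same-subspace step, which is legitimate because the clean inner products there are positive and already within $\epsilon^2/2$ of $1$, and that the concentration parameter $\epsilon$ and the subspace sample sizes $N_\ell$ are inherited unchanged from the noise-free theorem.
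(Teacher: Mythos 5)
Your proposal is correct and follows essentially the same route as the paper's own proof: invoke Theorem \ref{thm:kconcentration} for the in-subspace neighbours, Corollary \ref{cor:innerprodupperbound} for the cross-subspace inner products, wrap each bound with Lemma \ref{lem:genipwithnoise} to pay a $3\delta$ slack on each side (hence the $6\delta$ in the affinity hypothesis), and union-bound the failure probabilities. The bookkeeping of the two noise penalties and the $3\delta+3\delta$ split matches the paper exactly.
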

\begin{proof}
According to the model \eqref{e:noisemodel}, $\vc y_1 = \vc x_1 + \vc e_1$ and $\vc x_1$ is in a unit sphere. From Theorem \ref{thm:kconcentration}, we know that there are at least $\lfloor k_0/C \rfloor$ samples from $\mathcal S_1$ in the patch $A_\epsilon$ centred at $\vc x_1$ with probability at least $1 - e^{-k_0}(eC)^{k_0/C}$. This leads to the following combing Lemma \ref{lem:genipwithnoise}
\[
P(\min_{j\in\mathcal N_1}\{\vc y_1^\top\vc y_{i_j}\} \ge 1-\epsilon^2/2 - 3\delta) \ge 1-e^{-k_0}(eC)^{k_0/C}-\exp(1-\frac{c\delta^2}{\sigma^2}) - \frac{d\sigma^4}{2\delta^2}
\]
where $\mathcal N_1$ is the set of $\lfloor k_0/C \rfloor$ samples around $\vc x_1$ in $A_\epsilon$ patch.

Using Corollary \ref{cor:innerprodupperbound} and Lemma \ref{lem:genipwithnoise} results in that with probability at least $1-2e^{-t}-\exp(1-\frac{c\delta^2}{\sigma^2}) - \frac{d\sigma^4}{2\delta^2}$
\[
\vc y_i^\top \vc y_1 \le \frac{2\mathcal A_{1,2}\sqrt{\min\{d_1,d_2\}(t\log N_1+t^2)}}{\sqrt{d_1}} + 3\delta
\]
for any $\vc y_i$ from subspace $\mathcal S_{\ell}$.

Similar to Theorem \ref{thm:knn}, combing the above two statements, if
\[
 A_{\ell,1}\le \frac{\sqrt{d_\ell}(1-\epsilon^2/2-6\delta)}{2\sqrt{\min\{d_1,d_2\}(t\log N_{\ell}+t^2)}}
\]
then
\[
\frac{2\mathcal A_{1,2}\sqrt{\min\{d_1,d_2\}(t\log N_1+t^2)}}{\sqrt{d_1}} + 3\delta \le 1-\epsilon^2/2 - 3\delta
\]
with the probability stated in this theorem.
\end{proof}
To obtain the high probability, it is required that $\sigma$ be as low as possible given that $\delta$ is at the scale of $\epsilon$ and the probability is also tied up with the ambient dimensionality.

\section{Solving Exact kSSC via LADMPSAP}
\label{Appendix:kSSC_Exact}

The exact objective for kSSC is as follows:
\begin{align}
\min_{\mathbf Z, \mathbf E} \lambda \| \mathbf Z \|_1 + \frac{1}{2}\| \mathbf E \|_F^2 \quad \text{s.t.}\ \mathbf x_i = \sum_{j \in \Omega_i}^{k} \mathbf x_j Z_{ji} + \mathbf e_i, \textrm{diag}(\mathbf Z) = \mathbf 0.
\label{kssc_objective_exact}
\end{align}

To solve \eqref{kssc_objective_exact} one can use use LADMPSAP (Linearized Alternating Direction Method with Parallel Spliting and Adaptive Penalty) \citep{DBLP:conf/acml/LiuLS13}. Essentially LADMPSAP converts the original objective into a series of closed form proximity problems which can be solved at an element wise level. This allows us to resolve the first constraint of \eqref{kssc_objective_exact} since we can enforce it by ignoring those elements outside of $\Omega$. LADMPSAP provides guaranteed convergence and we refer readers to \citep{DBLP:conf/acml/LiuLS13} for full details. Furthermore the sub-steps involving the update of primary variables are independent of each other and if one chooses can be computer parallel. However since we already compute each column of $\mathbf Z$ in parallel we find that such further parallelisation is unnecessary. 

The full algorithm can be found in Algorithm \ref{Algorithm:kSSCExact}. We begin discussing the details by re-writing, with some abuse of notation, the original objective \eqref{kssc_objective_exact} for a single column of $\mathbf Z$ and $\mathbf E$
\begin{align}
\min_{\mathbf z_i, \mathbf e_i} \lambda \| \mathbf z_i \|_1 + \frac{1}{2}\| \mathbf e_i \|_2^2 \quad \text{s.t.}\ \mathbf x_i = \mathbf X_i \mathbf z_i + \mathbf e_i
\end{align}
where $\mathbf z_i = \mathbf z_{{\Omega_i}i}$ i.e.\ the vector of rows $\Omega_i$ and column $i$ of $\mathbf Z$, $\mathbf  e_i$ is column $i$  of $\mathbf E$ and $\mathbf X_i = \mathbf X_{(:,\Omega_i)}$ i.e.\ the matrix formed from the columns of $\mathbf X$ indexed by $\Omega_i$. Note that we have removed the constraint $\textrm{diag}(\mathbf Z) = \mathbf 0$ since we enforce it by ensuring that no diagonal entries are present in each $\Omega_i$.

Next we relax the constraints and form the Lagrangian function 
\begin{align}
\min_{\mathbf z_i, \mathbf e_i, \mathbf y_i} \lambda \| \mathbf z_i \|_1 + \frac{1}{2}\| \mathbf e_i \|_F^2
+ \langle \mathbf y_i , \mathbf X_i \mathbf z_i  - \mathbf x_i + \mathbf e_i \mathbf  \rangle + \frac{1}{2} \| \mathbf X_i \mathbf z_i  - \mathbf x_i + \mathbf e_i \mathbf \|_2^2
\end{align}
where $\mathbf y_i$ is a Lagrange multiplier. Then LADMPSAP consists of an iterative procedure over the optimisation variables We outline the procedure in Algorithm \ref{Algorithm:kSSCExact} Here we detail the solutions to the sub problems. Denote $\mathbf z_i^{\mathtt t}, \mathbf e_i^{\mathtt t}, \mathbf y_i^{\mathtt t}$ the variables at iteration $t$. Then:
\begin{enumerate}
\item Fix others and update $\mathbf z_i^{\mathtt t+1}$
\[
\mathbf z_i^{\mathtt t+1} = \argmin_{\mathbf{z_i}} \lambda\|\mathbf z_i\|_{1} + \frac{\rho}{2} \| \mathbf z_i - (\mathbf z_i^{\mathtt t} - \frac{1}{\rho} \partial F(\mathbf z_i^{\mathtt t})) \|_2^2
\]
where $\partial F(\mathbf z_i^{\mathtt t}) =  \mu^{\mathtt t} \mathbf X_i^T (\mathbf X_i \mathbf z_i^{\mathtt t} - (\mathbf x_i - \mathbf e_i^{\mathtt t} - \frac{1}{\mu} \mathbf y_i^{\mathtt t} ))$. For which we have a closed form solution given by
\begin{align}
\mathbf z_i^{\mathtt t+1} = \textrm{sign}(\mathbf b^{\mathtt t} ) \max(|\mathbf b^{\mathtt t} | - \frac{\lambda}{\rho}, 0),
\label{update_z}
\end{align}
where $\mathbf b_i^{\mathtt t} = \mathbf z_i^{\mathtt t} - \frac{1}{\rho} \partial F(\mathbf z_i^{\mathtt t})$, see \citep{bach2011convex, liu2010efficient} for details.

\item Fix others and update $\mathbf e_i^{\mathtt t+1}$
\[
\mathbf e^{\mathtt t+1} = \argmin_{\mathbf{e_i}} \frac{1}{2} \|\mathbf e_i \|_2^2  + \frac{\mu^t}{2} \|\mathbf e_i - (\mathbf x_i - \mathbf X_i \mathbf z_i^{\mathtt t} - \frac{1}{\mu^t} \mathbf y_i^{\mathtt t}) \|_2^2
\]
which has the closed form solution given by
\begin{align}
\mathbf e_i^{\mathtt t+1} = \frac{\mathbf x_i - \mathbf X_i \mathbf z_i^{\mathtt t} - \frac{1}{\mu^{\mathtt t}} \mathbf y_i^{\mathtt t}}{\frac{1}{\mu^{\mathtt t}} + 1},
\label{update_e}
\end{align}

\item Update $\mathbf y_i^{\mathtt t+1}$
\begin{align}
\mathbf y_i^{\mathtt t+1} = \mathbf y_i^{\mathtt t} + \mu^{\mathtt t} (\mathbf X_i \mathbf z_i^{\mathtt t+1}  - \mathbf x_i + \mathbf e_i^{\mathtt t+1})
\label{update_y}
\end{align}

\end{enumerate}

\begin{algorithm}[!t]
\caption{Solving \eqref{kssc_objective_exact} via LADMPSAP}
\begin{algorithmic}[1]
\label{Algorithm:kSSCExact}

\REQUIRE $\mathbf x_i$, $\mathbf X_i$, $\Omega_i$, $\lambda$, $\epsilon_1$, $\epsilon_2$,

\STATE Initialise: $\mathbf z_i = \mathbf 0$, $\mathbf e_i = \mathbf 0$, $\mathbf y_i = \mathbf 0$, $\mu = 0.1$, $\mu_{\text{max}} = 1$, $\gamma^0 = 1.1$, $\rho = \| \mathbf X_i \|_F$

\WHILE{not converged}

\STATE Update $\mathbf z_i^{\mathtt t+1}$ using \eqref{update_z}
\STATE Update $\mathbf e_i^{\mathtt t+1}$ using \eqref{update_e}
 
 \STATE Set $q$
\begin{align*}
q = \frac{\mu^{\mathtt t} \sqrt{\rho}}{\| \mathbf X_i \|_F} \textrm{max} ( \| \mathbf z_i^{\mathtt t+1} - \mathbf z_i^{\mathtt t} \|_2, \| \mathbf e_i^{\mathtt t+1} - \mathbf e_i^{\mathtt t}\|_2  )
\end{align*}

\STATE Check stopping criteria
\begin{align*}
\frac{\| \mathbf X_i \mathbf z_i^{\mathtt t+1}  - \mathbf x_i + \mathbf e_i^{\mathtt t+1} \|_2 }{\| \mathbf X_i \|_F} < \epsilon_1, q < \epsilon_2
\end{align*}

\STATE Update $\mathbf y_i^{\mathtt t+1}$ using \eqref{update_y}

\STATE Update $\gamma^{\mathtt t+1}$
\begin{align*}
\gamma^{\mathtt t+1} = 
\begin{cases}
\gamma^0 & \text{if} \;\; q < \epsilon_2 \\
1 & \text{otherwise,}
\end{cases}
\end{align*}

\STATE Update $\mu^{\mathtt t+1}$
\begin{align*}
\mu^{\mathtt t+1} = \textrm{min}( \mu_{\text{max}}, \gamma \mu^{\mathtt t})
\end{align*}
 
\ENDWHILE

\RETURN $\mathbf z_i$

\end{algorithmic}
\end{algorithm}

\section{Solving Relaxed SSC via Accelerated Gradient Descent}
\label{Appendix:SSC_Relaxed}

Here we discuss how to solve \eqref{SSCRelaxedObjective} via FISTA. The full algorithm is outlined in Algorithm \ref{Algorithm:SSC_relaxed}. Denote
\begin{align}
\min_{\mathbf Z} L = \lambda\|\mathbf Z\|_{1} + \frac{1}{2}\|\mathbf X - \mathbf X\mathbf Z\|^2_F
\end{align}
and
\[
\min_{\mathbf Z} \widetilde{L}_{\rho}(\mathbf{Z, Z^{\mathtt t}}) =  \lambda\| \mathbf Z \|_{1} + \frac{\rho}{2} \| \mathbf Z - ( \mathbf Z^{\mathtt t} - \frac{1}{\rho} \partial F(\mathbf Z^{\mathtt t})) \|_F^2, 
\]
$F = \frac12\|\mathbf X - \mathbf X\mathbf Z\|^2_F$ and $\partial F = - \mathbf X^T (\mathbf X - \mathbf X \mathbf Z^{\mathtt t}$).
The solution is given by the closed-form $\ell_1$ shrinkage function $\mathcal S_{\tau}$ as follows
\begin{align}
\mathcal S_{\frac{\lambda}{\rho}}(\mathbf Z^{\mathtt t}) = \textrm{sign}(\mathbf Z^{\mathtt t} - \frac{1}{\rho} \partial F(\mathbf Z^{\mathtt t})) \max(| \mathbf Z^t - \frac{1}{\rho} \partial F(\mathbf Z^{\mathtt t}) | - \frac{\lambda}{\rho}, 0).
\end{align}

\begin{algorithm}
\caption{Solving \eqref{SSCRelaxedObjective} via FISTA}
\begin{algorithmic}
\label{Algorithm:SSC_relaxed}

\REQUIRE $r = \infty$, $\mathbf Z = \mathbf 0$,  $\mathbf J = \mathbf 0$, $\alpha = 1$, $\lambda$, $\rho$, $\gamma$, $\epsilon$

\WHILE{$r^{\mathtt t} - r^{\mathtt t-1} \geq \epsilon$ }

	\WHILE{$L(\mathcal S_{\frac{\lambda}{\rho}}(\mathbf J^{\mathtt t})) \geq \widetilde{L}_{\rho}(\mathcal S_{\frac{\lambda}{\rho}}(\mathbf J^{\mathtt t}), \mathbf J^{\mathtt t})$}
	
		\STATE $\rho = \gamma \rho$	
	
	\ENDWHILE

	\STATE $\mathbf Z^{\mathtt t+1} = \mathcal S_{\frac{\lambda}{\rho}}(\mathbf J^{\mathtt t}))$
	\STATE $\alpha^{\mathtt t+1} = \frac{1 + \sqrt{1 + 4 \alpha^t{^2})}}{2}$
	\STATE $\mathbf J^{\mathtt t+1} = \mathbf Z^{\mathtt t+1} + \left ( \frac{\alpha^{\mathtt t} - 1}{\alpha^{\mathtt t+1}} \right ) (\mathbf Z^{\mathtt t+1} - \mathbf Z^{\mathtt t})$
	\STATE $r^{\mathtt t+1} = \frac12\|\mathbf X - \mathbf X\mathbf Z^{\mathtt t+1}\|^2_F + \lambda\|\mathbf Z^{\mathtt t+1} \|_{1}$
	
\ENDWHILE

\end{algorithmic}
\end{algorithm}

\section{Solving Exact SSC via LADMPSAP}
\label{Appendix:SSC_Exact}

Here we discuss how to solve \eqref{SSCExactObjective} via LADMPSAP. The full algorithm is outlined in Algorithm \ref{Algorithm:SSCExact}. Denote the Augmented Lagrangian form
\[
\min_{\mathbf{E, Z}} \frac12\|\mathbf E \|^2_F + \lambda\|\mathbf Z\|_{1} + \langle \mathbf{Y, XZ - X + E } \rangle + \frac{\mu}{2} \|\mathbf{XZ - X + E } \|_F^2 
\]
Iterate the following:
\begin{enumerate}
\item Fix others and update $\mathbf Z^{\mathtt t+1}$
\[
\min_{\mathbf{Z}} \lambda\|\mathbf Z\|_{1} + \langle \mathbf{Y, XZ} \rangle + \frac{\mu}{2} \|\mathbf{XZ - (X - E) } \|_F^2 
\]
\[
\min_{\mathbf{Z}} \lambda\|\mathbf Z\|_{1} + \frac{\rho}{2} \| \mathbf Z - (\mathbf Z^{\mathtt t} - \frac{1}{\rho} \partial F(\mathbf Z^{\mathtt t}) \|_F^2 
\]
where $F = \frac{\mu}{2} \|\mathbf{XZ - (X - E} - \frac{1}{\mu} \mathbf Y ) \|_F^2$ and $\partial F =  \mu^{\mathtt t} \mathbf X^T (\mathbf{XZ^{\mathtt t} - (X - E} - \frac{1}{\mu^{\mathtt t}} \mathbf Y \mathbf Z^{\mathtt t} ))$. Then 
\begin{align}
\mathbf Z^{\mathtt t+1} = \textrm{sign}(\mathbf B^{\mathtt t} ) \max(|\mathbf B^{\mathtt t} | - \frac{\lambda}{\rho}, 0),
\label{ExactSSC:update_z}
\end{align}
where $\mathbf B_i^{\mathtt t} = \mathbf Z^{\mathtt t} - \frac{1}{\rho} \partial F(\mathbf Z^{\mathtt t})$.

\item Fix others and update $\mathbf E^{\mathtt t+1}$
\[
\min_{\mathbf{E}} \frac12\|\mathbf E \|^2_F + \langle \mathbf{Y, E} \rangle + \frac{\mu^{\mathtt t}}{2} \|\mathbf{XZ - X + E } \|_F^2 
\]
\[
\min_{\mathbf{E}} \frac12\|\mathbf E \|^2_F  + \frac{\mu^{\mathtt t}}{2} \|\mathbf E - (\mathbf X - \mathbf X \mathbf Z^{\mathtt t} - \frac{1}{\mu^{\mathtt t}} \mathbf Y^{\mathtt t}) \|_F^2 
\]
\begin{align}
\mathbf e_i^{\mathtt t+1} = \frac{\mathbf X - \mathbf X \mathbf Z^{\mathtt t} - \frac{1}{\mu^{\mathtt t}} \mathbf Y^{\mathtt t}}{\frac{1}{\mu^{\mathtt t}} + 1},
\label{ExactSSC:update_e}
\end{align}

\item Update $\mathbf Y^{\mathtt t+1}$
\begin{align}
\mathbf Y = \mathbf Y + \mu^{\mathtt t} (\mathbf{XZ}^{\mathtt t+1} - \mathbf{X} + \mathbf{E}^{\mathtt t+1})
\label{ExactSSC:update_y}
\end{align}

\end{enumerate}

\begin{algorithm}[!t]
\caption{Solving \eqref{SSCExactObjective} via LADMPSAP}
\begin{algorithmic}[1]

\label{Algorithm:SSCExact}

\REQUIRE $\mathbf X$, $\lambda$, $\epsilon_1$, $\epsilon_2$,

\STATE Initialise: $\mathbf Z = \mathbf 0$, $\mathbf E = \mathbf 0$, $\mathbf Y = \mathbf 0$, $\mu = 0.1$, $\mu_{\text{max}} = 1$, $\gamma^0 = 1.1$, $\rho = \| \mathbf X_i \|_F$

\WHILE{not converged}

\STATE Update $\mathbf Z_i^{\mathtt t+1}$ using \eqref{ExactSSC:update_z}
\STATE Update $\mathbf E_i^{\mathtt t+1}$ using \eqref{ExactSSC:update_e}
 
 \STATE Set $q$
\begin{align*}
q = \frac{\mu^{\mathtt t} \sqrt{\rho}}{\| \mathbf X \|_F} \textrm{max} ( \| \mathbf Z^{\mathtt t+1} - \mathbf Z^{\mathtt t} \|_F, \| \mathbf E^{\mathtt t+1} - \mathbf E^{\mathtt t}\|_F  )
\end{align*}

\STATE Check stopping criteria
\begin{align*}
\frac{\| \mathbf X \mathbf Z^{\mathtt t+1} - \mathbf X + \mathbf E^{\mathtt t+1} \|_2 }{\| \mathbf X \|_F} < \epsilon_1, q < \epsilon_2
\end{align*}

\STATE Update $\mathbf Y_i^{\mathtt t+1}$ using \eqref{ExactSSC:update_y}

\STATE Update $\gamma^{\mathtt t+1}$
\begin{align*}
\gamma^{\mathtt t+1} = 
\begin{cases}
\gamma^0 & \text{if} \;\; q < \epsilon_2 \\
1 & \text{otherwise,}
\end{cases}
\end{align*}

\STATE Update $\mu^{\mathtt t+1}$
\begin{align*}
\mu^{\mathtt t+1} = \textrm{min}( \mu_{\text{max}}, \gamma \mu^{\mathtt t})
\end{align*}
 
\ENDWHILE

\RETURN $\mathbf Z$

\end{algorithmic}
\end{algorithm}

\end{document}